\documentclass[10pt,twocolumn]{article}

\usepackage[letterpaper,textwidth=6.75in,textheight=9.25in,centering]{geometry}
\usepackage{amsmath,amssymb,amsfonts,amsthm,mathtools}
\usepackage{aliascnt}
\usepackage{graphicx}
\usepackage{fontawesome5}
\usepackage{tikz}
\usetikzlibrary{angles,arrows.meta,calc,decorations.pathreplacing,positioning,quotes}
\usepackage{microtype}
\usepackage{xcolor}
\usepackage{enumitem}
\usepackage{placeins}
\usepackage{cuted}
\usepackage[font=small,labelfont=bf]{caption}
\usepackage{subcaption}
\usepackage[round]{natbib}

\usepackage[colorlinks=true,citecolor=blue,linkcolor=blue,urlcolor=blue,
  hyperfootnotes=false]{hyperref}
\usepackage[nameinlink,noabbrev]{cleveref}

\newtheorem{theorem}{Theorem}[section]
\newaliascnt{lemma}{theorem}
\newtheorem{lemma}[lemma]{Lemma}
\aliascntresetthe{lemma}
\newaliascnt{proposition}{theorem}
\newtheorem{proposition}[proposition]{Proposition}
\aliascntresetthe{proposition}
\newaliascnt{corollary}{theorem}
\newtheorem{corollary}[corollary]{Corollary}
\aliascntresetthe{corollary}
\theoremstyle{definition}
\newaliascnt{definition}{theorem}

\aliascntresetthe{definition}
\newaliascnt{remark}{theorem}

\aliascntresetthe{remark}
\newaliascnt{example}{theorem}
\newtheorem{example}[example]{Example}
\aliascntresetthe{example}

\crefname{lemma}{Lemma}{Lemmas}
\crefname{proposition}{Proposition}{Propositions}
\crefname{corollary}{Corollary}{Corollaries}
\crefname{theorem}{Theorem}{Theorems}
\crefname{definition}{Definition}{Definitions}
\crefname{remark}{Remark}{Remarks}
\crefname{example}{Example}{Examples}

\AddToHook{env/theorem/begin}{\crefalias{section}{theorem}}
\AddToHook{env/lemma/begin}{\crefalias{section}{lemma}}
\AddToHook{env/proposition/begin}{\crefalias{section}{proposition}}
\AddToHook{env/corollary/begin}{\crefalias{section}{corollary}}
\AddToHook{env/definition/begin}{\crefalias{section}{definition}}
\AddToHook{env/remark/begin}{\crefalias{section}{remark}}
\AddToHook{env/example/begin}{\crefalias{section}{example}}

\newcommand{\R}{\mathbb{R}}
\newcommand{\E}{\mathbb{E}}
\newcommand{\Prob}{\mathbb{P}}
\newcommand{\Leb}{\operatorname{Leb}}
\newcommand{\Lip}{\operatorname{Lip}}
\newcommand{\esssup}{\operatorname*{ess\,sup}}
\newcommand{\ind}{\mathbf{1}}
\newcommand{\calI}{\mathcal{I}}
\newcommand{\calD}{\mathcal{D}}

\hypersetup{
  pdftitle={Sharp Root Anti-Concentration via Projective Incidence and Ordered Root Laws},
  pdfauthor={Zijun Wang; Yuchen Miao; Yifan Hu; Huanmin Liu},
  pdfsubject={Root anti-concentration for online optimization of piecewise-Lipschitz functions}
}

\title{Sharp Root Anti-Concentration via Projective Incidence and Ordered Root Laws}

\author{%
\begin{tabular}{c@{\hspace{3.5em}}c}
\begin{tabular}{c}
Zijun Wang\\
\small Northeastern University, China\\
\small\texttt{wangzj@mails.neu.edu.cn}
\end{tabular}
&
\begin{tabular}{c}
Yuchen Miao\\
\small Northeastern University, China\\
\small\texttt{miaoyc@mails.neu.edu.cn}
\end{tabular}
\\[0.9em]
\begin{tabular}{c}
Yifan Hu\\
\small Beijing Jiaotong University, China\\
\small\texttt{23251317@bjtu.edu.cn}
\end{tabular}
&
\begin{tabular}{c}
Huanmin Liu\\
\small Xiangtan University, China\\
\small\texttt{202305133221@smail.xtu.edu.cn}
\end{tabular}
\\[0.9em]
\multicolumn{2}{c}{\small
\faGithub\enspace
\href{https://github.com/sgzizi/sharp-root-anticoncentration}%
{\texttt{github.com/sgzizi/sharp-root-anticoncentration}}}
\end{tabular}}
\date{}

\begin{document}

\maketitle
\vspace{-1.5em}

\raggedbottom
\makeatletter
\AddToHookNext{shipout/after}{\global\let\@textbottom\relax}
\makeatother

\begin{abstract}
This paper answers the one-dimensional local root anti-concentration questions
posed by Balcan, Pegden, and Sharma in the context of online optimization of
piecewise-Lipschitz functions.  For a homogeneous feature curve and
coefficients whose density relative to the uniform law on a symmetric convex
body $K$ is bounded by $A$, we show that the worst-case interval-hitting
constant equals $A$ times a section-averaged projective incidence speed.  For
cube-supported coefficients, this speed is equivalent, up to universal
constants, to the projective Lipschitz constant.  This yields a sharp,
dimension-free characterization and removes the previous $\sqrt N$ loss.
For monic degree-$d$ polynomials under arbitrary coefficient laws, we prove
that the interval-hitting constant is finite if and only if the ordered
real-root laws have bounded densities, with a factor-$d$ comparison that is
sharp.  Conditional and joint coefficient-space area formulas give
coefficient-based tests for dependent and singular laws, and a two-chart bound
provides a global sufficient condition.
We apply these results to two graph-learning models.  A cost-sensitive
Gaussian-RBF harmonic classifier
uses the projective incidence theorem and achieves expected regret
$\widetilde O((An^2D e^{BD}/\ell+1)\sqrt T)$.  A common-offset
polynomial-kernel model uses rigid translation of the ordered roots and
achieves $\widetilde O((qn^2\kappa+1)\sqrt T)$ regret, even when the induced
coefficient law is singular in the ambient coefficient space.
\end{abstract}

\begingroup
\makeatletter
\renewcommand\section{\@startsection{section}{1}{\z@}%
  {-3.5ex \@plus -1ex \@minus -.2ex}%
  {0.45ex}%
  {\normalfont\Large\bfseries}}
\renewcommand\subsection{\@startsection{subsection}{2}{\z@}%
  {-0.55ex}%
  {0.35ex}%
  {\normalfont\large\bfseries}}
\makeatother
\section{Introduction}
\label{sec:introduction}

\subsection{Problem setting and open questions}
\label{sec:intro-problem}

Many data-driven algorithms depend on a real parameter and change their
behavior only when a transition equation vanishes.  Across random problem
instances, online optimization of the parameter is possible when these
transition points do not accumulate in short intervals.  The dispersion
framework makes this principle quantitative: one first represents
discontinuities by equations, controls the probability that each equation has a
root in a short interval, and then combines this local estimate with a
complexity argument to obtain a uniform dispersion and regret bound
\cite{balcan2018dispersion,balcan2020semibandit}.  Combining the local
probability estimate with boundary complexity yields the dispersion and regret
bounds in \cref{sec:applications}.
\endgroup

For the monic random polynomial
\begin{equation}
  P_\alpha(x)=x^d+\sum_{j=0}^{d-1}\alpha_jx^j,
  \qquad \alpha\in[-R,R]^d,
  \label{eq:monic-polynomial}
\end{equation}
Balcan, Dick, and Pegden proved a finite interval-hitting bound under bounded
joint density; their more general theorem treats affine images of
bounded-density latent variables under a nondegeneracy condition
\cite[Supplement, Theorem~18]{balcan2020semibandit}.  Bounded marginal densities
alone are insufficient because dependence can force a deterministic root.
Balcan, Pegden, and Sharma therefore asked for natural necessary and sufficient
conditions for finiteness, and for conditions giving polynomial dependence on
$d$ and $R$ \cite{balcan2026open}.

Their second question replaces the monomial vector by a fixed Pfaffian feature
curve $F=(F_1,\ldots,F_N)$ and considers
\begin{equation}
  \phi_\alpha(x)=\langle\alpha,F(x)\rangle,
  \qquad \alpha\in[-R,R]^N.
  \label{eq:homogeneous-family}
\end{equation}
Formal Pfaffian complexity controls zero and cell counts
\cite{balcan2025pfaffian,khovanskii1991fewnomials}, but not the probability that a
zero lands in a particular short interval.  Indeed, the degree-one family
$F(x)=(1,x/\delta)$ has fixed formal complexity and root-hitting constant of
order $1/\delta$ \cite{balcan2026open}.  The open-problem paper conjectured that
the missing analytic condition is bounded motion of the normalized curve
\begin{equation}
  G(x)=\frac{F(x)}{\lVert F(x)\rVert_2}.
  \label{eq:normalized-curve-intro}
\end{equation}

Coefficient dependence can concentrate a root, whereas fixed Pfaffian format
can conceal arbitrarily poor geometric conditioning.  These are the two open
questions posed by Balcan, Pegden, and Sharma at COLT 2026
\cite[Open Questions~1--2]{balcan2026open}:

\begin{center}
\begin{minipage}{0.92\columnwidth}
\centering\itshape
Which natural conditions on dependent coefficient laws are necessary and
sufficient for finite, or polynomially bounded, interval root hitting for
random polynomials, and what normalization yields the corresponding guarantee
for Pfaffian linear combinations?
\end{minipage}
\end{center}

\subsection{Answers to Open Questions 1--2}
\label{sec:intro-answers}

For Open Question~1, the interval-hitting constant is finite exactly when the
ordered real-root pushforwards have uniformly bounded densities.  More
precisely, let $M_\calD(\Theta)$ denote the supremum of their density bounds
over laws in $\calD$.  \Cref{thm:root-coordinates,cor:root-coordinate-class}
prove
\[
 M_\calD(\Theta)\le C_\calD(\Theta)\le dM_\calD(\Theta),
\]
and \cref{prop:factor-d-sharp} shows that the factor $d$ is sharp.  It follows
that, for classes $\calD_{d,R}$, the interval-hitting constant is polynomially
bounded in $(d,R)$ exactly when $M_{\calD_{d,R}}$ is.  The coefficient-space
formulas in
\cref{prop:conditional-rice,prop:joint-rice} express the criterion in
coefficient coordinates.  For conditional-density models,
\cref{cor:conditional-class} is necessary and sufficient, while
\cref{thm:two-chart} gives a global sufficient bound.  In particular,
\cref{cor:joint-density-recovery} recovers the
polynomial bound stated as Theorem~2 of \citet{balcan2026open}.

For Open Question~2, \cref{thm:body-characterization} identifies the exact
support-adapted quantity.  If the coefficient law has density ratio at most
$A$ with respect to the uniform law on a symmetric convex body $K$, then
\[
 C_{K,A}(F;\Theta)=A\Lambda_K(F;\Theta),
\]
where $\Lambda_K$ averages projective velocity over central sections of $K$.
For cube-supported coefficients, \cref{cor:cube-comparison} proves the
dimension-free comparison
\[
 c_{\mathrm{cube}} A L_{\mathrm P}
 \le C_{R,\kappa}=A\Lambda_{Q_R}
 \le C_{\mathrm{cube}} A L_{\mathrm P}.
\]
Thus common zeros and failure of projective Lipschitz regularity are exactly
the obstructions, and the previous $\sqrt N$ loss disappears.  The affine
formulation in \cref{thm:affine} covers fixed-leading-coefficient families,
while \cref{ex:exponential-pfaffian} gives an explicit nonpolynomial Pfaffian
instance.

\subsection{Graph-learning applications}
\label{sec:intro-applications}

\Cref{sec:applications} proves dispersion and regret bounds for two
graph-learning models.  The first
tunes the inverse bandwidth of a Gaussian-RBF graph for cost-sensitive
harmonic classification.  Its transition feature is the nonpolynomial curve
$(s_v(\gamma),1-s_v(\gamma))$; a grounded-Laplacian derivative estimate and
the exact coefficient-body formula give local anti-concentration, while the
Chebyshev zero bound for exponential polynomials
(\cref{lem:exp-poly-zeros}) gives
$\log K=O(n\log n)$ boundary complexity without discretizing the graph
dissimilarities.  The resulting expected regret is
$\widetilde O((An^2D e^{BD}/\ell+1)\sqrt T)$.  The second treats the
polynomial-kernel problem of \citet{balcan2021ssl} under a task-level
calibration offset shared by all similarities.  Here the induced coefficient
law is supported on a one-dimensional image in the ambient coefficient space,
but the ordered roots translate rigidly with the offset, yielding
$\widetilde O((qn^2\kappa+1)\sqrt T)$ expected regret.

\subsection{Related work}
\label{sec:related-work}

\paragraph{Random polynomials and area formulas.}
Classical polynomial small-ball inequalities control the value of a polynomial
of random inputs under assumptions such as log-concavity or independence
\cite{carbery2001distributional,meka2016anticoncentration}.  The event studied
here is
different: a random coefficient vector must produce \emph{some} parameter root
inside a prescribed interval.  Expected real-root counts and their
concentration are studied under many structured coefficient models
\cite{aguirre2025concentration,do2024stronglaw,edelman1995zeros}, including
dependent Gaussian coefficients \cite{matayoshi2012dependent}.  Root-intensity
identities are instances of the Rice and area formulas developed systematically
in \cite{azais2009level}; the incidence proof also uses the geometric area and
coarea formulas \cite{federer1969geometric}.  Normalized moment curves have a
classical geometric role for Gaussian zero densities
\cite{edelman1995zeros}.  Related boundary-crossing estimates appear in
noninteractive one-bit mean estimation, where shifted random grids support
multiscale refinement under finite-moment assumptions
\cite{miao2026universalrefinementinteractionorderoptimal}.

\paragraph{Projective geometry.}
Projective geometry and condition numbers are treated broadly in
\cite{burgisser2013condition}.  The exact coefficient-body formula comes from
the swept volume of central hyperplane sections.  For a cube, two-dimensional
isotropic log-concave marginal bounds
\cite[Lemma~2(b,c)]{balcan2013linear} compare this swept volume with projective
angle uniformly in dimension; Ball's slicing theorem \cite{ball1986cube}
gives the affine slab bound.

\paragraph{Graph learning.}
For graph-based semi-supervised learning, \citet{balcan2021ssl} establish
online guarantees for polynomial kernels when all input similarities have a
bounded joint density and identify Gaussian-RBF graph families as a standard
configuration problem.  Cost-sensitive label propagation has been studied in
response to unequal error costs \cite{wan2019cost}.  The applications in
\cref{sec:applications} use projective incidence for the RBF family and
ordered-root translation for a polynomial-kernel law that is singular in the
ambient coefficient space.

\section{Setup and projective incidence}
\label{sec:setup}

Throughout, $\Theta\subseteq\R$ is a nondegenerate interval and
$\calI(\Theta)$ is the
collection of bounded, nondegenerate \emph{closed} intervals contained in
$\Theta$.  This convention ensures that roots at boundary points of $\Theta$
are detected.  Fix $d\ge1$.  For the monic polynomial
\cref{eq:monic-polynomial}, let
\[
 Z_\alpha(B)=\#\{x\in B:P_\alpha(x)=0\}
\]
count \emph{distinct} real roots in a Borel set $B$.  For a coefficient law
$\mu$, define
\begin{equation}
 \begin{aligned}
 c_\mu(\Theta)
 &=\sup_{I\in\calI(\Theta)}
 \frac{\Prob_{\alpha\sim\mu}(Z_\alpha(I)\ge1)}{|I|},\\
 C_\calD(\Theta)&=\sup_{\mu\in\calD}c_\mu(\Theta).
 \end{aligned}
 \label{eq:polynomial-hitting-constant}
\end{equation}
The values may be infinite.

For $N\ge1$, $R>0$, and $\kappa>0$, let
$\calD^N_{R,\kappa}$ consist of probability laws supported on
$Q_R=[-R,R]^N$ with Lebesgue density bounded by $\kappa$.  Put
\begin{equation}
 A=(2R)^N\kappa.
 \label{eq:normalized-density}
\end{equation}
The class is nonempty only if $A\ge1$.  For a fixed continuous
$F:\Theta\to\R^N$, define $c_\mu(F;\Theta)$ by replacing the root event in
\cref{eq:polynomial-hitting-constant} with
$\{\exists x\in I:\langle\alpha,F(x)\rangle=0\}$, and let
$C_{R,\kappa}(F;\Theta)=\sup_{\mu\in\calD^N_{R,\kappa}}c_\mu(F;\Theta)$.

The coefficient-body formulation will also be useful.  Let
$K\subset\R^N$ be a full-dimensional compact origin-symmetric convex body
and let $A\ge1$.
Write $\calD_{K,A}$ for the probability laws supported on $K$ whose
Lebesgue densities are at most
$A/\operatorname{vol}_N(K)$, and set
\begin{equation}
 C_{K,A}(F;\Theta)
 =\sup_{\mu\in\calD_{K,A}}c_\mu(F;\Theta).
 \label{eq:body-hitting-constant}
\end{equation}
Thus $\calD^N_{R,\kappa}=\calD_{Q_R,A}$ for the value of $A$ in
\cref{eq:normalized-density}.

The homogeneous equation sees only the line spanned by $F(x)$.  For nonzero
$u,v\in\R^N$, define the real-projective distance
\begin{equation}
 d_{\mathrm{P}}([u],[v])
 =\arccos\!\left(
   \frac{|\langle u,v\rangle|}{\lVert u\rVert_2\lVert v\rVert_2}
 \right)\in[0,\pi/2].
 \label{eq:projective-distance}
\end{equation}
For a nonvanishing $F$, put
\begin{equation}
 L_{\mathrm{P}}(F;\Theta)
 =\sup_{x\ne y\in\Theta}
 \frac{d_{\mathrm{P}}([F(x)],[F(y)])}{|x-y|}\in[0,\infty].
 \label{eq:projective-lipschitz}
\end{equation}
Unlike the Euclidean Lipschitz constant of a chosen normalization, this
quantity is invariant under every nonzero pointwise rescaling of $F$.
For unit vectors, let the oriented angle be
$\angle(u,v)=\arccos\langle u,v\rangle\in[0,\pi]$.

\subsection{The coefficient-body formula}
\label{sec:projective}

The scaling example from \cite{balcan2026open} identifies the obstruction.
For $F_\delta(x)=(1,x/\delta)$ on $[-1,1]$, formal polynomial and Pfaffian
degrees are fixed, whereas
\begin{equation}
 \left\lVert\frac{d}{dx}
 \frac{F_\delta(x)}{\lVert F_\delta(x)\rVert_2}\right\rVert_2
 =\frac{\delta^{-1}}{1+(x/\delta)^2}
 \label{eq:scaled-speed}
\end{equation}
has supremum $1/\delta$.  The root-hitting lower bound in
\cite{balcan2026open} has the same $1/\delta$ scale.  The normalized curve
therefore detects precisely what formal degree misses.

For a $C^1$ nonvanishing curve $F$, define its
\emph{$K$-incidence speed} at $x\in\Theta^\circ$ by
\begin{equation}
 \begin{aligned}
 \lambda_K(F;x)
 &=\frac{1}{\operatorname{vol}_N(K)\lVert F(x)\rVert_2}\\
 &\quad\times\int_{K\cap F(x)^\perp}
  |\langle a,F'(x)\rangle|\,
  d\mathcal H^{N-1}(a),
 \end{aligned}
 \label{eq:body-incidence-speed}
\end{equation}
and put
\begin{equation}
\Lambda_K(F;\Theta)
 =\operatorname*{ess\,sup}_{x\in\Theta^\circ}\lambda_K(F;x).
 \label{eq:max-body-incidence-speed}
\end{equation}
For $N=1$, the integral in \cref{eq:body-incidence-speed} uses counting
measure $\mathcal H^0$; since $K\cap F(x)^\perp=\{0\}$, the incidence speed
is zero.
This is a support-body-induced Finsler speed on projective space.  Indeed, for
a unit direction $u$ and a tangent vector $v\in u^\perp$, set
\begin{equation}
 \lVert v\rVert_{K,[u]}
 =\frac{1}{\operatorname{vol}_N(K)}
   \int_{K\cap u^\perp}|\langle a,v\rangle|\,
   d\mathcal H^{N-1}(a).
 \label{eq:body-projective-finsler}
\end{equation}
With $G=F/\lVert F\rVert_2$,
$\lambda_K(F;x)=\lVert G'(x)\rVert_{K,[G(x)]}$.  Thus the radial part of
$F'(x)$ is discarded automatically.  The speed is unchanged by a nonzero
$C^1$ pointwise rescaling of $F$ or by a homothetic rescaling of $K$.
Unlike a support-function bound, it records both the displacement of the
moving hyperplane and the size of the section through which it moves.

\begin{lemma}[Incidence area and first variation]
\label{lem:incidence-area}
Let $K$ be a full-dimensional origin-symmetric convex body and let
$F\in C^1(\Theta;\R^N)$ be nonvanishing.  For
\[
 E_F(I)=\{a\in K:\exists x\in I,\ \langle a,F(x)\rangle=0\},
\]
every $I\in\calI(\Theta)$ satisfies
\begin{equation}
 \frac{\operatorname{vol}_N(E_F(I))}
      {\operatorname{vol}_N(K)}
 \le\int_I\lambda_K(F;x)\,dx.
 \label{eq:incidence-area-bound}
\end{equation}
Moreover, for almost every $x\in\Theta^\circ$, let
\[
 D_{x,h}=\{a\in K:
   \langle a,F(x)\rangle\langle a,F(x+h)\rangle<0\}.
\]
Then
\begin{equation}
 \lim_{h\downarrow0}
 \frac{\operatorname{vol}_N(D_{x,h})}
      {h\,\operatorname{vol}_N(K)}
 =\lambda_K(F;x).
 \label{eq:incidence-first-variation}
\end{equation}
\end{lemma}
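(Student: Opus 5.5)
The plan is to realize $E_F(I)$ as the image of an incidence manifold under the coefficient projection, and to apply the area formula. Define
\[
 \mathcal M=\{(x,a)\in I\times K:\ \langle a,F(x)\rangle=0\}.
\]
We may replace $F$ by $G=F/\lVert F\rVert_2$, which is $C^1$ and unit, since $\lambda_K$ is invariant under pointwise rescaling and $E_F(I)$ depends only on $[F(x)]$. Parametrize $\mathcal M$ by the map
\[
 \Psi:(x,a)\mapsto a,\qquad a\in K\cap G(x)^\perp .
\]
The projection $\pi(x,a)=a$ maps $\mathcal M$ onto $E_F(I)$. By the area formula (\cite{federer1969geometric}),
\[
 \operatorname{vol}_N(E_F(I))\le\int_{E_F(I)}\#\pi^{-1}(a)\,da=\int_{\mathcal M}J\pi\,d\mathcal H^N .
\]
The Jacobian is then computed fiberwise by coarea over $x$. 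Locally, write $a=b+t\,G(x)$ with $b$ ranging over a fixed hyperplane. Equivalently, use the coarea formula for the function $g(x,a)=\langle a,G(x)\rangle$ on $I\times\R^N$, or directly the change of variables $\Phi(x,b)=R_x b$, where $R_x$ is a $C^1$ family of rotations carrying $G(x_0)^\perp$ to $G(x)^\perp$. The cleanest route is to note that for $a\in G(x)^\perp$ the tangent space to $\mathcal M$ at $(x,a)$ consists of the vectors $(\dot x,\dot a)$ with
\[
 \langle\dot a,G(x)\rangle+\dot x\,\langle a,G'(x)\rangle=0 .
\]
The $N$-Jacobian of $\pi$ relative to the product measure $dx\otimes d\mathcal H^{N-1}(a)$ on the fibers is therefore $|\langle a,G'(x)\rangle|$; this is the normal displacement speed of the hyperplane at $a$. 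Integrating gives
\[
 \operatorname{vol}_N(E_F(I))\le\int_I\int_{K\cap G(x)^\perp}|\langle a,G'(x)\rangle|\,d\mathcal H^{N-1}(a)\,dx .
\]
Dividing by $\operatorname{vol}_N(K)$ gives \eqref{eq:incidence-area-bound}. This uses the identity $\langle a,G'\rangle=\langle a,F'\rangle/\lVert F\rVert_2$ for $a\perp F(x)$, because the radial term of $G'$ vanishes on $F(x)^\perp$. To avoid subtleties at $\partial K$, the argument can be run on the open set $K^\circ\times I^\circ$: the boundary contributes measure zero, and the endpoints of $I$ contribute only the hyperplanes $F(\text{endpoint})^\perp$, which are null sets.

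For the first-variation identity \eqref{eq:incidence-first-variation}, note that $D_{x,h}$ is the double wedge of $K$ between the hyperplanes $G(x)^\perp$ and $G(x+h)^\perp$. Fix $x$ where $G$ is differentiable; since $G\in C^1$ this holds for every interior $x$, and the stated a.e.\ restriction is harmless. Write $u=G(x)$ and $w=G(x+h)=u+hG'(x)+o(h)$. For $a=b+tu$ with $b\in u^\perp$, we have
\[
 \langle a,u\rangle=t,\qquad \langle a,w\rangle=t(1+O(h))+h\langle b,G'(x)\rangle+o(h)\lvert b\rvert .
\]
So the sign change occurs exactly when $t$ lies between $0$ and approximately $-h\langle b,G'(x)\rangle$. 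By Fubini in the coordinates $(b,t)$, the volume equals
\[
 \int_{u^\perp}\ell_h(b)\,db,
\]
where $\ell_h(b)$ is the length of the set of $t$ with $b+tu\in K$ in this sign-change range. When $b$ lies in the interior of the section $K\cap u^\perp$, we have $\ell_h(b)/h\to|\langle b,G'(x)\rangle|$. When $b$ lies outside $K\cap u^\perp$, we have $\ell_h(b)=0$ for small $h$, because $K$ is closed and the relevant $t$ tends to $0$ uniformly on bounded sets. Domination by $\sup_{K}\lvert a\rvert\cdot\lVert G'\rVert+o(1)$ on a bounded set allows dominated convergence. The boundary of the section is $\mathcal H^{N-1}$-null because the section is convex, so the limit is $\lambda_K(F;x)\operatorname{vol}_N(K)$.

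\textbf{Main obstacle.} The delicate step is making the area-formula Jacobian computation in the first part rigorous. I would sidestep the manifold language by instead proving the integrated inequality from the first-variation estimate. Choose a partition $x_0<\dots<x_m$ of $I$. By continuity of $G$, each $E_F([x_{i-1},x_i])$ is contained, up to a small error, in the corresponding double wedge $D_{x_{i-1},x_i-x_{i-1}}$; alternatively, a point $a$ with $\langle a,G(y)\rangle=0$ for some $y$ in the interval gives a sign change or zero of the continuous function $y\mapsto\langle a,G(y)\rangle$. This containment is not exact, though, because roots of even multiplicity give no sign change. For that reason, the area formula applied to the Lipschitz map $(x,b)\mapsto R_x b$ remains the primary route for the inequality.
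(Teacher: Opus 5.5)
Your proposal is correct and follows essentially the paper's route. The inequality comes from the area formula for the projection of the incidence hypersurface, with fiberwise Jacobian $|\langle a,F'(x)\rangle|/\lVert F(x)\rVert_2$; the paper gets this factor by computing $J_N\pi$ and $J_1 s$ from the unit normal and dividing via coarea, and your rotation-frame map $(x,b)\mapsto R_x b$ would justify it equally well. The first variation uses the same decomposition $a=b+tu$ along $F(x)$, followed by dominated convergence with the $\mathcal H^{N-1}$-null relative boundary of $K\cap F(x)^\perp$, so the partition/sign-change alternative you discard at the end is unnecessary, and you are right that it fails for tangential zeros.
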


\begin{theorem}[Coefficient-body formula]
\label{thm:body-characterization}
Let $K\subset\R^N$ be a full-dimensional compact origin-symmetric convex
body, let $A\ge1$,
and let $F\in C^1(\Theta;\R^N)$.  If $F$ is nonvanishing, then
\begin{equation}
 C_{K,A}(F;\Theta)=A\Lambda_K(F;\Theta).
 \label{eq:body-characterization}
\end{equation}
If instead $F(x_*)=0$ for some $x_*\in\Theta$, then
$C_{K,A}(F;\Theta)=\infty$.
\end{theorem}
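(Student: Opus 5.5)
The argument has three parts: an upper bound via \cref{lem:incidence-area}, a matching lower bound from the first-variation formula with near-extremal laws concentrated on thin wedges, and a separate treatment of the case where $F$ has a zero.

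\textbf{Upper bound.} Let $\mu\in\calD_{K,A}$ and $I\in\calI(\Theta)$. The root event is exactly $\{\alpha\in E_F(I)\}$, and the density of $\mu$ is at most $A/\operatorname{vol}_N(K)$. Combining this with \eqref{eq:incidence-area-bound} gives
\[
\Prob_\mu(\alpha\in E_F(I))\le A\frac{\operatorname{vol}_N(E_F(I))}{\operatorname{vol}_N(K)}\le A\int_I\lambda_K(F;x)\,dx\le A\Lambda_K|I|.
\]
The essential supremum suffices in the last step because $\Theta\setminus\Theta^\circ$ is null. Hence $C_{K,A}\le A\Lambda_K$.

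\textbf{Lower bound.} Fix $\varepsilon>0$. Choose a point $x\in\Theta^\circ$ at which \eqref{eq:incidence-first-variation} holds and $\lambda_K(F;x)>\Lambda_K-\varepsilon$; such points form a set of positive measure, so one exists (if $\Lambda_K=\infty$, take $\lambda_K(F;x)$ arbitrarily large). For small $h>0$, set $I=[x,x+h]$.

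If $a\in D_{x,h}$, the continuous function $t\mapsto\langle a,F(t)\rangle$ changes sign on $I$, so $D_{x,h}\subseteq E_F(I)$.

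We want a law in $\calD_{K,A}$ that puts as much mass as possible on $D_{x,h}$, which has small volume. Take $\mu$ with density $A/\operatorname{vol}_N(K)$ on a set $S\subseteq K$ of volume $\operatorname{vol}_N(K)/A$, where $S$ contains $D_{x,h}$ whenever $\operatorname{vol}_N(D_{x,h})\le \operatorname{vol}_N(K)/A$, as it is for small $h$. This is legitimate because $A\ge1$: put the remaining mass $1-A\operatorname{vol}(D_{x,h})/\operatorname{vol}(K)$ uniformly at density $A/\operatorname{vol}(K)$ on any subset of $K\setminus D_{x,h}$ of the right volume, which exists by nonatomicity of Lebesgue measure. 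Then
\[
\frac{\Prob_\mu(Z\ge1)}{h}\ge \frac{A\operatorname{vol}_N(D_{x,h})}{h\operatorname{vol}_N(K)}\to A\lambda_K(F;x).
\]
Letting $h\downarrow0$ and then $\varepsilon\to0$ gives $C_{K,A}\ge A\Lambda_K$, including the case $\Lambda_K=\infty$.

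\textbf{Vanishing case.} Suppose $F(x_*)=0$. Every $\alpha$ then has a root at $x_*$, and the closed-interval convention keeps this valid even when $x_*$ is an endpoint of $\Theta$. Take any $\mu\in\calD_{K,A}$, for instance the uniform law on $K$. For intervals $I\ni x_*$ with $|I|\to0$, we get $\Prob(Z\ge1)/|I|=1/|I|\to\infty$.

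\textbf{Main obstacle.} The only delicate point is that the first-variation limit in \cref{lem:incidence-area} holds only almost everywhere, while $\Lambda_K$ is an essential supremum. This is exactly why the lower bound selects $x$ from the positive-measure set where both the limit holds and $\lambda_K(F;x)>\Lambda_K-\varepsilon$, rather than at an arbitrary point. The mass-allocation step needs only $A\ge1$ and small $h$, so it is routine.
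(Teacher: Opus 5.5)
Your proposal is correct and follows the paper's proof essentially step for step: the incidence-area upper bound, the first-variation lower bound via strict sign disagreement on $[x,x+h]$ with a law that saturates the density cap on $D_{x,h}$, and the deterministic-root argument when $F$ vanishes. The only difference is cosmetic and both choices are admissible because $A\ge1$: you put density $A/\operatorname{vol}_N(K)$ on a set of volume $\operatorname{vol}_N(K)/A$ containing $D_{x,h}$, whereas the paper puts relative density $A$ on $D_h$ and the constant $(1-Aq_h)/(1-q_h)$ on $K\setminus D_h$.
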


\noindent\textbf{Proof sketch.}\enspace
The root-producing coefficient vectors over $I$ form the projection of the
incidence manifold
\[
 \{(a,x)\in K\times I:\langle a,F(x)\rangle=0\}.
\]
The area formula gives \cref{eq:incidence-area-bound}; multiplying by the
density cap $A/\operatorname{vol}_N(K)$ proves the upper bound.  For the
reverse inequality, take a short interval $[x,x+h]$.  Endpoint sign
disagreement forces a root, and
\cref{eq:incidence-first-variation} gives its uniform-$K$ probability to
first order.  A density equal to $A/\operatorname{vol}_N(K)$ on this
disagreement set and constant on its complement is admissible for all
sufficiently small $h$.  It multiplies that probability by $A$.  Letting
$h\downarrow0$ and taking the essential supremum gives equality.  A common
zero is a deterministic root.

\begin{strip}
\noindent\begin{minipage}{\textwidth}
\centering
\captionsetup{type=figure,hypcap=false}
\includegraphics[width=\textwidth]{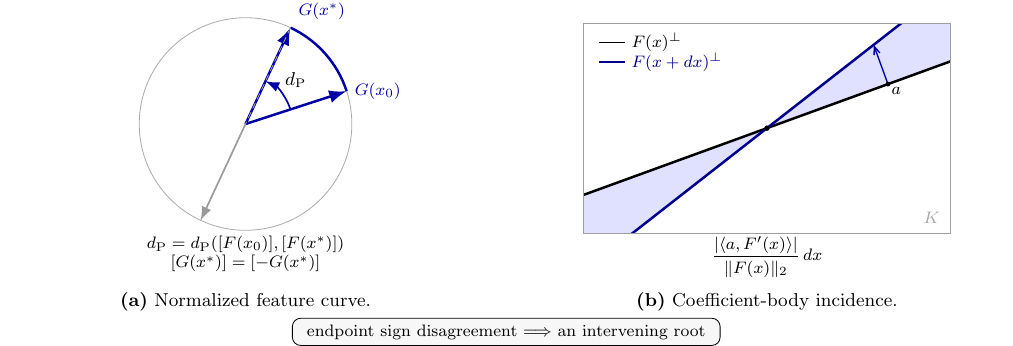}
\caption{Projective motion and coefficient-body incidence.  The homogeneous
equation depends only on $[F(x)]$.  As this class moves, the central
hyperplane $F(x)^\perp$ sweeps through $K$; integrating its local swept
thickness gives the incidence speed in
\cref{eq:body-incidence-speed}.  Conversely, endpoint sign disagreement
forces an intervening root.}
\label{fig:projective-slab}
\end{minipage}
\end{strip}

\begin{corollary}[Dimension-free cube comparison]
\leavevmode\label{cor:cube-comparison}%
Let $F\in C^1(\Theta;\R^N)$ be nonvanishing and suppose
$\calD^N_{R,\kappa}$ is nonempty.  There are universal constants
$0<c_{\mathrm{cube}}\le C_{\mathrm{cube}}<\infty$ such that
\begin{equation}
 \begin{aligned}
 c_{\mathrm{cube}}A L_{\mathrm P}(F;\Theta)
 &\le C_{R,\kappa}(F;\Theta)
  =A\Lambda_{Q_R}(F;\Theta)\\
 &\le C_{\mathrm{cube}}A L_{\mathrm P}(F;\Theta).
 \end{aligned}
 \label{eq:projective-characterization}
\end{equation}
Consequently,
\begin{equation}
 \begin{aligned}
 &C_{R,\kappa}(F;\Theta)<\infty\\
 &\quad\Longleftrightarrow\quad L_{\mathrm P}(F;\Theta)<\infty.
 \end{aligned}
 \label{eq:projective-iff}
\end{equation}
Together with \cref{thm:body-characterization}, a common zero instead makes
the constant infinite.  Among nonvanishing parameterized families, the
root-hitting constants are polynomially bounded exactly when
$A L_{\mathrm P}$ is polynomially bounded.
\end{corollary}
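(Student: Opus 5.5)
We would first apply \cref{thm:body-characterization} with $K=Q_R$: since $\calD^N_{R,\kappa}=\calD_{Q_R,A}$, this gives the middle identity $C_{R,\kappa}(F;\Theta)=A\Lambda_{Q_R}(F;\Theta)$ at once. The two outer inequalities then reduce to two separate facts, each uniform in $N$:
\[
 c\,\lVert v\rVert_2\le\lVert v\rVert_{Q_R,[u]}\le C\,\lVert v\rVert_2
 \quad\text{for unit } u,\ v\in u^\perp,
\]
and
\[
 \operatorname*{ess\,sup}_{x\in\Theta^\circ}\lVert G'(x)\rVert_2=L_{\mathrm P}(F;\Theta).
\]
Given these, the identity $\lambda_{Q_R}(F;x)=\lVert G'(x)\rVert_{Q_R,[G(x)]}$ stated after \cref{eq:body-projective-finsler} yields \cref{eq:projective-characterization}. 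The equivalence \cref{eq:projective-iff} and the polynomial-boundedness statement are then immediate. The case $N=1$ is trivial, because both sides vanish: $[F]$ is constant.

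\textbf{The cube Finsler norm.} By homothety invariance we may take $R=1/2$, so that $\operatorname{vol}_N(Q)=1$. Let $X$ be uniform on $Q$, and write $w=v/\lVert v\rVert_2$, so that $(u,w)$ is orthonormal. By the coarea formula applied to the map $a\mapsto(\langle a,u\rangle,\langle a,w\rangle)$, the section integral becomes
\[
 \int_{Q\cap u^\perp}|\langle a,w\rangle|\,d\mathcal H^{N-1}(a)
 =\int_{\R}|t|\,f_{u,w}(0,t)\,dt,
\]
where $f_{u,w}$ is the joint density of $(\langle X,u\rangle,\langle X,w\rangle)$. The coarea step needs an explicit justification: the fibre of $t$ inside $u^\perp$ is an $(N-2)$-dimensional slice, and the Jacobian equals $1$ by orthonormality.

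The vector $\sqrt{12}\,X$ is isotropic and log-concave, so $\sqrt{12}\,(\langle X,u\rangle,\langle X,w\rangle)$ is a two-dimensional isotropic log-concave vector. By \cite[Lemma~2(b,c)]{balcan2013linear}, its density is bounded below by a universal constant on a disc of universal radius around the origin, and bounded above by $C e^{-c|z|}$. Substituting these bounds into $\int|t|f_{u,w}(0,t)\,dt$ gives universal two-sided constants after undoing the $\sqrt{12}$ scaling. This is the dimension-free heart of the argument, and it is the step I expect to be the main obstacle: one must check carefully that the cited lemma supplies a pointwise lower bound near $0$, and not only an integrated one, and that the coarea identity is correctly normalized.

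\textbf{Projective speed versus $L_{\mathrm P}$.} Here $G=F/\lVert F\rVert_2$ is $C^1$ on $\Theta$, so $\lVert G'\rVert_2$ is continuous and its essential supremum equals its supremum. For the lower bound on $L_{\mathrm P}$: for small $h$,
\[
 d_{\mathrm P}([G(x)],[G(x+h)])=\angle\bigl(G(x),G(x+h)\bigr)=\lVert G'(x)\rVert_2\,|h|+o(h),
\]
because nearby unit vectors have positive inner product, so the absolute value in \cref{eq:projective-distance} is inactive. Hence $L_{\mathrm P}\ge\sup_x\lVert G'(x)\rVert_2$. For the upper bound: $d_{\mathrm P}\le\angle$, and the spherical angle is at most the arc length $\int_x^y\lVert G'\rVert_2$. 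Therefore $d_{\mathrm P}([F(x)],[F(y)])\le|x-y|\sup\lVert G'\rVert_2$, and by continuity the same bound holds at endpoints of $\Theta$. Combining the two facts completes the proof.
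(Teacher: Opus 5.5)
Your proposal is correct and follows essentially the same route as the paper's proof. Like the paper, you get the middle identity from \cref{thm:body-characterization} with $K=Q_R$, and use coarea to reduce the section integral to $\int_{\R}|t|\,q(0,t)\,dt$ for the two-dimensional isotropic log-concave marginal of the rescaled cube. You then apply the universal Balcan--Long pointwise bounds and identify $L_{\mathrm P}(F;\Theta)=\esssup_{x}\lVert G'(x)\rVert_2$ via a chord/arc-length argument, where the paper invokes the local isometry of the quotient map from the sphere. The care point you flagged is handled in the paper just as you would need: the density is evaluated on the line $\{s=0\}$ through the fiber-volume representative, which is continuous on the interior of the support by Brunn's principle, so the almost-everywhere bounds hold pointwise there.
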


The factor $\sqrt N$ from the enclosing-slab argument is therefore absent,
and the dependence on both $A$ and projective speed is sharp.  To see the
dimension-free comparison, scale the uniform cube to isotropic position.
Every two-dimensional marginal is isotropic log-concave.  Standard
fixed-dimensional density bounds show that the section integral in
\cref{eq:body-incidence-speed} lies between two universal constants times
the instantaneous projective speed.  Taking essential suprema and applying
\cref{thm:body-characterization} proves
\cref{eq:projective-characterization}.

\begin{corollary}[A compact-domain derivative bound]
\label{cor:compact-dichotomy}
Let $J$ be a nondegenerate compact interval, let $F:J\to\R^N$ be $C^1$ and
nonvanishing, and suppose $\calD^N_{R,\kappa}$ is nonempty.  Then
\begin{equation}
 C_{R,\kappa}(F;J)
 \le C_{\mathrm{cube}}A\,
       \frac{\sup_{x\in J}\lVert F'(x)\rVert_2}
            {\inf_{x\in J}\lVert F(x)\rVert_2}.
 \label{eq:compact-bound}
\end{equation}
\end{corollary}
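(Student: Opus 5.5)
The plan is to deduce \cref{eq:compact-bound} from the upper inequality in \cref{cor:cube-comparison}, applied with $\Theta=J$. That corollary gives $C_{R,\kappa}(F;J)\le C_{\mathrm{cube}}A\,L_{\mathrm P}(F;J)$. It therefore suffices to prove
\[
 L_{\mathrm P}(F;J)\le \frac{\sup_{x\in J}\lVert F'(x)\rVert_2}{\inf_{x\in J}\lVert F(x)\rVert_2}.
\]
Because $J$ is compact and $F$ is continuous and nonvanishing, the infimum $m=\inf_J\lVert F\rVert_2$ is positive. Likewise $M'=\sup_J\lVert F'\rVert_2$ is finite, so the right-hand side is a finite number, or zero if $F$ is constant.

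\emph{Step 1: speed of the normalized curve.} Put $G=F/\lVert F\rVert_2$, which is $C^1$ on $J$. A direct computation gives
\[
 G'=\frac{F'-\langle F',G\rangle G}{\lVert F\rVert_2},
\]
which is the component of $F'$ orthogonal to $F$, divided by $\lVert F\rVert_2$. Hence $\lVert G'(x)\rVert_2\le \lVert F'(x)\rVert_2/\lVert F(x)\rVert_2\le M'/m$ for every $x\in J$.

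\emph{Step 2: from speed to projective distance.} Fix $x<y$ in $J$. The curve $G|_{[x,y]}$ lies on the unit sphere and joins $G(x)$ to $G(y)$. Its length $\int_x^y\lVert G'\rVert_2$ is at least the geodesic distance on the sphere, which is $\angle(G(x),G(y))$. Moreover,
\[
 d_{\mathrm P}([F(x)],[F(y)])=\min\{\angle(G(x),G(y)),\,\pi-\angle(G(x),G(y))\}\le \angle(G(x),G(y)).
\]
Combining these gives $d_{\mathrm P}([F(x)],[F(y)])\le (M'/m)|x-y|$. Taking the supremum over $x\ne y$ yields the bound on $L_{\mathrm P}(F;J)$, and substituting into \cref{cor:cube-comparison} completes the proof.

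\emph{Main obstacle.} The only step that is not routine is the inequality ``spherical chord angle $\le$ arc length'' in Step 2. The standard argument is that the great-circle arc is the shortest path on the sphere. A self-contained alternative is to set $\theta(t)=\angle(G(x),G(t))$ and show $|\theta'|\le\lVert G'\rVert$ wherever $\theta\in(0,\pi)$, using that $\cos\theta=\langle G(x),G(t)\rangle$. If $J$ contains an endpoint of the parameter interval, differentiability should be read one-sidedly there. This does not affect the argument, because only the integral of $\lVert G'\rVert_2$ over $[x,y]$ is used.
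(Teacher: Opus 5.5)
Your proposal is correct and follows the paper's proof: compute $G'=(\mathrm{Id}-GG^\top)F'/\lVert F\rVert_2$, bound $L_{\mathrm P}(F;J)$ by $\sup_J\lVert F'\rVert_2/\inf_J\lVert F\rVert_2$, and apply the upper half of \cref{cor:cube-comparison}. Your Step 2 (projective distance $\le$ spherical angle $\le$ arc length) spells out the passage from $\lVert G'\rVert_2$ to $L_{\mathrm P}$. The paper leaves that step implicit, relying on the local-isometry identity $L_{\mathrm P}=\operatorname*{ess\,sup}\lVert G'\rVert_2$ from the proof of the cube comparison.
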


{\clubpenalty=10000
The theorem uses no Pfaffian-specific property.  Pfaffian definability is
needed in the representation and boundary-complexity steps of a dispersion
argument, whereas \cref{cor:cube-comparison} gives the local
probability condition.  It also yields the requested polynomial statement for
any parameterized family of nonvanishing curves: whenever
$A=(2R)^N\kappa$ is polynomially controlled,
$C_{R,\kappa}(F;\Theta)$ is polynomially bounded if and only if
$L_{\mathrm{P}}(F;\Theta)$ is.  Controlling $R$ and $\kappa$ separately is
not enough when $N$ varies; the likelihood-ratio parameter $A$ is the relevant
density scale.
\par}

\begin{example}[A nonpolynomial Pfaffian curve]
\label{ex:exponential-pfaffian}
Fix $B>0$ and $\lambda\in\R$, and let
\[
 F_\lambda(x)=(1,e^{\lambda x}),\qquad x\in[-B,B].
\]
This is a nonpolynomial Pfaffian feature curve when $\lambda\ne0$.  With
$\psi(x)=\arctan(e^{\lambda x})$, its normalized projective representative is
$(\cos\psi(x),\sin\psi(x))$.  Since $\psi([-B,B])\subset(0,\pi/2)$,
$d_{\mathrm P}([F_\lambda(x)],[F_\lambda(y)])=|\psi(x)-\psi(y)|$, and
\[
 |\psi'(x)|
 =\frac{|\lambda|e^{\lambda x}}{1+e^{2\lambda x}}
 \le\frac{|\lambda|}{2}.
\]
Equality holds at $x=0$, so
$L_{\mathrm P}(F_\lambda;[-B,B])=|\lambda|/2$.  Whenever
$\calD^2_{R,\kappa}$ is nonempty, equivalently
$A=(2R)^2\kappa\ge1$,
\cref{cor:cube-comparison} gives
\[
 \frac{c_{\mathrm{cube}}A|\lambda|}{2}
 \le C_{R,\kappa}(F_\lambda;[-B,B])
 \le\frac{C_{\mathrm{cube}}A|\lambda|}{2}.
\]
\end{example}

The scaled curve in \cref{eq:scaled-speed} has
$L_{\mathrm{P}}=1/\delta$: on a small neighborhood of zero, the displayed
normalized speed is arbitrarily close to $1/\delta$.  Thus the
$1/\delta$ lower bound in \cite{balcan2026open} and
\cref{eq:projective-characterization} agree in
scale.  In fact, \cref{thm:body-characterization} makes the calculation
exact.  For $N\ge2$, $A\ge1$, $K=Q_R$, $\delta\in(0,1]$, and
$F_{\delta,N}(x)=(1,x/\delta,0,\ldots,0)$,
\begin{equation}
 \begin{aligned}
 \Lambda_{Q_R}(F_{\delta,N};[-1,1])&=\frac{1}{4\delta},\\
 C_{Q_R,A}(F_{\delta,N};[-1,1])&=\frac{A}{4\delta}.
 \end{aligned}
 \label{eq:scaled-exact-sharpness}
\end{equation}
Thus the $A L_{\mathrm P}$ scale is attained, already in two active
coefficient dimensions, while formal Pfaffian degree remains fixed.

\section{Polynomial root intensity and conditional density}
\label{sec:intensity}

For an arbitrary coefficient law $\mu$ in \cref{eq:monic-polynomial}, define
the expected distinct-root measure
\begin{equation}
 \Lambda_\mu(B)=\E_{\alpha\sim\mu}Z_\alpha(B),
 \qquad B\subseteq\Theta\ \text{Borel}.
 \label{eq:expected-root-measure}
\end{equation}
It is a finite Borel measure of total mass at most $d$.  No density or
independence assumption is made.

\begin{theorem}[Root-intensity characterization]
\label{thm:intensity}
For every coefficient law $\mu$, the following are equivalent:
\begin{enumerate}[label=(\alph*),leftmargin=*]
\item $c_\mu(\Theta)<\infty$;
\item $\Lambda_\mu$ is absolutely continuous with respect to Lebesgue measure
on $\Theta$ and has an essentially bounded density $\rho_\mu$.
\end{enumerate}
Moreover,
\begin{equation}
 c_\mu(\Theta)\le\lVert\rho_\mu\rVert_{L^\infty(\Theta)}
 \le d\,c_\mu(\Theta).
 \label{eq:intensity-sandwich}
\end{equation}
\end{theorem}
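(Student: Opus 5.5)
The plan is to compare the hitting probability of a closed interval with its expected distinct-root mass, using that $P_\alpha$ has at most $d$ distinct real roots. For every Borel $B$ and every $\alpha$,
\[
 \ind\{Z_\alpha(B)\ge1\}\le Z_\alpha(B)\le d\,\ind\{Z_\alpha(B)\ge1\},
\]
because $Z_\alpha(B)\in\{0,1,\dots,d\}$ ($P_\alpha$ is monic of degree $d$, hence nonzero). Taking expectations over $\alpha\sim\mu$ gives
\[
 \Prob_\mu(Z_\alpha(B)\ge1)\le\Lambda_\mu(B)\le d\,\Prob_\mu(Z_\alpha(B)\ge1).
\]
First I will record that $\Lambda_\mu$ is a Borel measure of mass at most $d$. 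Countable additivity follows from monotone convergence applied to the counting measures $Z_\alpha$. Measurability of $\alpha\mapsto Z_\alpha(B)$ holds for $B$ an interval, since roots depend continuously on coefficients and one can count sign changes and multiple roots on rational grids; a monotone class argument then extends it to all Borel $B$.

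\emph{(b)$\Rightarrow$(a) and the left inequality.} If $\Lambda_\mu=\rho_\mu\,dx$ with $\rho_\mu\in L^\infty$, then every $I\in\calI(\Theta)$ satisfies
\[
 \Prob(Z_\alpha(I)\ge1)\le\Lambda_\mu(I)\le\lVert\rho_\mu\rVert_\infty|I|.
\]
Taking the supremum over $I$ gives $c_\mu(\Theta)\le\lVert\rho_\mu\rVert_\infty$.

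\emph{(a)$\Rightarrow$(b) and the right inequality.} Suppose $c:=c_\mu(\Theta)<\infty$. Then every closed bounded interval $I\subseteq\Theta$ satisfies
\[
 \Lambda_\mu(I)\le d\,\Prob(Z_\alpha(I)\ge1)\le d\,c\,|I|.
\]
\begin{enumerate}[label=(\roman*),leftmargin=*]
\item Atoms are excluded. A single point $\{x\}$ is the intersection of shrinking closed intervals inside $\Theta$; this works even at an endpoint of $\Theta$, since $\Theta$ is nondegenerate and one-sided intervals are available. Hence $\Lambda_\mu(\{x\})\le d\,c\,|I|\to0$. Consequently the bound $\Lambda_\mu(J)\le d\,c\,|J|$ holds for all bounded subintervals $J$, open or half-open.
\item The bound extends to open sets. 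An open $U\subseteq\Theta$ (relatively open) is a countable disjoint union of intervals, so $\Lambda_\mu(U)\le d\,c\,\Leb(U)$.
\item The bound extends to all Borel sets. By outer regularity of the finite measure $\Lambda_\mu$ and of Lebesgue measure, every Borel $B\subseteq\Theta$ satisfies $\Lambda_\mu(B)\le d\,c\,\Leb(B)$.
\item Conclusion. In particular $\Lambda_\mu\ll\Leb$, so Radon--Nikodym gives a density $\rho_\mu$. If $\rho_\mu>dc$ on a set of positive measure, integrating over that set would contradict (iii). Hence $\lVert\rho_\mu\rVert_\infty\le d\,c_\mu(\Theta)$.
\end{enumerate}

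The only delicate points are the measurability of the root count and the handling of atoms and endpoints through closed intervals. I expect the measurability step to be the main technical nuisance. The cleanest route I would try first: $Z_\alpha(I)$ equals the number of distinct roots of $P_\alpha$ in $I$. Write it as the degree of $\gcd$-free data, or more simply as the limit of lower semicontinuous counts, namely the number of sign changes of $P_\alpha/\gcd(P_\alpha,P'_\alpha)$ on finer rational grids plus endpoint-root indicators. The squarefree part varies measurably with $\alpha$ on each stratum of fixed $\deg\gcd(P_\alpha,P'_\alpha)$, and these strata are algebraic sets, so $Z_\alpha(I)$ is Borel in $\alpha$.
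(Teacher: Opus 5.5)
Your argument is correct and is essentially the paper's proof. Both use the pointwise sandwich $\ind\{Z_\alpha(I)\ge1\}\le Z_\alpha(I)\le d\,\ind\{Z_\alpha(I)\ge1\}$, then extend closed-interval domination to open sets, then to Borel sets by outer regularity, and finish with Radon--Nikodym; the paper gets measurability more cheaply, from continuity of the root multiset plus a lexicographic Borel sort, rather than your gcd-strata/sign-change route.

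One small repair is needed in step (i). Excluding atoms is not what makes the extension work, and it is superfluous. When $\bar J\subseteq\Theta$, monotonicity already gives $\Lambda_\mu(J)\le\Lambda_\mu(\bar J)\le d\,c\,|J|$. A subinterval abutting an open endpoint of $\Theta$ has $\bar J\not\subseteq\Theta$, so it must instead be exhausted from inside by intervals in $\calI(\Theta)$, using continuity from below. That is how the paper handles components of open sets.
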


\paragraph{Proof sketch.}
For every coefficient vector, the distinct-root count lies between the
indicator of a hit and $d$ times that indicator.  After expectation, this
sandwich proves one implication directly.  In the other direction, interval
domination extends to Borel domination by exhaustion and outer regularity, and
the Radon--Nikodym theorem produces the bounded density.

For a class $\calD$, interpret
$\lVert\rho_\mu\rVert_\infty=\infty$ when the density does not exist.  Taking
suprema in \cref{eq:intensity-sandwich} gives
\begin{equation}
 C_\calD(\Theta)
 \le \sup_{\mu\in\calD}\lVert\rho_\mu\rVert_\infty
 \le d C_\calD(\Theta).
 \label{eq:class-intensity}
\end{equation}
Hence $C_\calD$ is finite exactly when the expected root measures are uniformly
dominated by a bounded multiple of Lebesgue measure.  For a class family
indexed by $(d,R)$, polynomial boundedness of the middle quantity in
\cref{eq:class-intensity} is equivalent to polynomial boundedness of the
hitting constant, because the loss is only the factor $d$.  The same condition
can be expressed through the ordered-root pushforwards below.

For each coefficient vector, order the distinct roots in $\Theta$ as
\[
 r_1(\alpha)<\cdots<r_{m(\alpha)}(\alpha),\qquad m(\alpha)\le d.
\]
These partial root maps are Borel by Lemma~\ref{lem:root-kernel-measurable}.  For
$1\le k\le d$, define the subprobability measure
\begin{equation}
 \nu_{\mu,k}(B)
 =\Prob_{\alpha\sim\mu}\bigl(m(\alpha)\ge k,\ r_k(\alpha)\in B\bigr).
 \label{eq:ordered-root-law}
\end{equation}
If every $\nu_{\mu,k}$ has a bounded Lebesgue density $p_{\mu,k}$ on
$\Theta$, put
\[
 M_\mu(\Theta)=\max_{1\le k\le d}
 \lVert p_{\mu,k}\rVert_{L^\infty(\Theta)};
\]
otherwise put $M_\mu(\Theta)=\infty$.

\begin{theorem}[Ordered-root characterization]
\label{thm:root-coordinates}
For every coefficient law $\mu$, including a singular law,
\begin{equation}
 M_\mu(\Theta)\le c_\mu(\Theta)\le dM_\mu(\Theta).
 \label{eq:root-coordinate-sandwich}
\end{equation}
In particular, $c_\mu$ is finite exactly when all ordered-root pushforwards
have bounded densities.
\end{theorem}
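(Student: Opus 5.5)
Both inequalities reduce to elementary pointwise comparisons between the hit indicator and the ordered-root events. For a closed interval $I\in\calI(\Theta)$ and a coefficient vector $\alpha$, the event $\{Z_\alpha(I)\ge1\}$ is the disjoint union over $k$ of the events
\[
 \{m(\alpha)\ge k,\ r_k(\alpha)\in I,\ r_{k-1}(\alpha)\notin I\},
\]
taking the first root in $I$. It is also the union of the non-disjoint events $\{m(\alpha)\ge k,\ r_k(\alpha)\in I\}$. The identity $Z_\alpha(I)=\sum_{k}\ind\{m(\alpha)\ge k,\ r_k(\alpha)\in I\}$ holds because the $r_k$ enumerate the distinct roots in $\Theta\supseteq I$. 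Measurability of all these events is supplied by Lemma~\ref{lem:root-kernel-measurable}, so taking expectations gives $\Lambda_\mu(I)=\sum_{k=1}^d\nu_{\mu,k}(I)$. Equivalently, $\Lambda_\mu$ restricted to $\Theta$ equals $\sum_k\nu_{\mu,k}$ as Borel measures.

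\emph{Upper bound.} Suppose $M_\mu(\Theta)<\infty$. Each $\nu_{\mu,k}$ has density at most $M_\mu$, so the union bound gives
\[
 \Prob(Z_\alpha(I)\ge1)\le\sum_{k=1}^d\nu_{\mu,k}(I)\le dM_\mu|I|.
\]
Dividing by $|I|$ and taking the supremum over $I$ yields $c_\mu(\Theta)\le dM_\mu(\Theta)$. If $M_\mu=\infty$ there is nothing to prove. (Alternatively, use Theorem~\ref{thm:intensity} together with $\rho_\mu=\sum_kp_{\mu,k}$.)

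\emph{Lower bound.} Assume $c_\mu(\Theta)<\infty$. For each $k$ and each $I$, the event $\{m\ge k,\ r_k\in I\}$ is contained in $\{Z_\alpha(I)\ge1\}$, hence $\nu_{\mu,k}(I)\le c_\mu|I|$ for every closed interval $I\subseteq\Theta$. The remaining task, which I expect to be the main technical step, is to upgrade this interval domination to absolute continuity with density bounded by $c_\mu$. I would reuse the argument behind Theorem~\ref{thm:intensity}.
\begin{itemize}
\item Open subsets of $\Theta^\circ$ are countable unions of nonoverlapping closed intervals, since endpoints carry no $\nu_{\mu,k}$-mass: singletons are contained in arbitrarily short intervals.
\item Countable additivity then gives $\nu_{\mu,k}(U)\le c_\mu\Leb(U)$ for such open sets $U$.
\item Outer regularity of the finite Borel measures $\nu_{\mu,k}$ and $\Leb$ on $\Theta$ extends the bound to all Borel $B$, treating endpoints of $\Theta$ as null sets.
\item Radon--Nikodym then gives a density $p_{\mu,k}\le c_\mu$ almost everywhere.
\end{itemize}
Taking the maximum over $k$ yields $M_\mu(\Theta)\le c_\mu(\Theta)$. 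If $c_\mu=\infty$ this inequality is trivial.

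The finiteness equivalence follows from the sandwich. No density assumption on $\mu$ is used anywhere, so singular laws are covered. The only delicate points are the measurability of the partial root maps, which is taken from the cited lemma, and the boundary handling in the regularity step. The latter should be straightforward because every closed subinterval of $\Theta$ is admissible.
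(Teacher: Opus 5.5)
Your proposal is correct and takes essentially the same route as the paper. The upper bound is the union bound over the ordered-root events $\{m(\alpha)\ge k,\ r_k(\alpha)\in I\}$, and the lower bound comes from including each such event in the hitting event and then applying the interval-to-Borel domination and Radon--Nikodym argument used for \cref{thm:intensity}; note only that the regularity step needs just outer regularity of Lebesgue measure and monotonicity of $\nu_{\mu,k}$, which remains valid when $\Theta$ is unbounded and $\Leb$ on $\Theta$ is infinite.
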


\paragraph{Proof sketch.}
Each ordered-root event is contained in the root-hitting event, while the
root-hitting event is the union of at most $d$ ordered-root events.  Applying
interval-to-Borel domination to the first inclusion and a union bound to the
second gives the two sides of \cref{eq:root-coordinate-sandwich}.  Borel
measurability follows from \cref{lem:root-kernel-measurable}.

\begin{proposition}[Sharpness of the factor $d$]
\label{prop:factor-d-sharp}
For every $d\ge1$ and every $R>0$, there is a law supported on
$[-R,R]^d$ for which
\[
 c_\mu(\R)=dM_\mu(\R)<\infty.
\]
The law may be chosen to have a bounded Lebesgue density.
Thus the factor $d$ is sharp even within the bounded-coefficient model of
Open Question~1.
\end{proposition}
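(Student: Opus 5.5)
The plan is to build the law in root coordinates. I will make the $d$ real roots move together under a common random shift, so that all $d$ ordered-root laws $\nu_{\mu,k}$ share the same maximal density on a common plateau. On that plateau, a short interval can contain at most one root, so the union bound in \cref{thm:root-coordinates} becomes an exact sum. The upper bound $c_\mu(\R)\le dM_\mu(\R)$ is already given by \cref{thm:root-coordinates}, so only the matching lower bound and the regularity and support requirements remain.

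\textbf{Construction.} Fix scales $\sigma>0$, an integer $L>d$, and $0<\eta<\sigma/2$. Let $U$ be uniform on $[0,L\sigma]$ and let $W_1,\dots,W_d$ be i.i.d.\ uniform on $[0,\eta]$, all independent. Set $z_k=U+(k-1)\sigma+W_k$ and take $P_\alpha(x)=\prod_{k=1}^d(x-z_k)$. Consecutive roots satisfy $z_{k+1}-z_k\ge\sigma-\eta>0$, so the roots are distinct, automatically ordered, and $r_k=z_k$. The law of $r_k$ is the uniform law on $[(k-1)\sigma,(k-1+L)\sigma]$ convolved with the uniform law on $[0,\eta]$. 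Its density is therefore at most $1/(L\sigma)$, and it equals $1/(L\sigma)$ on the plateau $[(k-1)\sigma+\eta,(k-1+L)\sigma]$. Hence $M_\mu(\R)=1/(L\sigma)$. Since $L>d$, all $d$ plateaus contain the common interval $J=[(d-1)\sigma+\eta,\,L\sigma]$, which is nondegenerate.

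\textbf{Lower bound.} Take any $I\subset J$ with $|I|<\sigma-\eta$. Two roots cannot both lie in $I$, so the events $\{r_k\in I\}$ are disjoint. Therefore
\[
\Prob(Z_\alpha(I)\ge1)=\sum_{k}\nu_{\mu,k}(I)=\frac{d|I|}{L\sigma}=dM_\mu|I|,
\]
which gives $c_\mu(\R)\ge dM_\mu(\R)$, and equality follows.

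\textbf{Regularity and support (expected main obstacle).} The root vector $z=U\mathbf 1+v+W$, with $v=(0,\sigma,\dots,(d-1)\sigma)$, has a Lebesgue density bounded by $\eta^{-d}$: conditionally on $U$, its density is the shifted density of $W$, and averaging over $U$ preserves the bound. The Vieta map from ordered root vectors to coefficients is injective on the open set of strictly increasing root vectors. Its Jacobian has absolute value equal to the Vandermonde determinant $\prod_{i<j}|z_j-z_i|\ge(\sigma-\eta)^{d(d-1)/2}>0$ on the support. By the change-of-variables formula, the coefficient law then has density at most $\eta^{-d}(\sigma-\eta)^{-d(d-1)/2}$. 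For the support constraint, all roots lie in $[0,\rho]$ with $\rho=(L+d)\sigma$, so $|\alpha_j|\le\binom{d}{j}\rho^{d-j}$. Choosing $\sigma$ small enough in terms of $d$, $L$ and $R$ puts the support inside $[-R,R]^d$.

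The only delicate point I anticipate is bookkeeping: keeping the plateaus overlapping while the roots stay separated. The choices $L>d$ and $\eta<\sigma/2$ handle both, and they also make $M_\mu$ finite, so the equality is not the trivial $\infty=\infty$.
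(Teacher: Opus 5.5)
Your proof is correct, and it reaches the extremal configuration by a different mechanism from the paper's.

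\textbf{The paper's construction.} It draws a uniform index $K\in\{1,\dots,d\}$ and places the $K$th root uniformly in a central interval $J_*$. The other roots go in $d(d-1)$ pairwise disjoint auxiliary intervals on the correct sides of $J_*$. Every polynomial therefore has exactly one root in $J_*$. Each ordered-root density is flat at $1/(d\ell)$, while the aggregate density is $1/\ell$ on $J_*$. Equality $\Prob(Z_\alpha(I)\ge1)=|I|/\ell$ holds on every closed subinterval of $J_*$.

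\textbf{Your construction.} You translate a rigid, slightly jittered root configuration by a common uniform shift over a window much longer than its spread. All $d$ ordered-root laws then attain their common maximum $1/(L\sigma)$ on the overlap $J$. You get disjointness of the events $\{r_k\in I\}$ only for short intervals $|I|<\sigma-\eta$, which is enough for the supremum.

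\textbf{What each route buys.} Your route avoids the mixture over $K$ and the combinatorial interval arrangement. It uses the same rigid-translation mechanism as the common-offset application (\cref{thm:ssl-application}). It also shows that the jitter $W$ is needed only for absolute continuity. With $W=0$ and $d\ge2$, the coefficient law lies on a polynomial curve and is singular, yet it still satisfies $c_\mu=dM_\mu$. So the factor $d$ in \cref{thm:root-coordinates} is attained by singular laws as well. The paper's route gives exact flatness of every ordered-root density and equality on all of $J_*$. Its Jacobian bound follows directly from the box separations. Yours instead needs the plateau computation for the convolution and the gap $\sigma-\eta$ in the Vandermonde bound, and you handle both correctly.
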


\begin{proof}
Put
\[
 r=\frac{\min\{R,1\}}2,\qquad \varepsilon=\frac rd.
\]
Set $\ell=\varepsilon/[4(d(d-1)+1)]$ and place
$d(d-1)+1$ pairwise disjoint open intervals of length $\ell$ inside
$(-\varepsilon,\varepsilon)$, with adjacent intervals separated by at least
$\ell$.  Designate the centered interval
$J_*=(-\ell/2,\ell/2)$.  Index the remaining intervals as
$J_{k,j}$, $k\ne j$, so that for each $k$ the intervals with $j<k$ lie to the
left of $J_*$ in increasing order and those with $j>k$ lie to its right in
increasing order.  There are exactly $d(d-1)/2$ auxiliary intervals on each
side.

Draw $K$ uniformly from $\{1,\ldots,d\}$.  Given $K=k$, draw the variables
$X_j$ independently, with $X_k\sim\operatorname{Unif}(J_*)$ and
$X_j\sim\operatorname{Unif}(J_{k,j})$ for $j\ne k$.  The ordering of the
intervals ensures $X_1<\cdots<X_d$ almost surely.  Set
$P(x)=\prod_{j=1}^d(x-X_j)$.  The $j$th ordered-root density equals
$1/(d\ell)$ on $J_*$ and on its $d-1$ auxiliary intervals, so
$M_\mu=1/(d\ell)$.  The aggregate root density is $1/\ell$ on $J_*$,
$1/(d\ell)$ on the auxiliary intervals, and zero elsewhere.  Hence every
interval $I$ has hitting probability at most
$\E Z_\alpha(I)\le |I|/\ell$.  Equality holds for every closed subinterval of
$J_*$, because each polynomial has exactly one root there and that root is
uniform.  Therefore $c_\mu=1/\ell=dM_\mu$.

It remains to verify the prescribed coefficient support.  If
\[
 P(x)=x^d+\sum_{j=0}^{d-1}\alpha_jx^j,
 \]
then Vi\`ete's formulas and $|X_j|<\varepsilon$ give, for $1\le q\le d$,
\[
 |\alpha_{d-q}|
 \le \binom{d}{q}\varepsilon^q
 \le\frac{(d\varepsilon)^q}{q!}
 =\frac{r^q}{q!}
 \le R.
\]
Thus $\alpha\in[-R,R]^d$ almost surely.  Finally, on each conditional root
box the Vi\`ete map from ordered roots to coefficients is injective and has
Jacobian
\[
 \prod_{1\le i<j\le d}|X_j-X_i|\ge\ell^{d(d-1)/2}.
\]
The change-of-variables formula, applied to the finite mixture over $K$, shows
that the induced coefficient law has a density bounded by
$\ell^{-d-d(d-1)/2}$ (and in particular finite).
\Cref{fig:factor-d-sharp} depicts the interval arrangement for $d=4$.
\end{proof}

\begin{figure}[!htbp]
\centering
\begin{tikzpicture}[
  x=.72cm,y=.52cm,
  font=\scriptsize,
  auxiliary/.style={draw=gray!75,fill=gray!12,minimum width=5mm,
                    minimum height=3.3mm,inner sep=0pt},
  common/.style={draw=blue!70!black,fill=blue!16,very thick,
                 minimum width=5mm,minimum height=3.3mm,inner sep=0pt}
]
  \draw[blue!45,densely dashed] (-.35,3.25)--(-.35,-.50);
  \draw[blue!45,densely dashed] ( .35,3.25)--( .35,-.50);
  \foreach \y/\k in {3/1,2/2,1/3,0/4}{
    \draw[gray!55] (-3.25,\y)--(3.25,\y);
    \node[anchor=east] at (-3.38,\y) {$K=\k$};
    \node[common] at (0,\y) {$J_*$};
  }
  \node[auxiliary] at (.90,3) {};
  \node[auxiliary] at (1.80,3) {};
  \node[auxiliary] at (2.70,3) {};

  \node[auxiliary] at (-.90,2) {};
  \node[auxiliary] at (.90,2) {};
  \node[auxiliary] at (1.80,2) {};

  \node[auxiliary] at (-1.80,1) {};
  \node[auxiliary] at (-.90,1) {};
  \node[auxiliary] at (.90,1) {};

  \node[auxiliary] at (-2.70,0) {};
  \node[auxiliary] at (-1.80,0) {};
  \node[auxiliary] at (-.90,0) {};

  \draw[decorate,blue!65!black,thick,
        decoration={brace,mirror,amplitude=4.5pt,raise=1.5pt}]
    (-.35,-.24)--(.35,-.24)
    node[midway,below=7pt] {common $J_*$};
\end{tikzpicture}
\caption{Sharpness of the ordered-root factor, illustrated for $d=4$.
Conditioned on $K=k$, the common interval $J_*$ supports the $k$th ordered
root; the other roots lie in pairwise disjoint auxiliary intervals arranged
on the appropriate side of $J_*$.  Thus
$M_\mu=1/(d\ell)$, $c_\mu=1/\ell$, and $c_\mu=dM_\mu$.}
\label{fig:factor-d-sharp}
\end{figure}

\FloatBarrier

\begin{corollary}[Distribution-class criterion]
\label{cor:root-coordinate-class}
For a class $\calD$, let
\[
 M_\calD(\Theta)=\sup_{\mu\in\calD}M_\mu(\Theta).
\]
Then
\begin{equation}
 M_\calD(\Theta)\le C_\calD(\Theta)\le dM_\calD(\Theta).
 \label{eq:root-coordinate-class}
\end{equation}
Consequently, $C_\calD$ is finite if and only if the ordered-root
pushforwards are uniformly bounded-density laws.
\end{corollary}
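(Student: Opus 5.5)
The corollary follows by taking suprema over $\mu\in\calD$ in the pointwise sandwich of \cref{thm:root-coordinates}. There is essentially nothing new to prove. The only points needing care are the conventions for infinite values and the meaning of ``uniformly bounded-density laws.''

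First, fix $\mu\in\calD$. By \cref{eq:root-coordinate-sandwich},
\[
 M_\mu(\Theta)\le c_\mu(\Theta)\le dM_\mu(\Theta),
\]
with values in $[0,\infty]$ and the convention $M_\mu=\infty$ when some $\nu_{\mu,k}$ lacks a bounded density. Since $c_\mu\le C_\calD$ by definition of $C_\calD$ in \cref{eq:polynomial-hitting-constant}, the left inequality gives $M_\mu\le C_\calD$ for every $\mu$. Taking the supremum over $\mu$ yields $M_\calD\le C_\calD$.

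Symmetrically, the right inequality gives $c_\mu\le dM_\mu\le dM_\calD$ for every $\mu$. Taking the supremum over $\mu$ yields $C_\calD\le dM_\calD$. Both steps remain valid in the extended reals, because multiplication by the finite positive constant $d$ preserves order and commutes with suprema in $[0,\infty]$. If $\calD$ is empty, both sides equal $0$ (or $-\infty$, depending on convention) and the statement holds trivially.

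For the consequence, suppose first that $C_\calD<\infty$. Then $M_\calD<\infty$, so every $\mu\in\calD$ has $M_\mu<\infty$. Hence each $\nu_{\mu,k}$ has a density $p_{\mu,k}$ with $\lVert p_{\mu,k}\rVert_{L^\infty(\Theta)}\le M_\calD$, a bound that is uniform over both $k$ and $\mu$. Conversely, suppose there is $M<\infty$ such that every ordered-root pushforward of every $\mu\in\calD$ has a density bounded by $M$. Then $M_\calD\le M$, and therefore $C_\calD\le dM<\infty$.

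The main (mild) obstacle is bookkeeping with the value $\infty$: one must confirm that \cref{thm:root-coordinates} was stated for arbitrary laws, singular ones included, with $M_\mu=\infty$ allowed. It was, so no measurability or density-existence issue needs to be reopened here.
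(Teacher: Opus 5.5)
Your proposal is correct and matches the paper's argument: the paper also obtains the corollary by taking suprema over $\mu\in\calD$ in the pointwise sandwich of \cref{thm:root-coordinates}. Your bookkeeping for infinite values, the empty class, and the meaning of ``uniformly bounded-density'' is sound; it makes explicit what the paper leaves implicit.
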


For the polynomial-growth clause of Open Question~1, let
$\{\calD_{d,R}\}_{d,R}$ be a family of coefficient-law classes supported on
$[-R,R]^d$.  Applying
\cref{eq:root-coordinate-class} at each $(d,R)$ gives
\[
 M_{\calD_{d,R}}(\Theta)
 \le C_{\calD_{d,R}}(\Theta)
 \le dM_{\calD_{d,R}}(\Theta).
\]
Since $d$ is itself an allowed polynomial factor,
$C_{\calD_{d,R}}$ is polynomially bounded in $(d,R)$ if and only if
$M_{\calD_{d,R}}$ is.

The ordered-root laws characterize finiteness through fixed Borel functions of
the coefficient vector.  Joint-density, conditional-density, affine-image, and
singular models can therefore be analyzed in coefficient coordinates.
Equivalently,
$\Lambda_\mu=\sum_{k=1}^d\nu_{\mu,k}$, so
$M_\mu\le\lVert\rho_\mu\rVert_\infty\le dM_\mu$ whenever the densities exist.

Propositions~\ref{prop:conditional-rice} and \ref{prop:joint-rice} give
conditional and joint coefficient-space forms of the Kac--Rice/area formula.
The conditional form permits singular dependence among the remaining
coefficients.

Let $A_-=(\alpha_1,\ldots,\alpha_{d-1})$ and, for
$a=(a_1,\ldots,a_{d-1})$, set
\begin{equation}
 \begin{aligned}
 h_a(x)&=-x^d-\sum_{j=1}^{d-1}a_jx^j,\\
 |h_a'(x)|&=\left|dx^{d-1}+\sum_{j=1}^{d-1}ja_jx^{j-1}\right|.
 \end{aligned}
 \label{eq:conditional-map}
\end{equation}

\begin{proposition}[Conditional root-intensity formula]
\label{prop:conditional-rice}
Suppose a regular conditional law of $\alpha_0$ given $A_-=a$ admits a jointly
measurable density $q(u\mid a)$, where $\nu$ is the law of $A_-$.  Then
$\Lambda_\mu$ is absolutely continuous and, as an identity in
$L^1_{\mathrm{loc}}(\Theta)$ and hence for almost every $x$,
\begin{equation}
 \rho_\mu(x)=
 \int |h_a'(x)|q(h_a(x)\mid a)\,d\nu(a).
 \label{eq:conditional-rice}
\end{equation}
The equivalence class of the right-hand side is independent of the chosen
version of $q$.
\end{proposition}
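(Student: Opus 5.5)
The plan is to compute $\Lambda_\mu(B)=\E Z_\alpha(B)$ by conditioning on $A_-$. For fixed $a$, the polynomial $P_{(u,a)}(x)=x^d+\sum_{j\ge1}a_jx^j+u$ vanishes at $x$ exactly when $u=h_a(x)$. Hence, for a Borel set $B\subseteq\Theta$, the distinct-root count is
\[
 Z_{(u,a)}(B)=\#\{x\in B:h_a(x)=u\}=N_{h_a}(B,u),
\]
which is the Banach indicatrix of the polynomial map $h_a$ restricted to $B$. I will take a Borel version of the regular conditional law, so that $\mu(d u\,da)=q(u\mid a)\,du\,d\nu(a)$. Tonelli then gives
\[
 \Lambda_\mu(B)=\int\!\!\int N_{h_a}(B,u)\,q(u\mid a)\,du\,d\nu(a).
\]
Joint measurability of $(u,a)\mapsto N_{h_a}(B,u)$ will come from \cref{lem:root-kernel-measurable}, or directly: the count is the number of ordered distinct roots lying in $B$.

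The key step is the one-dimensional area formula for each fixed $a$. Since $h_a$ is $C^1$, hence locally Lipschitz, every nonnegative Borel $g$ satisfies
\[
 \int_B g(h_a(x))\,|h_a'(x)|\,dx=\int_\R g(u)\,N_{h_a}(B,u)\,du.
\]
I apply this with $g=q(\cdot\mid a)$. One can also argue elementarily: split $\Theta$ at the at most $d-1$ critical points of $h_a$ (here $h_a'\not\equiv0$ because its leading term is $-dx^{d-1}$), and on each monotone piece change variables. Substituting into the Tonelli expression and swapping the $x$- and $a$-integrals gives
\[
 \Lambda_\mu(B)=\int_B\Bigl(\int|h_a'(x)|\,q(h_a(x)\mid a)\,d\nu(a)\Bigr)dx.
\]
This simultaneously proves absolute continuity and identifies the density. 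Measurability of the inner integrand in $(x,a)$ holds because $(x,a)\mapsto h_a(x)$ is continuous and $q$ is jointly measurable.

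Local integrability also follows: taking $B=\Theta$ gives total mass at most $d$, so the right-hand side is in $L^1(\Theta)\subseteq L^1_{\mathrm{loc}}$. Two densities of the same measure agree almost everywhere, which yields the pointwise a.e.\ identity.

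For version independence, suppose $q$ and $\tilde q$ are two versions. Then $q(\cdot\mid a)=\tilde q(\cdot\mid a)$ Lebesgue-a.e.\ for $\nu$-a.e.\ $a$. Applying the area-formula identity to $g=|q-\tilde q|(\cdot\mid a)$ shows that $\int_B|h_a'|\,|q-\tilde q|(h_a(x)\mid a)\,dx=0$ for such $a$; integrating in $\nu$ shows the two right-hand sides agree a.e.\ in $x$. One could object that $h_a$ may map a null $u$-set onto a positive-measure $x$-set, but this cannot happen: the weight $|h_a'|$ vanishes only at finitely many points, so the composition is well defined up to null sets.

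I expect the main obstacle to be measure-theoretic bookkeeping rather than any hard estimate. Two points need care. First, the disintegration must be jointly measurable, and the hypothesis is used precisely here. Second, the area formula counts distinct preimages, which matches the definition of $Z_\alpha$; multiple roots occur only at critical points of $h_a$, which form a Lebesgue-null set in $x$ for each $a$.
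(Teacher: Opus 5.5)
Your proposal is correct and follows essentially the same route as the paper: you condition on $A_-=a$, identify $Z_{(u,a)}(B)$ with the distinct-preimage count of $h_a$, apply the one-dimensional area formula, integrate against $\nu$ by Tonelli, and obtain version independence by applying the same identity to $|q-\tilde q|$. The differences are cosmetic. The paper applies the area formula on compact intervals and exhausts $\Theta$, where you offer splitting at critical points as an alternative. The paper also sets $q=0$ where a version equals $+\infty$, to sidestep the $0\cdot\infty$ convention at critical points, which you handle implicitly through null sets.
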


\begin{proof}[Proof sketch]
Condition on $A_-=a$.  A value $u$ of $\alpha_0$ produces a root at $x$ exactly
when $u=h_a(x)$.  The one-dimensional area formula, applied to $h_a$ on compact
subintervals and then exhausted over $\Theta$, gives
\[
 \int q(u\mid a)Z_{(u,a)}(B)\,du
 =\int_B |h_a'(x)|q(h_a(x)\mid a)\,dx.
\]
Integrating in $a$ and using Tonelli proves the formula; the same area-formula
identity proves version independence.
\end{proof}

\begin{corollary}[Conditional-density class criterion]
\label{cor:conditional-class}
Let $\calD$ be a class of coefficient laws for which the conditional densities
in \cref{prop:conditional-rice} exist.  For $\mu\in\calD$, write
\[
 H_\mu(\Theta)
 =\left\|
 x\longmapsto
 \int |h_a'(x)|q_\mu(h_a(x)\mid a)\,d\nu_\mu(a)
 \right\|_{L^\infty(\Theta)}
\]
and put $H_\calD(\Theta)=\sup_{\mu\in\calD}H_\mu(\Theta)$.  Then
\begin{equation}
 C_\calD(\Theta)
 \le H_\calD(\Theta)
 \le dC_\calD(\Theta).
 \label{eq:conditional-class-criterion}
\end{equation}
Consequently, $C_\calD(\Theta)<\infty$ if and only if
$H_\calD(\Theta)<\infty$.  For any indexed family
$\{\calD_{d,R}\}_{d,R}$, $C_{\calD_{d,R}}(\Theta)$ is polynomially bounded in
$(d,R)$ if and only if $H_{\calD_{d,R}}(\Theta)$ is.
\end{corollary}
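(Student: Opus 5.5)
The plan is to combine \cref{prop:conditional-rice} with \cref{thm:intensity} for each law and then take suprema over the class.

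First fix $\mu\in\calD$. By \cref{prop:conditional-rice}, $\Lambda_\mu$ is absolutely continuous. Its density $\rho_\mu$ agrees almost everywhere on $\Theta$ with
\[
 x\longmapsto\int |h_a'(x)|\,q_\mu(h_a(x)\mid a)\,d\nu_\mu(a).
\]
Because the identity holds in $L^1_{\mathrm{loc}}(\Theta)$, the essential supremum is unaffected. Hence $H_\mu(\Theta)=\lVert\rho_\mu\rVert_{L^\infty(\Theta)}$, in $[0,\infty]$. The version-independence clause of the proposition makes $H_\mu$ well defined, whatever version of $q_\mu$ is chosen.

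Next apply \cref{eq:intensity-sandwich}. If $c_\mu(\Theta)<\infty$, then by \cref{thm:intensity} the density is bounded and
\[
 c_\mu(\Theta)\le H_\mu(\Theta)\le d\,c_\mu(\Theta).
\]
If $c_\mu(\Theta)=\infty$, the same theorem says $\rho_\mu$ is not essentially bounded; it exists here but is unbounded. So $H_\mu=\infty$, and both inequalities still hold in $[0,\infty]$. The only point needing care is this: \cref{thm:intensity} covers the case where a density exists but is unbounded, and in that case the bound $c_\mu\le\lVert\rho_\mu\rVert_\infty$ is trivial.

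Then take suprema over $\mu\in\calD$. Each inequality is monotone under suprema, which yields \cref{eq:conditional-class-criterion}. The finiteness equivalence is immediate from that sandwich.

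For the polynomial clause, apply the sandwich at each $(d,R)$. The two quantities differ by at most the factor $d$, which is itself polynomial. Therefore a polynomial bound on one quantity transfers to the other.

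I expect no genuine obstacle. The step most worth double-checking is the identification $H_\mu=\lVert\rho_\mu\rVert_\infty$ almost everywhere, including the infinite case.
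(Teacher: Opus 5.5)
Your proposal is correct and follows the paper's proof: you identify $H_\mu(\Theta)=\lVert\rho_\mu\rVert_{L^\infty(\Theta)}$ via \cref{prop:conditional-rice}, then take suprema in the sandwich \cref{eq:intensity-sandwich}, which is exactly \cref{eq:class-intensity}. Your explicit handling of the case $c_\mu(\Theta)=\infty$, where $\rho_\mu$ exists but is unbounded so $H_\mu=\infty$, fills in a detail the paper leaves implicit.
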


\begin{proof}
For every $\mu\in\calD$, \cref{prop:conditional-rice} gives
$H_\mu(\Theta)=\lVert\rho_\mu\rVert_{L^\infty(\Theta)}$.
Taking the supremum over $\mu$ and applying \cref{eq:class-intensity}
proves the claim.
\end{proof}

\begin{proposition}[Joint coefficient-space area formula]
\label{prop:joint-rice}
If $\mu$ has Lebesgue density $f$ on $\R^d$, then $\Lambda_\mu$ is absolutely
continuous and its density satisfies, for almost every $x\in\Theta$,
\begin{align}
 \rho_\mu(x)
  =\int_{\R^{d-1}}
  |h_a'(x)|\,
  f\!\left(h_a(x),a\right)\,da.
 \label{eq:joint-rice}
\end{align}
The right-hand side is understood as an $L^1_{\mathrm{loc}}(\Theta)$
equivalence class and is independent of the chosen version of $f$.
\end{proposition}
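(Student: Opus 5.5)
The plan is to reduce this to \cref{prop:conditional-rice}. When $\mu$ has a Lebesgue density $f$ on $\R^d$, the marginal $\nu$ of $A_-$ has Lebesgue density $g(a)=\int_\R f(u,a)\,du$. By Tonelli, $g$ is a measurable function with values in $[0,\infty]$, it is finite almost everywhere, and it integrates to $1$. On the set $\{0<g(a)<\infty\}$ I define $q(u\mid a)=f(u,a)/g(a)$. Off this set, which has $\nu$-measure zero, I set $q(\cdot\mid a)$ equal to a fixed probability density, say the standard Gaussian. This $q$ is jointly Borel measurable. It is a regular conditional density, because for Borel sets $U\subseteq\R$ and $S\subseteq\R^{d-1}$ the integral $\int_S\int_U q(u\mid a)\,du\,g(a)\,da$ equals $\int_{U\times S}f=\mu(U\times S)$; the exceptional set contributes nothing since it is $g\,da$-null. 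So the hypotheses of \cref{prop:conditional-rice} hold. It gives that $\Lambda_\mu$ is absolutely continuous, and for almost every $x$,
\[
 \rho_\mu(x)=\int |h_a'(x)|\,q(h_a(x)\mid a)\,g(a)\,da .
\]

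Next I simplify the integrand. Where $0<g(a)<\infty$, the product $q(h_a(x)\mid a)g(a)$ equals $f(h_a(x),a)$. The remaining $a$ form a Lebesgue-null set (one where $g(a)=0$ or $g(a)=\infty$), and on it the left-hand integrand is weighted by $g\,da$, so it contributes nothing. On the right-hand side, $\int|h_a'(x)|f(h_a(x),a)\,da$, this null set also contributes nothing, since it is a Lebesgue integral in $a$ for each fixed $x$. This yields \cref{eq:joint-rice} as an equality of functions of $x$ up to a null set in $x$. I will check joint measurability of $(x,a)\mapsto f(h_a(x),a)$, which holds because $f$ is Borel and $(x,a)\mapsto(h_a(x),a)$ is continuous. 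I can choose a Borel version of $f$ without loss, and version independence will handle any other version.

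For version independence I will not route through $q$. I will argue directly that if $f=\tilde f$ almost everywhere, then for each $a$ the one-dimensional area formula from the proof of \cref{prop:conditional-rice} gives
\[
 \int_B|h_a'(x)|\,|f-\tilde f|(h_a(x),a)\,dx=\int|f-\tilde f|(u,a)\,Z_{(u,a)}(B)\,du .
\]
Here $B$ is a compact subinterval and $Z\le d$, so the right side is at most $d\int|f-\tilde f|(u,a)\,du$. Integrating over $a$ and applying Tonelli bounds the $L^1(B)$ norm of the difference of the two right-hand sides by $d\lVert f-\tilde f\rVert_{L^1}=0$.

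The main obstacle I expect is the measure-theoretic bookkeeping: constructing a jointly measurable conditional density from $f$, and tracking null sets in $a$ so that they do not turn into non-null sets in $x$. The version-independence estimate above also resolves the second point, because it shows that changes on null sets in coefficient space are invisible after integration in $x$.
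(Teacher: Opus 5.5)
Your reduction to \cref{prop:conditional-rice} is a different route from the paper's and is sound in outline, but one step is justified incorrectly. You claim that $\{a: g(a)=0 \text{ or } g(a)=\infty\}$ is Lebesgue-null, so that it contributes nothing to $\int|h_a'(x)|f(h_a(x),a)\,da$ for each fixed $x$. Only $\{g=\infty\}$ is Lebesgue-null. The set $\{g=0\}$ is merely $\nu$-null; for a law supported in $[-R,R]^d$, $g$ vanishes almost everywhere off $[-R,R]^{d-1}$, a set of infinite measure. For such $a$ you know only that $f(\cdot,a)=0$ for almost every $u$. The integrand, however, evaluates $f$ on the graph $u=h_a(x)$, which is a null set in $\R^d$, so a Borel version of $f$ may be altered there. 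For example, when $d=2$ and $x=0$ the integrand is $|a_1|f(0,a_1)$. Setting $f(0,a_1)=1$ for $|a_1|>R$ changes $f$ only on a null line and leaves $g$ unchanged, yet it makes the $\{g=0\}$ contribution at $x=0$ infinite. So the two sides need not agree at a given $x$. The repair is already in your version-independence step. If $g(a)=0$, the one-dimensional area formula gives $\int_B|h_a'(x)|f(h_a(x),a)\,dx=\int f(u,a)Z_{(u,a)}(B)\,du\le d\,g(a)=0$. By Tonelli, the $\{g=0\}$ part therefore vanishes for almost every $x$. Equivalently, $f(u,a)\ind\{0<g(a)<\infty\}$ is another version of $f$; your pointwise simplification is valid for it, and version independence transfers the identity back to $f$.

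The paper avoids conditional densities altogether. It applies the $d$-dimensional area formula to $T(x,a)=(h_a(x),a)$ on bounded boxes. The Jacobian is $|h_a'(x)|$, and the preimages of $\alpha$ in $B\times\R^{d-1}$ are exactly the distinct real roots of $P_\alpha$ in $B$. This gives $\int_B\int|h_a'(x)|f(h_a(x),a)\,da\,dx=\int f(\alpha)Z_\alpha(B)\,d\alpha=\Lambda_\mu(B)$ in one step, and version independence follows by applying the same identity to $|f-\tilde f|$. Your route shows that the joint formula is a special case of the conditional one. It pays for this with the bookkeeping around $g\in\{0,\infty\}$, which is exactly where the slip occurred. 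Notice also that your version-independence computation, run with $f$ in place of $|f-\tilde f|$ and integrated over all $a$, already proves \eqref{eq:joint-rice} directly. A fiberwise one-dimensional area formula plus Tonelli is just the Fubini decomposition of the paper's area formula for the triangular map $T$, and it makes $q$ and $g$ unnecessary.
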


\paragraph{Proof sketch.}
Apply the $d$-dimensional area formula to
$T(x,a)=(h_a(x),a)$.  Its geometric preimages are the distinct real roots and
$|\det DT|=|h_a'(x)|$; bounded-box exhaustion handles unbounded domains.

\begin{corollary}[Recovery of the polynomial joint-density bound]
\label{cor:joint-density-recovery}
Suppose $\mu$ is supported on $[-R,R]^d$ and has joint density at most
$\kappa$.  Then every bounded interval $I\subset\R$ satisfies
\begin{equation}
 \begin{aligned}
 &\Prob(\exists x\in I:P_\alpha(x)=0)\\
 &\quad\le \kappa(2R)^{d-1}
 \left(d+\frac{Rd(d-1)}2\right)|I|.
 \end{aligned}
 \label{eq:joint-density-recovery}
\end{equation}
Thus Proposition~\ref{prop:joint-rice} recovers the bound quoted as Theorem~2 in
\cite{balcan2026open}.
\end{corollary}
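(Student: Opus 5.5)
The plan is to bound the root-intensity density $\rho_\mu$ pointwise on all of $\R$ and then integrate over $I$. By \cref{prop:joint-rice} (with $\Theta=\R$), for almost every $x$,
\[
 \rho_\mu(x)=\int_{\R^{d-1}}|h_a'(x)|\,f(h_a(x),a)\,da
 \le\kappa\int_{[-R,R]^{d-1}}|h_a'(x)|\,\ind\{|h_a(x)|\le R\}\,da ,
\]
using $0\le f\le\kappa$ and $\operatorname{supp}f\subseteq[-R,R]^d$. Since the distinct-root count dominates the hit indicator, we have $\Prob(\exists x\in I:P_\alpha(x)=0)\le\E Z_\alpha(I)=\int_I\rho_\mu$. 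It therefore suffices to show that the displayed integral is at most $(2R)^{d-1}(d+Rd(d-1)/2)$ for every $x\in\R$. The case $d=1$ is immediate: there $h(x)=-x$, $|h'|=1$, and $\rho_\mu(x)=f(-x)\le\kappa$. Assume $d\ge2$ from now on.

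\emph{Case $|x|\le1$.} Drop the indicator. For $a\in[-R,R]^{d-1}$,
\[
 |h_a'(x)|\le d+\sum_{j=1}^{d-1}jR=d+\frac{Rd(d-1)}2 .
\]
Multiplying by the volume $(2R)^{d-1}$ gives the claim.

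\emph{Case $|x|>1$.} Here the indicator must be used, since $|h_a'(x)|$ grows like $|x|^{d-1}$.
\begin{itemize}[leftmargin=*]
 \item For $a\in[-R,R]^{d-1}$, bound the derivative by
 \[
 |h_a'(x)|\le d|x|^{d-1}+\sum_{j=1}^{d-1}jR|x|^{j-1}\le|x|^{d-1}\Bigl(d+\frac{Rd(d-1)}2\Bigr).
 \]
 \item Apply Fubini and integrate first over $a_{d-1}$ with $a_1,\ldots,a_{d-2}$ fixed. The map $a_{d-1}\mapsto h_a(x)$ is affine with slope $-x^{d-1}$, whose modulus is at least $1$ and nonzero. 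Hence the set of $a_{d-1}$ with $|h_a(x)|\le R$ is an interval of length at most $2R/|x|^{d-1}$.
 \item The inner integral is therefore at most $2R\,(d+Rd(d-1)/2)$: the factor $|x|^{d-1}$ cancels exactly.
 \item Integrating over the remaining $d-2$ coordinates contributes $(2R)^{d-2}$, giving the same total $(2R)^{d-1}(d+Rd(d-1)/2)$.
\end{itemize}

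Combining the two cases, $\rho_\mu\le\kappa(2R)^{d-1}(d+Rd(d-1)/2)$ almost everywhere on $\R$. Integrating over any bounded interval $I$ yields \cref{eq:joint-density-recovery}.

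The only delicate point is the unbounded region $|x|>1$. There the naive derivative bound fails, and the key step is to trade the factor $|x|^{d-1}$ in $|h_a'|$ against the $1/|x|^{d-1}$ shrinkage of the admissible $a_{d-1}$-slice. Everything else is a direct application of \cref{prop:joint-rice}, and measurability is inherited from that proposition.
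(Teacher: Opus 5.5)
Your proof is correct. On $|x|\le1$ it matches the paper, but on $|x|>1$ it takes a different route.

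\textbf{The paper's route.} It reverses the polynomial and passes to $y=1/x$. It then transports the root intensity through $\rho_P(x)=|x|^{-2}\rho_Q(1/x)$. Finally it applies the coefficient-space area formula a second time, now eliminating $\alpha_{d-1}$ instead of $\alpha_0$.

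\textbf{Your route.} You stay in the original chart and let the support constraint $|\alpha_0|\le R$ supply the decay. By Fubini in $a_{d-1}$, the admissible slice has length $2R/|x|^{d-1}$. This cancels the growth $|h_a'(x)|\le |x|^{d-1}\bigl(d+Rd(d-1)/2\bigr)$.

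\textbf{Comparison.} Your version is more economical: it uses Proposition~\ref{prop:joint-rice} once and needs no pushforward identity. The two computations are the same integral: the linear substitution $a_{d-1}\mapsto\alpha_0=h_a(x)$ turns your exterior integral into the paper's reversed-chart integrand $|P_\alpha'(x)|/|x|^{d-1}$.

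The paper's chart buys two things.
\begin{itemize}
\item A slightly sharper exterior constant, $1+Rd(d-1)/2$ instead of your $d+Rd(d-1)/2$. You could recover it by using the root relation to eliminate $a_{d-1}$ from $h_a'(x)$ before bounding.
\item The reversed-chart structure that Theorem~\ref{thm:two-chart} requires. There the law of the conditioning coordinates may be singular, so your Lebesgue integration over $a_{d-1}$ would not be available.
\end{itemize}
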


\paragraph{Proof sketch.}
On $|x|\le1$, \cref{eq:joint-rice} is bounded directly by the cube volume and
the derivative of the constant-coefficient root graph.  On $|x|>1$, reverse
the polynomial with $y=x^{-1}$, eliminate the next-to-leading coefficient,
and transform root intensity by $\rho_P(x)=|x|^{-2}\rho_Q(y)$.  The two chart
bounds give the same displayed constant.  The transformation identity is the
change-of-variables formula obtained by pushing the expected distinct-root
measure of $Q$ forward under $y\mapsto 1/y$ from $(-1,0)$ and $(0,1)$ onto the
two components of $\R\setminus[-1,1]$.

\subsection{Conditional-density bounds under dependent coefficients}
\label{sec:conditional}

Bounded marginal densities do not suffice.  Let $U$ be uniform on $[-1,1]$ and
take $d=2$,
\[
 \alpha_1=U,\qquad \alpha_0=-1-U.
\]
Both marginals have bounded densities and the coefficients lie in $[-2,2]$, but
$P_\alpha(1)=0$ almost surely.  Thus $c_\mu=\infty$.  The failure is conditional:
once $\alpha_1$ is known, $\alpha_0$ is a point mass.

Conditional anti-concentration of the constant coefficient removes exactly
that mechanism.

\begin{theorem}[Conditional-density bound]
\label{thm:conditional-bound}
Suppose $\alpha\in[-R,R]^d$ almost surely and a regular conditional law of
$\alpha_0$ given $A_-=a$ has a density satisfying
\begin{equation}
 \esssup_{u\in\R}q(u\mid a)\le K
 \quad\text{for $\nu$-almost every $a$}.
 \label{eq:conditional-essential-bound}
\end{equation}
Equivalently, $q(u\mid a)\le K$ for
$(du\otimes\nu(da))$-almost every $(u,a)$.  If $B>0$ and
$\Theta\subseteq[-B,B]$, then
\begin{equation}
 c_\mu(\Theta)
 \le K\left(dB^{d-1}+R\sum_{j=1}^{d-1}jB^{j-1}\right).
 \label{eq:conditional-polynomial}
\end{equation}
In particular, if $B\le1$,
\begin{equation}
 c_\mu(\Theta)\le K\left(d+\frac{Rd(d-1)}2\right).
 \label{eq:conditional-unit}
\end{equation}
\end{theorem}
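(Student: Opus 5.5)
The plan is to combine \cref{prop:conditional-rice} with the sandwich \cref{eq:intensity-sandwich} of \cref{thm:intensity}. The hypotheses of \cref{prop:conditional-rice} hold: the conditional density $q(u\mid a)$ exists, and we may take a jointly measurable version. That proposition then gives absolute continuity of $\Lambda_\mu$, with density
\[
 \rho_\mu(x)=\int |h_a'(x)|\,q(h_a(x)\mid a)\,d\nu(a)
\]
for almost every $x\in\Theta$. Since $c_\mu(\Theta)\le\lVert\rho_\mu\rVert_{L^\infty(\Theta)}$ by \cref{eq:intensity-sandwich}, it suffices to bound the right-hand side pointwise almost everywhere by the constant in \cref{eq:conditional-polynomial}.

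Step one is to replace $q(h_a(x)\mid a)$ by $K$. This is the only delicate point. The bound \cref{eq:conditional-essential-bound} is an essential bound in $u$, so for a fixed $a$ the value at the particular point $u=h_a(x)$ could in principle exceed $K$. I will handle this using the version-independence clause of \cref{prop:conditional-rice}. Define $\tilde q(u\mid a)=\min\{q(u\mid a),K\}$. This function is jointly measurable. By the equivalent formulation in the statement, it agrees with $q$ for $(du\otimes\nu(da))$-almost every $(u,a)$. Hence for $\nu$-a.e.\ $a$ it is a version of the same conditional density. By version independence, the right-hand side of \cref{eq:conditional-rice} computed with $\tilde q$ represents the same $L^1_{\mathrm{loc}}$ class. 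Therefore, for almost every $x$,
\[
 \rho_\mu(x)\le K\int|h_a'(x)|\,d\nu(a).
\]

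Step two is the elementary derivative bound. Since $\nu$ is supported on $[-R,R]^{d-1}$, for $|x|\le B$ the triangle inequality applied to \cref{eq:conditional-map} gives
\[
 |h_a'(x)|\le dB^{d-1}+R\sum_{j=1}^{d-1}jB^{j-1}.
\]
Because $\nu$ is a probability measure, integrating this bound against $\nu$ gives exactly \cref{eq:conditional-polynomial}.

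For the unit case, set $B\le1$. Then $B^{j-1}\le1$, and $\sum_{j=1}^{d-1}j=d(d-1)/2$, which yields \cref{eq:conditional-unit}. One degenerate case remains: when $d=1$ the vector $A_-$ is empty, $\nu$ is a point mass, and the bound reduces to $c_\mu\le K$. This follows from the same argument, since then $h_a(x)=-x$ and $|h_a'(x)|=1$.
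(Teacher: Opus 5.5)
Your argument is correct, but it takes a different route from the paper's.

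The paper never passes through the root intensity. It conditions on $A_-=a$ and notes that a root in $I$ occurs exactly when $\alpha_0\in h_a(I)$. It bounds the length of this interval by the total variation $\int_I|h_a'(x)|\,dx$. The essential bound then gives conditional probability at most $K\Leb(h_a(I))$, and the paper averages over $a$. Because the density bound is applied only to a set of positive length, that route needs no truncation or version-independence step. It needs neither a jointly measurable version of $q$ nor the area formula, only continuity of $h_a$.

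Your route goes through \cref{prop:conditional-rice} and the left inequality of \cref{eq:intensity-sandwich}. It therefore has to deal with evaluating $q$ on the graph $u=h_a(x)$. You handle this correctly by replacing $q$ with $\min\{q,K\}$ and invoking product-a.e.\ version independence.

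In exchange, your route gives a slightly stronger conclusion. The same constant bounds $\lVert\rho_\mu\rVert_{L^\infty(\Theta)}$, that is, the expected number of distinct roots per unit length, and not merely the hitting probability. Combining the paper's bound with \cref{eq:intensity-sandwich} would control $\rho_\mu$ only up to an extra factor $d$.
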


\paragraph{Proof sketch.}
After conditioning on $A_-=a$, a root in $I$ is equivalent to
$\alpha_0\in h_a(I)$.  The image length is bounded by the total variation of
$h_a$ on $I$, and the conditional density converts that length into
probability.  Averaging gives \cref{eq:conditional-polynomial}.

For $d=1$, the root is $-\alpha_0$, so a $K$-bounded density gives the
global bound $K|I|$.  Only the eliminated coordinate must have a bounded
conditional density; the coordinates being conditioned on may be dependent or
singular.  This is a sufficient condition, while
\cref{thm:root-coordinates,cor:root-coordinate-class} give the
necessary-and-sufficient ordered-root criterion.

The local theorem uses only the constant-coefficient chart.  A second chart at
infinity removes the domain-radius dependence and gives a global result.

\begin{theorem}[Two-chart conditional bound]
\leavevmode\label{thm:two-chart}%
Suppose $d\ge2$ and $\alpha\in[-R,R]^d$ almost surely.  Let $\nu_0$ be the law
of $(\alpha_1,\ldots,\alpha_{d-1})$ and $\nu_{d-1}$ the law of
$(\alpha_0,\ldots,\alpha_{d-2})$.  Assume that regular conditional laws admit
jointly measurable densities satisfying
\begin{align*}
 q_0(u\mid a)&\le K_0
 &&(du\otimes\nu_0(da))\text{-a.e.},\\
 q_{d-1}(v\mid b)&\le K_{d-1}
 &&(dv\otimes\nu_{d-1}(db))\text{-a.e.}.
\end{align*}
Then every
bounded interval $I\subset\R$ satisfies
\begin{equation}
 \Prob(\exists x\in I:P_\alpha(x)=0)
 \le C_{\mathrm{2ch}}|I|,
 \label{eq:two-chart-bound}
\end{equation}
where
\begin{equation}
 C_{\mathrm{2ch}}
 =\max\!\left\{
 \begin{aligned}
 &K_0\!\left(d+\frac{Rd(d-1)}2\right),\\
 &K_{d-1}\!\left(1+\frac{Rd(d-1)}2\right)
 \end{aligned}
 \right\}.
 \label{eq:two-chart-constant}
\end{equation}
\end{theorem}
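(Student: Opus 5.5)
Split the interval $I$ into $I\cap[-1,1]$ and $I\setminus[-1,1]$, which meets at most two components of $\R\setminus[-1,1]$. Bound the hitting probability of each piece by a constant times its length and add, using a union bound; since the pieces are disjoint with total length at most $|I|$, the maximum constant in \cref{eq:two-chart-constant} results. If the hitting event on $I$ is the union of the hitting events on the pieces, it suffices to show each piece has probability at most (its constant)$\times$(its length). A closed subinterval minus $[-1,1]$ may be half-open, so I would pass to closures, which changes neither lengths nor the bound.

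On $J=I\cap[-1,1]$, apply \cref{thm:conditional-bound} with $\Theta=[-1,1]$, $B=1$ and $K=K_0$. Then \cref{eq:conditional-unit} gives $\Prob(\exists x\in J:P_\alpha(x)=0)\le K_0(d+Rd(d-1)/2)|J|$ directly.

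On $J\subseteq[1,\infty)$ (the negative side is symmetric), pass to the reversed chart. For $x\neq0$, $P_\alpha(x)=0$ iff $Q(y)=0$, where $y=1/x$ and $Q(y)=y^dP_\alpha(1/y)=1+\alpha_{d-1}y+\cdots+\alpha_0y^d$. Solving $Q(y)=0$ for the coefficient $\alpha_{d-1}$ gives $\alpha_{d-1}=g_b(y)$ with
\[
 g_b(y)=-\frac{1+\sum_{j=0}^{d-2}b_jy^{d-j}}{y},
\]
where $b=(\alpha_0,\ldots,\alpha_{d-2})$. Conditioning on $b$, a root in $J$ means $\alpha_{d-1}\in g_b(J')$ with $J'=1/J\subseteq(0,1]$. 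The conditional density bound then gives probability at most $K_{d-1}\int_{J'}|g_b'(y)|\,dy$, since the image length is at most the total variation.

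Compute $g_b'(y)=y^{-2}-\sum_j b_j(d-j-1)y^{d-j-2}$. Then
\[
 \int_{J'}|g_b'|\,dy\le \int_{J'}y^{-2}\,dy+R\sum_{j=0}^{d-2}(d-j-1)|J'|,
\]
because $y\le1$. Here $\int_{J'}y^{-2}\,dy=|J|$ exactly, and $\sum_{j=0}^{d-2}(d-1-j)=d(d-1)/2$. Finally $|J'|=\int_J x^{-2}\,dx\le|J|$ since $x\ge1$. Averaging over $b$ with Tonelli gives $K_{d-1}(1+Rd(d-1)/2)|J|$, matching the second term of \cref{eq:two-chart-constant}.

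The main obstacle is measurability and the image-length step in the reversed chart. One must justify $\Prob(\alpha_{d-1}\in g_b(J')\mid b)\le K_{d-1}\Leb(g_b(J'))$ and $\Leb(g_b(J'))\le\int_{J'}|g_b'|$ for continuous $C^1$ $g_b$ on a compact subinterval of $(0,1]$. I would handle both as in \cref{thm:conditional-bound}: the image is a compact interval, and the regular conditional law integrates measurably in $b$. The point $x=0$ is not an issue since $J$ avoids it, and $y=0$ is excluded since $J$ is bounded.
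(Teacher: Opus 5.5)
Your proposal is correct and follows the paper's proof. The inner chart is the one-chart bound with $B=1$, and your map satisfies $g_b(1/x)=k_b(x)$, so your total-variation computation in $y$ is the paper's outer-chart estimate \cref{eq:outer-chart-derivative} rewritten after $y=1/x$. The one cosmetic difference is at the split points: your closed, overlapping pieces with a union bound (the degenerate inner piece is covered by the closure of an outer one) make the paper's argument that $\{P_\alpha(\pm1)=0\}$ is null unnecessary.
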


\paragraph{Proof sketch.}
On $|x|\le1$, eliminate $\alpha_0$ and use the derivative of $h_a$.  On
each exterior component, divide by $x^{d-1}$ and eliminate
$\alpha_{d-1}$; the resulting root graph has derivative at most
$1+Rd(d-1)/2$.  Splitting $I$ at $\pm1$ and summing proves the claim; the
split points are null under the conditional-density hypotheses.
\Cref{fig:two-chart} displays the two eliminations.

\begin{strip}
\noindent\begin{minipage}{\textwidth}
\vspace*{5pt}
\centering
\captionsetup{hypcap=false}
\begin{tikzpicture}[>=Latex,font=\footnotesize]
  \node[align=center,text width=4.6cm] at (-4.75,.78)
    {\textbf{outer chart} \quad $|x|\ge1$\\
     eliminate $\alpha_{d-1}$};
  \node[align=center,text width=4.2cm] at (0,.78)
    {\textbf{inner chart} \quad $|x|\le1$\\
     eliminate $\alpha_0=h_a(x)$};
  \node[align=center,text width=4.6cm] at (4.75,.78)
    {\textbf{outer chart} \quad $|x|\ge1$\\
     eliminate $\alpha_{d-1}$};

  \fill[gray!10] (-7,-.16) rectangle (-2.15,.16);
  \fill[blue!10] (-2.15,-.16) rectangle (2.15,.16);
  \fill[gray!10] (2.15,-.16) rectangle (7,.16);
  \draw[very thick,->] (-7,0)--(7.25,0) node[right] {$x$};
  \draw[thick] (-2.15,-.14)--(-2.15,.14)
    node[above=4pt] {$-1$};
  \draw[thick] (2.15,-.14)--(2.15,.14)
    node[above=4pt] {$1$};

  \draw[->,very thick,blue!65!black]
    (-5.35,-.27) to[out=-9,in=189] (-1.00,-.27);
  \draw[->,very thick,blue!65!black]
    (5.35,-.27) to[out=189,in=-9] (1.00,-.27);
  \node[blue!65!black,fill=white,inner sep=2pt] at (0,-.34)
    {$x\mapsto y=1/x$};
\end{tikzpicture}
\captionof{figure}{Two-chart proof of \cref{thm:two-chart}.  The inner chart
eliminates $\alpha_0$; after $y=1/x$, the exterior charts eliminate
$\alpha_{d-1}$.}
\label{fig:two-chart}
\vspace*{5pt}
\end{minipage}
\end{strip}

\paragraph{A dependent-law calculation.}
Let $d\ge2$, fix $0<\eta\le R$, let $Y$ have any law, and let
$b_0,\ldots,b_{d-1}$ be measurable functions with
$|b_j(Y)|\le R$ for $1\le j\le d-2$ and
$|b_0(Y)|,|b_{d-1}(Y)|\le R-\eta$.  For independent
$V_0,V_{d-1}\sim\operatorname{Unif}[-\eta,\eta]$, also independent of $Y$,
set
\[
 \begin{aligned}
 \alpha_0&=b_0(Y)+V_0,\\
 \alpha_j&=b_j(Y), &&1\le j\le d-2,\\
\alpha_{d-1}&=b_{d-1}(Y)+V_{d-1}.
\end{aligned}
\]
Conditioning may change the posterior law of $Y$, but the fresh noise in the
eliminated coordinate remains independent.  Hence each conditional law is a
mixture of translates of the uniform density and is bounded by
$(2\eta)^{-1}$.  For example, a version of the first density is
\[
 q_0(u\mid a)=
 \E\!\left[
 \frac{\ind\{|u-b_0(Y)|\le\eta\}}{2\eta}
 \,\middle|\, A_-=a\right]\le\frac1{2\eta},
\]
and the opposite conditioning has the analogous formula.  Therefore
$K_0=K_{d-1}=(2\eta)^{-1}$, and
\begin{equation}
 c_\mu(\R)
 \le \frac{1}{2\eta}
 \left(d+\frac{Rd(d-1)}2\right).
 \label{eq:dependent-two-chart-example}
\end{equation}
The latent law of $Y$ may be discrete or singular.  If the rounds are
independent, all discontinuities are contained among the roots of at most $m$
such degree-$d$ transition equations per round, and the remaining hypotheses
of \cref{cor:dispersion-interface} hold, then $K\le md$ and that corollary
gives, for $\varepsilon\ge T^{-1/2}$, the uniform boundary count
$\widetilde O\!\left((m\eta^{-1}d(1+Rd)+1)T\varepsilon\right)$ and hence the
usual $\widetilde O(\sqrt T)$ regret.

\paragraph{Comparison with the prior polynomial bound.}
Theorem~2 in \cite{balcan2026open}, quoted from the supplementary theorem of
\cite{balcan2020semibandit}, gives for a joint density bounded by $\kappa$ the
bound
\begin{equation}
 \kappa(2R)^{d-1}
 \left(d+\frac{Rd(d-1)}2\right)|I|.
 \label{eq:prior-polynomial}
\end{equation}
Corollary~\ref{cor:joint-density-recovery} gives a direct root-intensity proof
of this bound.  The affine-image hypothesis in the prior result and the
conditional-density hypothesis in \cref{thm:conditional-bound} cover
complementary coefficient models.

\section{Affine recovery and the dispersion interface}
\label{sec:affine}

Fixing a leading coefficient turns a homogeneous random linear combination
into an affine family.  The incidence formula has an exact affine counterpart.

For $f_0\in C^1(\Theta)$ and a nonvanishing
$F\in C^1(\Theta;\R^N)$, write
\[
 H_x^\circ=\{a\in\operatorname{int}K:
 f_0(x)+\langle a,F(x)\rangle=0\}
\]
and define
\begin{equation}
 \begin{aligned}
 \lambda_K^{\rm aff}(f_0,F;x)
 &=\frac{1}{\operatorname{vol}_N(K)\lVert F(x)\rVert_2}\\
 &\quad\times\int_{H_x^\circ}|f_0'(x)+\langle a,F'(x)\rangle|\,
   d\mathcal H^{N-1}(a).
 \end{aligned}
 \label{eq:affine-incidence-speed}
\end{equation}
The integral is zero when $H_x^\circ$ is empty.  Let
$\Lambda_K^{\rm aff}$ be its essential supremum, and let
$C_{K,A}^{\rm aff}(f_0,F;\Theta)$ denote the worst-case interval
root-hitting constant over $\calD_{K,A}$ for
$f_0(x)+\langle a,F(x)\rangle$.

\begin{theorem}[Exact affine incidence]
\label{thm:exact-affine}
Under the preceding assumptions,
\begin{equation}
 C_{K,A}^{\rm aff}(f_0,F;\Theta)
 =A\Lambda_K^{\rm aff}(f_0,F;\Theta).
 \label{eq:exact-affine}
\end{equation}
\end{theorem}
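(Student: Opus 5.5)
The proof will follow the two-sided template of \cref{thm:body-characterization}, replacing central hyperplane sections by the affine sections $H_x^\circ$. Introduce the affine incidence set
\[
 \{(a,x)\in\operatorname{int}K\times I:\ g(a,x)=0\},\qquad g(a,x)=f_0(x)+\langle a,F(x)\rangle .
\]
Since $\nabla_a g=F(x)\neq0$, this set is a $C^1$ graph-like $N$-manifold, locally parametrized by $x$ and a coordinate on the hyperplane $H_x^\circ$. The boundary $\partial K$ has Lebesgue measure zero because $K$ is convex, so only $\operatorname{int}K$ needs to be treated.

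\textbf{Upper bound.} Apply the area formula to the projection $\pi(a,x)=a$ restricted to the incidence manifold. The Jacobian factor, relative to the measure $d\mathcal H^{N-1}(a)\,dx$ on the fibres, is the normal speed of the moving hyperplane, which equals
\[
 \frac{|\partial_x g|}{\lVert\nabla_a g\rVert}
 =\frac{|f_0'(x)+\langle a,F'(x)\rangle|}{\lVert F(x)\rVert_2}.
\]
One can verify this by writing a point on $H_x$ as $a=-f_0(x)F(x)/\lVert F\rVert^2+w$ with $w\perp F(x)$, or by the coarea formula applied to $g(\cdot,x)$. Since the number of preimages is at least one on the hitting set, integrating gives
\[
 \operatorname{vol}_N(E^{\rm aff}(I))\le\operatorname{vol}_N(K)\int_I\lambda_K^{\rm aff}(f_0,F;x)\,dx .
\]
Multiplying by the density cap $A/\operatorname{vol}_N(K)$ yields $c_\mu\le A\Lambda_K^{\rm aff}$ for every $\mu\in\calD_{K,A}$.

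\textbf{Lower bound, first variation.} Next I would prove the affine first-variation identity
\[
 \frac{\operatorname{vol}_N(D_{x,h})}{h\operatorname{vol}_N(K)}\to\lambda_K^{\rm aff}(x)
 \quad\text{with}\quad
 D_{x,h}=\{a\in K: g(a,x)\,g(a,x+h)<0\}.
\]
By the intermediate value theorem each such $a$ has a root in $[x,x+h]$. To compute the volume, fix $a$'s component $w$ along $H_x$ and measure the normal offset $t$. The sign change occurs exactly when $t$ lies between $0$ and approximately $-h\,\partial_x g/\lVert F\rVert$, up to $o(h)$ uniformly on compacta of $\operatorname{int}K$. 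Dominated convergence over the section then gives the limit, at every $x$ where $F,f_0$ are differentiable with continuous derivative; this holds for all interior $x$ because the data are $C^1$.

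\textbf{Lower bound, extremal law.} Take density $A/\operatorname{vol}_N(K)$ on $D_{x,h}$ and a constant on $K\setminus D_{x,h}$. The constant is chosen so that the total mass is one; it is nonnegative and at most the cap once $\operatorname{vol}(D_{x,h})\le\operatorname{vol}(K)/A$, which holds for small $h$ since the volume is $O(h)$. This law hits $[x,x+h]$ with probability at least $A\operatorname{vol}(D_{x,h})/\operatorname{vol}(K)$. Hence $C^{\rm aff}_{K,A}\ge A\lambda_K^{\rm aff}(x)$ for every interior $x$, and taking the supremum, which dominates the essential supremum, gives equality. If $A=1$ the class consists only of the uniform law; in that case the bound is $\ge$ the first variation directly, since $D_{x,h}$ is contained in the hit set.

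\textbf{Main obstacle.} I expect the main difficulty to be the uniformity in the first-variation step near $\partial K$. There the affine hyperplane may graze the body or be nearly tangent, so $H_x^\circ$ varies non-smoothly in $x$. Unlike the central case, the hyperplane is not guaranteed to pass through the interior; moreover, $H_x^\circ$ may be empty while nearby sections are not. I would handle this by intersecting with $K_\varepsilon=\{a:\operatorname{dist}(a,\partial K)\ge\varepsilon\}$, getting the exact limit there and monotone convergence as $\varepsilon\downarrow0$ for the lower bound. The upper bound needs no limit, so the equality follows from the liminf for the lower bound. A secondary point is measurability and the injectivity-free form of the area formula; both are inherited from \cref{lem:incidence-area}.
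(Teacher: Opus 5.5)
Your proposal is correct and follows the paper's proof essentially step for step: the upper bound via area and coarea on the incidence hypersurface with fibre weight $|f_0'(x)+\langle a,F'(x)\rangle|/\lVert F(x)\rVert_2$, and the lower bound via the first variation of the sign-disagreement set $D_{x,h}$ combined with the capped extremal density. Your inner-parallel-body liminf device is a harmless variant of the paper's restriction to points with $H_x^\circ\neq\varnothing$ plus dominated convergence over the section. One small correction: the exact limit does not hold at every interior $x$, since it can be positive where $H_x$ supports $K$ along a face while $\lambda_K^{\rm aff}(x)=0$; the liminf inequality you settle for is all that is needed.
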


The formula is invariant under multiplying $(f_0,F)$ by a nonvanishing
$C^1$ scalar function.  Its proof is the same incidence projection and local
sign-disagreement calculation as \cref{thm:body-characterization}, now for
the moving interior affine sections $H_x^\circ$; see
\cref{app:exact-affine-proof}.

\begin{lemma}[A cube-slab bound]
\label{lem:cube-slab}
Let $a$ have a density at most $\kappa$ on $Q_R$, let
$u\in\R^N$ satisfy $\lVert u\rVert_2=1$, and let $b\in\R$.  For every $h\ge0$,
\begin{equation}
 \Prob\bigl(|\langle a,u\rangle-b|\le h\bigr)
 \le A\frac{\sqrt 2}{R}h.
 \label{eq:cube-slab}
\end{equation}
\end{lemma}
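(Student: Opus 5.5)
We reduce the statement to Ball's cube slicing theorem \cite{ball1986cube}. The density of $a$ is at most $\kappa$ on $Q_R$, so the probability of the slab event is at most $\kappa$ times the volume of the slab
\[
 S=\{y\in Q_R:\ |\langle y,u\rangle-b|\le h\}.
\]
Let $\phi(t)=\operatorname{vol}_{N-1}(Q_R\cap\{\langle y,u\rangle=t\})$ be the section function in the direction $u$. Since $\lVert u\rVert_2=1$, Fubini along $u$ gives
\[
 \operatorname{vol}_N(S)=\int_{b-h}^{b+h}\phi(t)\,dt\le 2h\sup_t\phi(t).
\]

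Next we bound every section, central or not. By Brunn's principle, $\phi^{1/(N-1)}$ is concave on its support because $Q_R$ is convex. Since $Q_R$ is origin-symmetric, $\phi$ is even, so its maximum is attained at $t=0$. (For $N=1$ the section is a point of $\mathcal H^0$-measure one, and we handle it directly below.) Ball's theorem states that every central hyperplane section of the unit-volume cube $[-\tfrac12,\tfrac12]^N$ has $(N-1)$-volume at most $\sqrt2$. Rescaling by the factor $2R$ multiplies $(N-1)$-volumes by $(2R)^{N-1}$, so
\[
 \phi(0)\le\sqrt2\,(2R)^{N-1}.
\]

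Combining the two steps,
\[
 \Prob\le \kappa\cdot 2h\cdot\sqrt2(2R)^{N-1}=\kappa(2R)^N\frac{\sqrt2}{R}h=A\frac{\sqrt2}{R}h,
\]
using $A=(2R)^N\kappa$ from \cref{eq:normalized-density}.

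For $N=1$ we have $u=\pm1$, and the event is an interval of length at most $2h$. Its probability is therefore at most $2h\kappa=A h/R\le A\sqrt2 h/R$.

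The main point that needs care is reducing off-center slabs to central sections through Brunn concavity together with symmetry. The rest is Fubini plus the scaling of Ball's constant. If $\phi$ fails to be continuous at the boundary of its support, this does no harm, because only the supremum of $\phi$ enters the bound.
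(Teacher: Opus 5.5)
Your proposal is correct and follows the paper's proof essentially step for step. Both arguments use Brunn concavity and central symmetry to reduce to central sections, apply Ball's $\sqrt2$ bound rescaled to $Q_R$, integrate the section function over the slab of width $2h$, multiply by $\kappa=A/(2R)^N$, and treat $N=1$ directly.
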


\begin{corollary}[An affine cube-slab bound]
\label{thm:affine}
Let
\[
 \phi_a(x)=f_0(x)+\langle a,F(x)\rangle,
 \qquad a\in[-R,R]^N,
\]
where $F$ is nonvanishing.  Define
\[
 g(x)=\frac{F(x)}{\lVert F(x)\rVert_2},
 \qquad
 c(x)=-\frac{f_0(x)}{\lVert F(x)\rVert_2},
\]
and suppose $g$ and $c$ are respectively $L_g$- and $L_c$-Lipschitz on
$\Theta$.  Then every $\mu\in\calD^N_{R,\kappa}$ and
$I\in\calI(\Theta)$ satisfy
\begin{equation}
 \Prob_{a\sim\mu}(\exists x\in I:\phi_a(x)=0)
 \le A\sqrt2\left(\sqrt N\,L_g+\frac{L_c}{R}\right)|I|.
 \label{eq:affine-bound}
\end{equation}
\end{corollary}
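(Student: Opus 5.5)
The plan is the classical enclosing-slab argument, with \cref{lem:cube-slab} supplying the probability bound. Fix $I=[s,t]\in\calI(\Theta)$ and put $h=|I|$. Dividing by $\lVert F(x)\rVert_2>0$ does not change the zero set, so
\[
 \phi_a(x)=0\iff\langle a,g(x)\rangle=c(x).
\]
The idea is to trap every root-producing coefficient vector in one slab of width $O(h)$ around the hyperplane attached to the left endpoint $s$. Then \cref{lem:cube-slab} bounds the probability of that slab by a multiple of $h$.

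First, suppose $x\in I$ is a root, so $\langle a,g(x)\rangle=c(x)$. Then
\[
 |\langle a,g(s)\rangle-c(s)|
 \le|\langle a,g(x)-g(s)\rangle|+|c(x)-c(s)|.
\]
Since $a\in Q_R$, we have $\lVert a\rVert_2\le R\sqrt N$. By Cauchy--Schwarz and the two Lipschitz hypotheses, the right-hand side is at most $(R\sqrt N L_g+L_c)h$. Hence the root event is contained in
\[
 \{|\langle a,u\rangle-b|\le h'\},
 \qquad u=g(s),\ b=c(s),\ h'=(R\sqrt N L_g+L_c)h.
\]
Here $u$ is a unit vector, as \cref{lem:cube-slab} requires.

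Applying \cref{lem:cube-slab} to this slab gives
\[
 \Prob\le A\frac{\sqrt2}{R}\,h'
 =A\sqrt2\Bigl(\sqrt N L_g+\frac{L_c}{R}\Bigr)|I|,
\]
which is exactly \cref{eq:affine-bound}. No measurability issue arises, because the argument only uses containment in a fixed Borel slab.

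The main obstacle is \cref{lem:cube-slab} itself, if it has to be proved rather than assumed. For that, one would bound the density of $\langle a,u\rangle$ for the uniform cube law using Ball's slicing theorem \cite{ball1986cube}: every central section of the unit cube $[-1/2,1/2]^N$ has area at most $\sqrt2$, and by Brunn's concavity this also bounds non-central sections. Rescaling to $Q_R$ gives a density of at most $\sqrt2(2R)^{N-1}\kappa=A\sqrt2/(2R)$ for $\langle a,u\rangle$. A slab of width $2h$ then has probability at most $A\sqrt2\,h/R$. In the main argument itself the only point to check is the constant $\sqrt N$ coming from $\lVert a\rVert_2\le R\sqrt N$, and this is precisely the loss that the paper says \cref{cor:cube-comparison} removes.
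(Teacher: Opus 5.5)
Your proposal is correct and is essentially the paper's proof: the paper also fixes a point $x_0\in I$ (you take the left endpoint), bounds $|\langle a,g(x_0)\rangle-c(x_0)|$ by $(R\sqrt N L_g+L_c)|I|$ via the triangle inequality and $\lVert a\rVert_2\le R\sqrt N$, and applies \cref{lem:cube-slab} with $u=g(x_0)$, $b=c(x_0)$. Your sketch of the slab lemma, using Ball's slicing theorem and Brunn's concavity with the rescaling to $Q_R$, matches the paper's argument as well.
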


\paragraph{Proof sketch.}
At a root, $\langle a,g(x_*)\rangle=c(x_*)$.  Lipschitz motion of both the
normal and offset places $a$ in a slab of half-width
$(R\sqrt N L_g+L_c)|I|$ around the equation at a fixed $x_0\in I$; the
cube-slab lemma gives \cref{eq:affine-bound}.

For the monic polynomial \cref{eq:monic-polynomial}, take
\begin{equation}
 f_0(x)=x^d,\qquad F(x)=(1,x,\ldots,x^{d-1}),\qquad N=d.
 \label{eq:monomial-feature}
\end{equation}
On the whole real line,
\begin{equation}
 \Lip\!\left(\frac{F}{\lVert F\rVert_2}\right)\le d-1,
 \qquad
 \Lip\!\left(\frac{x^d}{\lVert F\rVert_2}\right)\le2d-1.
 \label{eq:monomial-lipschitz}
\end{equation}
Separate estimates on $|x|\le1$ and $|x|\ge1$ give
\cref{eq:monomial-lipschitz}.  Substitution into \cref{eq:affine-bound} gives
the global estimate
\begin{equation}
 \begin{aligned}
 &\Prob(\exists x\in I:P_\alpha(x)=0)\\
 &\quad\le A\sqrt2\left(
      \sqrt d\,(d-1)+\frac{2d-1}{R}\right)|I|.
 \end{aligned}
 \label{eq:affine-monic-bound}
\end{equation}
The bound in \cref{eq:affine-monic-bound} is polynomial whenever $A$ is
polynomially controlled and $R$ is bounded below.
For joint-density laws, \cref{eq:prior-polynomial,eq:affine-monic-bound} have
different dependence on $d$ and $R$, so one may take their minimum;
\cref{thm:affine} additionally applies to arbitrary affine feature families.

\subsection{Dispersion interface}
\label{sec:dispersion}

Suppose round $t$ has at most $m$ transition equations and every such equation
obeys a common root-hitting bound $C|I|$.  For every fixed interval $I$,
let $H_I$ count the rounds for which some transition equation hits $I$.
Linearity of expectation and a union bound give
\begin{equation}
 \E H_I\le mCT|I|.
 \label{eq:expected-boundary-count}
\end{equation}

\begin{corollary}[One-dimensional dispersion interface]
\label{cor:dispersion-interface}
Let $u_1,\ldots,u_T$ be independent piecewise $L$-Lipschitz functions on a
compact interval $\Theta$ of length $W$.  Suppose each $u_t$ has at most $K$ discontinuities, all
contained among the roots of at most $m$ transition equations, and each
equation satisfies the common bound $C|I|$, where $K\ge1$.  If
 \[
 \begin{aligned}
 D(T,\varepsilon,\rho)=\#\{t:\;&u_t\text{ is not $L$-Lipschitz}\\
 &\text{on }\Theta\cap[\rho-\varepsilon,\rho+\varepsilon]\},
 \qquad \rho\in\Theta,
 \end{aligned}
 \]
then
\begin{equation}
 \begin{aligned}
 \E\max_\rho D(T,\varepsilon,\rho)
 &\le 2mCT\varepsilon\\
 &\quad+O\!\left(\sqrt{T\log(TK)}\right).
 \end{aligned}
 \label{eq:uniform-boundary-count}
\end{equation}
Consequently, for every $\varepsilon\ge T^{-1/2}$ this count is
$\widetilde O((mC+1)T\varepsilon)$.  Thus, when $m$, $C$, and $K$ have no dependence
on $T$ beyond suppressed polylogarithms, the sequence is $1/2$-dispersed.
Moreover, if $u_t:\Theta\to[0,1]$ and full-information feedback is available,
exponential weights on a grid of mesh at most $h$ satisfies
\begin{equation}
 \begin{aligned}
 \E\operatorname{Reg}_T
 &\le (L+2mC)Th
 +O\!\left(\sqrt{T\log(TK)}\right)\\
 &\quad+\sqrt{2T\log(W/h+2)}.
 \end{aligned}
 \label{eq:generic-grid-regret}
\end{equation}
Here $\operatorname{Reg}_T$ is regret to the best fixed parameter in
$\Theta$, with expectation over both the utility sequence and the forecaster.
In particular, $h=T^{-1/2}$ gives $\widetilde O((L+mC+1)\sqrt T)$ expected
regret whenever the displayed parameters have at most polylogarithmic
dependence on $T$.
\end{corollary}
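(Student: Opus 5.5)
The argument has two parts. First, a uniform-in-$\rho$ bound on the boundary count via a finite net of intervals and concentration. Second, a standard exponential-weights analysis on a grid, in which dispersion pays for discretization.

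\emph{Uniform boundary count.} For a fixed closed interval $I$, let $H_I=\sum_t \ind\{\text{some transition equation of round }t\text{ hits }I\}$. This is a sum of independent indicators, and \cref{eq:expected-boundary-count} gives $\E H_I\le mCT|I|$.

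Every window $\Theta\cap[\rho-\varepsilon,\rho+\varepsilon]$ is contained in one of finitely many closed intervals of length at most $2\varepsilon$. These are taken from a grid of windows with endpoints at spacing $\varepsilon/T$, say. Using spacing $\varepsilon/T$ rather than a coarser mesh keeps the enlargement within $O(\varepsilon/T)$, so the mean inflates only by $O(mC\varepsilon)$, which is absorbed. However, this brings $\log$ of roughly $WT/\varepsilon$ into play, not $\log(TK)$.

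To obtain $\log(TK)$ exactly, I would instead use the standard discontinuity-position argument from \cite{balcan2018dispersion}. Realized discontinuities number at most $TK$ in total. A window is only "non-Lipschitz" for round $t$ if it contains a discontinuity of $u_t$. So $\max_\rho D$ is determined by at most $O(TK)$ combinatorially distinct intervals, namely those with endpoints at realized discontinuities. The maximum is attained over windows whose left endpoint sits at a discontinuity.

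A VC-style symmetrization gives $\E\sup_I (H_I-\E H_I)=O(\sqrt{T\log(TK)})$. The class of intervals has VC dimension $2$, and the relevant number of dichotomies on $TK$ points is $O((TK)^2)$. Combined with $\E H_I\le 2mCT\varepsilon$ for each length-$2\varepsilon$ window, this yields \cref{eq:uniform-boundary-count}. The consequences then follow: for $\varepsilon\ge T^{-1/2}$ the additive term is $\widetilde O(T\varepsilon)$, which gives $1/2$-dispersion.

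\emph{Regret.} Let $\rho^*$ be the optimum and $\hat\rho$ the nearest grid point, with $|\rho^*-\hat\rho|\le h$. Then
\[
\sum_t\bigl(u_t(\rho^*)-u_t(\hat\rho)\bigr)\le LTh+D(T,h,\rho^*),
\]
because rounds that are Lipschitz on the window lose at most $Lh$, and the others lose at most $1$ since $u_t\in[0,1]$. Taking expectations with the uniform bound gives $(L+2mC)Th+O(\sqrt{T\log(TK)})$.

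Exponential weights on the at most $W/h+2$ grid points, with full-information feedback and utilities in $[0,1]$, has expected regret at most $\sqrt{2T\log(W/h+2)}$ against the best grid point. For the oblivious, independent sequence I would use Hedge with a tuned learning rate (the $\sqrt{(T/2)\log n}$ bound suffices). Summing the two contributions gives \cref{eq:generic-grid-regret}, and choosing $h=T^{-1/2}$ finishes the proof.

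\emph{Main obstacle.} The delicate step is the uniform concentration. The supremum is over a continuum of $\rho$, and the relevant intervals depend on the random discontinuities themselves. The plan is to symmetrize with a ghost sample, condition on the $\le 2TK$ realized discontinuity locations, and apply a union bound over $O((TK)^2)$ intervals together with Hoeffding. The conditioning must respect independence across rounds while the same round contributes up to $K$ points. I would handle this by treating each round's discontinuity set as a single sample and noting that interval-hitting of a set of at most $K$ points still induces at most $O((TK)^2)$ patterns.
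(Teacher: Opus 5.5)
Your overall route is the paper's: a union bound for each fixed window, the interval-class uniformization of \citet{balcan2020semibandit}, and the grid-plus-exponential-weights decomposition. The paper cites that uniformization as \cref{lem:dispersion-uniformization} rather than re-deriving it by symmetrization.

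One step, however, is stated for the wrong count. You assert $\E\sup_I(H_I-\E H_I)=O(\sqrt{T\log(TK)})$, but $H_I$ records hits by roots of the transition equations. The hypotheses bound only the discontinuities of $u_t$ by $K$, not the number of roots of those equations. On the doubled sample, the number of hitting patterns for $H_I$ is governed by the total number of transition-equation roots, which may far exceed $2TK$. So $\log(TK)$ does not follow for $H_I$.

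Your own pattern count in the last paragraph is over discontinuity sets, so apply the symmetrization to $D'_J=\#\{t:u_t\text{ has a discontinuity in }J\}$ instead. Let $J_\rho=\Theta\cap[\rho-\varepsilon,\rho+\varepsilon]$. Then:
\begin{itemize}
\item $D(T,\varepsilon,\rho)\le D'_{J_\rho}$;
\item the argument over at most $K$ points per round gives $\E\sup_\rho(D'_{J_\rho}-\E D'_{J_\rho})=O(\sqrt{T\log(TK)})$;
\item $H_I$ enters only through the mean, $\E D'_{J_\rho}\le\E H_{J_\rho}\le 2mCT\varepsilon$.
\end{itemize}
This is exactly the paper's split of $\E\max_\rho D$ into $\max_\rho\E D$ plus the uniformization term.

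Two smaller remarks. First, the net argument you discarded is in fact valid directly for $H_I$. Hoeffding plus a union bound over $O(WT/\varepsilon)$ slightly enlarged windows needs no root count at all. It gives $\log(WT/\varepsilon)$ in place of $\log(TK)$. That is enough for the $\widetilde O$ consequences, but not for \cref{eq:uniform-boundary-count} as displayed.

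Second, in the regret step $\rho^*$ depends on the realized sequence. Replace $D(T,h,\rho^*)$ by $\max_\rho D(T,h,\rho)$ before taking expectations, and use a near-maximizer if the supremum is not attained. With those adjustments this part coincides with the paper's proof.
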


\paragraph{Proof sketch.}
The fixed-interval expectation is at most $2mCT\varepsilon$ by a union bound.
The established uniformization theorem for independent piecewise-Lipschitz
functions contributes $O(\sqrt{T\log(TK)})$; when
$\varepsilon\ge T^{-1/2}$ this term is absorbed at the dispersion scale.  The
grid has at most $W/h+2$ experts.  Relative to a continuous comparator and
its nearest grid point, every round without a discontinuity in the intervening
$h$-neighborhood costs at most $Lh$; every remaining round costs at most one.
Apply \cref{eq:uniform-boundary-count} with $\varepsilon=h$ and add the
standard exponential-weights bound to obtain
\cref{eq:generic-grid-regret}.

In \cref{eq:expected-boundary-count,eq:uniform-boundary-count}, one may take
$C$ from \cref{cor:cube-comparison,thm:conditional-bound,thm:two-chart,thm:affine}
or from the density criteria in
\cref{thm:intensity,thm:root-coordinates}.

\section{Graph-learning applications}
\label{sec:applications}

\subsection{Cost-sensitive Gaussian-RBF graph SSL}
\label{sec:rbf-application}

Gaussian-RBF graphs are a standard family for graph-based semi-supervised
learning \cite{balcan2021ssl}.  Consider tuning their inverse bandwidth when
the downstream decision is cost-sensitive, a setting motivated by
class imbalance and unequal error costs \cite{wan2019cost}.  The analysis
combines the coefficient-body characterization with a boundary-count argument
to obtain dispersion and regret bounds.

Fix integers $n\ge2$ and $\ell\ge1$, a scale $D\ge1$, a parameter interval
$\Gamma=[0,B]$ with $B>0$, and $R,\kappa>0$.  On round $t$, an oblivious environment
fixes a vertex set $X_t=L_t\mathbin{\dot\cup}U_t$ with
$|X_t|\le n$, $|L_t|\ge\ell$, and both parts nonempty; binary labels
$y_t:X_t\to\{0,1\}$; and symmetric squared dissimilarities
\[
 d_t(a,b)\in[0,D],\qquad a\ne b.
\]
No discretization of the dissimilarities is imposed.  Parameter
$\gamma\in\Gamma$ defines the complete Gaussian-RBF graph
\begin{equation}
 w_{t,\gamma}(a,b)=\exp(-\gamma d_t(a,b)).
 \label{eq:rbf-kernel}
\end{equation}
Let $W_{t,\gamma}$ and $\mathsf D_{t,\gamma}$ be the adjacency and degree
matrices, and set
\[
 \begin{aligned}
 Q_{t,\gamma}&=\mathsf D_{UU,t,\gamma}-W_{UU,t,\gamma},
 &b_{t,\gamma}&=W_{UL,t,\gamma}y_{L,t},\\
 s_{U,t}(\gamma)&=Q_{t,\gamma}^{-1}b_{t,\gamma}.&&
 \end{aligned}
\]

Independently across rounds, a false-negative/false-positive cost pair
$(C_t^{\rm FN},C_t^{\rm FP})\in[0,R]^2$ is drawn from a joint density bounded
by $\kappa$, independently of the fixed base instances.  Given the partially
labeled graph and these costs, the configured algorithm predicts one at $v\in U_t$
exactly when
\begin{equation}
 C_t^{\rm FN}s_{t,v}(\gamma)
 \ge C_t^{\rm FP}\bigl(1-s_{t,v}(\gamma)\bigr),
 \label{eq:rbf-cost-sensitive-rule}
\end{equation}
with a fixed tie rule.  In the next display, write $\widehat y_{t,v}$ for the
prediction at $\gamma$.  After the labels on $U_t$ are revealed, its normalized
utility is
\begin{equation}
 \begin{aligned}
 u_t(\gamma)=1-\frac{1}{R|U_t|}\sum_{v\in U_t}\bigl[&
 C_t^{\rm FN}\ind\{y_t(v)=1,\widehat y_{t,v}=0\}\\
 {}+&C_t^{\rm FP}\ind\{y_t(v)=0,\widehat y_{t,v}=1\}
 \bigr].
 \end{aligned}
 \label{eq:rbf-utility}
\end{equation}
Thus $u_t(\gamma)\in[0,1]$.  The online protocol commits to $\gamma_t$ from
past feedback before round $t$; the current partially labeled graph and costs are then
used by the configured SSL algorithm, and full-information feedback reveals
the utility at every grid parameter after the held-out labels are observed.

Put
\begin{equation}
 A=(2R)^2\kappa,\qquad \omega=e^{-BD},\qquad
 K_n=2n\,n!\,n^n.
 \label{eq:rbf-scales}
\end{equation}
In particular, $\log K_n=O(n\log n)$.

\begin{theorem}[Cost-sensitive Gaussian-RBF graph SSL]
\label{thm:rbf-application}
Under the preceding assumptions, for every $v\in U_t$ and every interval
$I\subseteq\Gamma$,
\begin{equation}
 \begin{aligned}
 &\Prob\!\left(\exists\gamma\in I:
 C_t^{\rm FN}s_{t,v}(\gamma)
 =C_t^{\rm FP}(1-s_{t,v}(\gamma))\right)\\
 &\hspace{25mm}\le \frac{A(n-1)D}{\ell\omega}|I|.
 \end{aligned}
 \label{eq:rbf-root-hitting}
\end{equation}
The utility $u_t$ is piecewise constant with at most
$K_n$ discontinuities.  Consequently,
\begin{equation}
 \begin{aligned}
 \E\max_\rho D(T,\varepsilon,\rho)
 &\le \frac{2An(n-1)D}{\ell\omega}T\varepsilon\\
 &\quad+O\!\left(\sqrt{T\log(TK_n)}\right).
 \end{aligned}
 \label{eq:rbf-dispersion}
\end{equation}
In particular, the utilities are $1/2$-dispersed.  Under full-information
feedback, exponential weights on a grid of spacing $T^{-1/2}$ satisfies
\begin{equation}
 \begin{aligned}
 &\E\!\left[
 \max_{\gamma\in\Gamma}\sum_{t=1}^T u_t(\gamma)
  -\sum_{t=1}^T u_t(\gamma_t)\right]\\
 &\quad\le \frac{2An(n-1)D}{\ell\omega}\sqrt T
 +O\!\left(\sqrt{T\log(TK_n)}\right)\\
 &\qquad+O\!\left(\sqrt{T\log(B\sqrt T+2)}\right).
 \end{aligned}
 \label{eq:rbf-regret}
\end{equation}
Thus the expected regret is
$\widetilde O((An^2D e^{BD}/\ell+1)\sqrt T)$, where
$e^{BD}=\omega^{-1}$ records the conditioning of the graph family.
\end{theorem}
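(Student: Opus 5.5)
The plan has three parts: a local anti-concentration estimate through the coefficient-body formula, a zero count for the boundary complexity, and a direct application of \cref{cor:dispersion-interface}.

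\textbf{Local root-hitting bound.} I would view the transition equation as a homogeneous family in the cost pair $a=(C^{\rm FN},C^{\rm FP})\in Q_R$ with feature curve $F(\gamma)=(s_v(\gamma),-(1-s_v(\gamma)))$, and apply \cref{thm:body-characterization} with $K=Q_R$ and $N=2$. This applies only if $F$ is nonvanishing, and that is clear because its two coordinates cannot both vanish. Here $A=(2R)^2\kappa$ exactly matches \cref{eq:normalized-density}. For $N=2$ the section $Q_R\cap F^\perp$ is a segment through the origin, so \cref{eq:body-incidence-speed} can be computed by hand. Write $u=F/\|F\|$ and let $w$ be the unit direction of the section, with the section being the segment $\{tw:|t|\le T_u\}$. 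Then
\[
\lambda=\frac{1}{4R^2}\int_{-T_u}^{T_u}|t|\,|\langle w,G'\rangle|\,dt=\frac{T_u^2}{4R^2}\|G'\|\le\frac12\|G'\|,
\]
using $T_u\le\sqrt2R$. The projective speed of $(s,-(1-s))$ is $|s'|/(s^2+(1-s)^2)\le 2|s'|$, so $\lambda\le|s'_v(\gamma)|$. Rather than rely on those constants, I would aim simply for $\lambda\le|s_v'|$, which leaves me needing
\[
|s_v'(\gamma)|\le (n-1)D/(\ell\omega).
\]

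\textbf{Derivative estimate (main obstacle).} I would differentiate $Qs=b$ to get $Qs'=b'-Q's$. Since $w'=-d\,w$, the entries of $Q'$ and $b'$ are bounded by $D$ times the corresponding weights. The identity can be rewritten as $Q s'=r$ with
\[
r_v=\sum_{b\in X}d(v,b)\,w(v,b)\bigl(s_v-y_b\bigr),
\]
where $y_b=s_b$ for unlabeled vertices. Because $s\in[0,1]$ (a harmonic extension satisfies the maximum principle), this gives $|r_v|\le D\cdot\deg(v)$. The inverse of $Q$ is the hardest part. $Q$ is a grounded Laplacian that is diagonally dominant with excess $\deg_L(v)\ge \ell\omega$, since every weight is at least $e^{-BD}=\omega$. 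I would bound $\|Q^{-1}\|_{\infty\to\infty}\le 1/\min_v \deg_L(v)\le 1/(\ell\omega)$ by the standard bound for strictly diagonally dominant M-matrices. Then I would bound $\|r\|_\infty\le D(n-1)$ using $w\le1$ and $\deg\le n-1$. If the constant does not close, the fallback is a random-walk representation, in which $s'_v$ is the expected accumulated $r$ over the walk before it is absorbed at $L$, and each step is absorbed with probability at least $\ell\omega/(n-1)$.

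\textbf{Boundary count and regret.} Clearing the common denominator $\det Q$ by Cramer's rule turns each transition equation into an exponential polynomial in $\gamma$. Its frequencies are sums of at most $n$ dissimilarities, and there are at most about $n!\,n^n$ terms. \Cref{lem:exp-poly-zeros} then bounds its zeros by the number of terms minus one. Summing over the at most $n$ vertices, with a factor $2$ for ties or endpoints, gives at most $K_n$ discontinuities. The utility is piecewise constant, so $L=0$. Finally I would apply \cref{cor:dispersion-interface} with $m=n$ equations per round, $C=A(n-1)D/(\ell\omega)$, $W=B$, and $h=T^{-1/2}$; this yields \cref{eq:rbf-dispersion,eq:rbf-regret}. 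Independence across rounds holds because the cost pairs are independent and the base instances are oblivious.
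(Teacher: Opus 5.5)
Your proposal is correct and follows essentially the paper's route. The explicit two-dimensional section calculation gives $\lambda_{Q_R}\le|s_v'|$; your sign flip in $F$ replaces the paper's reflected cost vector $(C^{\rm FN},-C^{\rm FP})$. The diagonal-dominance bound $\lVert Q^{-1}\rVert_\infty\le(\ell\omega)^{-1}$ together with $\lVert Qs'\rVert_\infty\le(n-1)D$ gives the derivative estimate, and \cref{lem:exp-poly-zeros} plus \cref{cor:dispersion-interface} finish the argument.

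Two bookkeeping corrections. First, the factor $2$ in $K_n=2n\,n!\,n^n$ does not come from ties or endpoints. It comes from $H_v=(C^{\rm FN}+C^{\rm FP})N_v-C^{\rm FP}\Delta$ combining two expansions of up to $r!n^r$ terms each, so $H_v$ itself may have up to $2r!n^r$ terms rather than about $n!n^n$. Second, the case $H_v\equiv0$ must be handled by the tie rule, since \cref{lem:exp-poly-zeros} applies only to nonzero functions.
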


\paragraph{Proof sketch.}
Every RBF edge weight lies in $[\omega,1]$.  The grounded-Laplacian energy and
maximum principle give
\[
 Q_{t,\gamma}\succ0,\qquad
 \lVert Q_{t,\gamma}^{-1}\rVert_\infty\le(\ell\omega)^{-1},
 \qquad 0\le s_{t,v}(\gamma)\le1.
\]
Differentiating the harmonic equations then yields
\begin{equation}
 \lVert s'_{U,t}(\gamma)\rVert_\infty
 \le\frac{(n-1)D}{\ell\omega}.
 \label{eq:rbf-score-speed}
\end{equation}
For the smooth, nonvanishing two-dimensional feature curve
\[
 F_{t,v}(\gamma)=(s_{t,v}(\gamma),1-s_{t,v}(\gamma)),
\]
a direct section calculation gives
\begin{equation}
 \begin{aligned}
 \lambda_{Q_R}(F_{t,v};\gamma)
 &=\frac{|s'_{t,v}(\gamma)|}
 {4\max\{s_{t,v}(\gamma),1-s_{t,v}(\gamma)\}^2}\\
 &\le |s'_{t,v}(\gamma)|.
 \end{aligned}
 \label{eq:rbf-exact-incidence-speed}
\end{equation}
The reflected cost vector
$(C_t^{\rm FN},-C_t^{\rm FP})$ has density at most $\kappa$ in $Q_R$.
Applying \cref{thm:body-characterization} to
\cref{eq:rbf-exact-incidence-speed} proves
\cref{eq:rbf-root-hitting}.

For boundary complexity, let $r_t=|U_t|$.  Every entry of
$Q_{t,\gamma}$ and $b_{t,\gamma}$ is a sum of at most $n$ exponentials
$e^{-\beta\gamma}$ with $\beta\in[0,D]$.  Determinant expansion shows that
$\Delta_t(\gamma)=\det Q_{t,\gamma}$ and
$N_{t,v}(\gamma)=e_v^\top\operatorname{adj}(Q_{t,\gamma})b_{t,\gamma}$
each contain at most $r_t!n^{r_t}$ exponential terms before like exponents
are collected, while $\Delta_t(\gamma)>0$.  The prediction at $v$ can
therefore change only at a root of
\begin{equation}
 H_{t,v}(\gamma)
 =(C_t^{\rm FN}+C_t^{\rm FP})N_{t,v}(\gamma)
 -C_t^{\rm FP}\Delta_t(\gamma).
 \label{eq:rbf-transition-exponential-polynomial}
\end{equation}
After collection, \cref{lem:exp-poly-zeros} shows that a nonzero exponential
polynomial with $M$ distinct real exponents has at most $M-1$ distinct real
zeros.  Thus
\cref{eq:rbf-transition-exponential-polynomial} has fewer than
$2r_t!n^{r_t}$ roots unless it vanishes identically; in that case the fixed
tie rule makes the prediction constant.  Summing over vertices gives the
uniform bound $K_n$, and applying \cref{cor:dispersion-interface} gives
\cref{eq:rbf-dispersion}.  Grid approximation and the finite-expert
exponential-weights bound give \cref{eq:rbf-regret}.

Projective motion of $F_{t,v}$ bounds the probability of a root.  Expanding the
transition equation into exponentials bounds the number of roots.  The
expanded coefficient vector is a two-dimensional linear image of the cost pair
and is generally singular in higher ambient dimension.

\subsection{Common-offset polynomial-kernel graph SSL}
\label{sec:ssl-application}

Consider the polynomial graph kernel of \citet{balcan2021ssl} when every
similarity in an instance shares a calibration offset.  Such offsets arise when
a sensor or preprocessing pipeline adds the same bias to all pairwise scores.

Fix a kernel degree $q\ge1$, a graph-size bound $n\ge2$, and a parameter
interval $\Lambda=[0,B]$ with $B>0$.  On round $t$, an oblivious environment
specifies a vertex set $X_t=L_t\mathbin{\dot\cup}U_t$, where both parts are
nonempty and $|X_t|\le n$, binary labels $y_t:X_t\to\{0,1\}$, and symmetric
latent similarities
\[
 s_t(a,b)\in[s_-,s_+] \qquad(a\ne b),
\]
where $0<\eta<s_-\le s_+<\infty$.  Independently across rounds, a scalar
calibration offset $V_t$ is drawn from $[-\eta,\eta]$ with a Lebesgue density
bounded by $\kappa$.  The observed similarities are
\[
 \widetilde s_t(a,b)=s_t(a,b)+V_t,
\]
and parameter $\lambda\in\Lambda$ produces the complete weighted graph
\begin{equation}
 w_{t,\lambda}(a,b)
 =\bigl(\widetilde s_t(a,b)+\lambda\bigr)^q.
 \label{eq:ssl-kernel}
\end{equation}
All weights are positive.  Let $W_{t,\lambda}$ and $\mathsf D_{t,\lambda}$
be its adjacency and degree matrices, and put
\begin{equation}
 \begin{aligned}
 Q_{t,\lambda}
 &=\mathsf D_{UU,t,\lambda}-W_{UU,t,\lambda},\\
 b_{t,\lambda}&=W_{UL,t,\lambda}y_{L,t},\\
 f_{U,t}(\lambda)&=Q_{t,\lambda}^{-1}b_{t,\lambda}.
 \end{aligned}
 \label{eq:ssl-harmonic-solution}
\end{equation}
This is the harmonic extension minimizing the weighted quadratic energy with
the labels on $L_t$ fixed.  Vertex $a\in U_t$ is assigned label one when
$f_{U,t}(\lambda)_a\ge1/2$, with this fixed rule also resolving ties.  The
utility $u_t(\lambda)\in[0,1]$ is the fraction of correctly classified
vertices in $U_t$.

The online forecaster chooses $\lambda_t$ using only past feedback.  The
current observed graph is then passed to the configured classifier; after the
labels on $U_t$ are revealed, full-information feedback provides the utility
at every grid parameter.  In particular, the forecaster does not observe the
current offset before committing to $\lambda_t$.

For the transition equations, write $r_t=|U_t|$ and use
$\gamma=\lambda+V_t$.  Define $Q_t(\gamma)$ and $b_t(\gamma)$ by replacing
each edge weight in \cref{eq:ssl-harmonic-solution} by
$(s_t(a,b)+\gamma)^q$, and for $a\in U_t$ set
\begin{equation}
 \psi_{t,a}(\gamma)
 =2e_a^\top\operatorname{adj}(Q_t(\gamma))b_t(\gamma)
  -\det Q_t(\gamma).
 \label{eq:ssl-transition-polynomial}
\end{equation}

\begin{theorem}[Common-offset polynomial-kernel SSL]
\label{thm:ssl-application}
Under the preceding assumptions, every nonzero polynomial
$\psi_{t,a}$ has degree at most $qr_t$.  For each such polynomial and every
interval $I\subseteq\Lambda$,
\begin{equation}
 \Prob\bigl(\exists\lambda\in I:
       \psi_{t,a}(\lambda+V_t)=0\bigr)
 \le qr_t\kappa|I|.
 \label{eq:ssl-root-hitting}
\end{equation}
The utility $u_t$ is piecewise constant with at most
$qr_t^2\le qn^2$ discontinuities, all contained among the roots of at most
$r_t$ nonzero transition polynomials from
\cref{eq:ssl-transition-polynomial}.  Hence
\begin{equation}
 \begin{aligned}
 \E\max_\rho D(T,\varepsilon,\rho)
 &\le 2qn^2\kappa T\varepsilon\\
 &\quad+O\!\left(\sqrt{T\log(Tqn^2)}\right).
 \end{aligned}
 \label{eq:ssl-dispersion}
\end{equation}
In particular, the utilities are $1/2$-dispersed.

Under full-information feedback, exponential weights on a grid of spacing
$T^{-1/2}$ produces parameters $\lambda_1,\ldots,\lambda_T$ satisfying
\begin{equation}
 \begin{aligned}
 &\E\!\left[
   \max_{\lambda\in\Lambda}\sum_{t=1}^T u_t(\lambda)
   -\sum_{t=1}^T u_t(\lambda_t)
 \right]\\
 &\quad\le 2qn^2\kappa\sqrt T\\
 &\quad+O\!\left(\sqrt{T\log(Tqn^2)}\right)\\
 &\quad+O\!\left(\sqrt{T\log(B\sqrt T+2)}\right).
 \end{aligned}
 \label{eq:ssl-regret}
\end{equation}
Thus the expected regret is
$\widetilde O((qn^2\kappa+1)\sqrt T)$, with polynomial dependence on all
displayed problem parameters.
\end{theorem}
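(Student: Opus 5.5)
We treat the four claims of the theorem in turn: the degree bound for $\psi_{t,a}$, the hitting bound via rigid translation, the discontinuity count, and finally dispersion and regret via \cref{cor:dispersion-interface}.

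\textbf{Degree bound.} Each edge weight $(s_t(a,b)+\gamma)^q$ is a polynomial of degree $q$ in $\gamma$. Every entry of $Q_t(\gamma)$ (an $r_t\times r_t$ matrix) and of $b_t(\gamma)$ is therefore a polynomial of degree at most $q$. Consequently:
\begin{itemize}[leftmargin=*]
\item $\det Q_t(\gamma)$ has degree at most $qr_t$;
\item each entry of $\operatorname{adj}(Q_t)$ has degree at most $q(r_t-1)$, so $e_a^\top\operatorname{adj}(Q_t)b_t$ has degree at most $qr_t$.
\end{itemize}
Hence $\psi_{t,a}$ has degree at most $qr_t$.

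\textbf{Prediction rule as a sign condition.} Since $\gamma=\lambda+V_t\ge s_-+\lambda-\eta>0$ on the relevant range, all weights are positive. Because $L_t$ is nonempty and the graph is complete, the grounded Laplacian $Q_t(\gamma)$ is positive definite, so $\det Q_t>0$. Thus $f_{U,t}(\lambda)_a\ge 1/2$ holds exactly when $\psi_{t,a}(\lambda+V_t)\ge0$. It follows that the label of $a$ can change only at a root of $\psi_{t,a}(\cdot+V_t)$.

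\textbf{Root hitting by rigid translation.} This is the key step, and it bypasses the (singular) coefficient law entirely. The polynomial $\psi_{t,a}$ is deterministic, since it depends only on the oblivious instance. If it is nonzero, it has distinct real roots $z_1,\dots,z_m$ with $m\le qr_t$. The event $\exists\lambda\in I:\psi_{t,a}(\lambda+V_t)=0$ is the event $V_t\in\bigcup_k(z_k-I)$. A union bound together with the density bound $\kappa$ on $V_t$ gives probability at most $m\kappa|I|\le qr_t\kappa|I|$. Equivalently, in terms of \cref{thm:root-coordinates}, the ordered roots in $\lambda$ are $z_k-V_t$, each with density at most $\kappa$.

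\textbf{Discontinuity count.}
\begin{itemize}[leftmargin=*]
\item If $\psi_{t,a}\equiv0$, the fixed rule keeps the label of $a$ constant.
\item Otherwise the label of $a$ flips only at the at most $qr_t$ roots.
\end{itemize}
Summing over the $r_t$ unlabeled vertices gives at most $qr_t^2\le qn^2$ discontinuities. Between discontinuities $u_t$ is piecewise constant, hence $0$-Lipschitz.

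\textbf{Dispersion and regret.} The rounds are independent because the $V_t$ are independent and the environment is oblivious. We apply \cref{cor:dispersion-interface} with:
\begin{itemize}[leftmargin=*]
\item $m=r_t\le n$ transition equations per round;
\item per-equation constant $C=qn\kappa$, using $r_t\le n$;
\item $K=qn^2$ discontinuities and Lipschitz constant $L=0$.
\end{itemize}
Then $2mC\varepsilon T\le 2qn^2\kappa T\varepsilon$, which gives \cref{eq:ssl-dispersion}. Setting $h=T^{-1/2}$ in \cref{eq:generic-grid-regret} with $W=B$ gives \cref{eq:ssl-regret}.

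The main subtlety is justifying that $\psi_{t,a}$ is deterministic and that a single random translation controls all roots simultaneously. This holds because the offset enters every weight identically through $\gamma=\lambda+V_t$. The induced ambient coefficient law is singular, so Kac--Rice-type bounds (\cref{prop:conditional-rice,prop:joint-rice}) would fail here, while the ordered-root criterion does not.
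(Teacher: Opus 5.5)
Your proposal is correct and follows the paper's proof essentially step for step: Cramer's rule with $\det Q_t>0$ turns each label into the sign of $\psi_{t,a}$, the degree count is $q(r_t-1)+q$, the union bound runs over the translated roots $z_k-V_t$, and \cref{cor:dispersion-interface} is applied with $m\le n$, $C=qn\kappa$, $K=qn^2$, $L=0$, $W=B$. One small slip: the positivity step should read $s_t(a,b)+\gamma\ge s_-+\lambda-\eta>0$ rather than $\gamma\ge s_-+\lambda-\eta>0$, since $\gamma=\lambda+V_t$ itself can be negative (e.g.\ $\lambda=0$, $V_t=-\eta$).
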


\paragraph{Proof sketch.}
Positivity of the complete graph gives $Q_t(\gamma)\succ0$ throughout the
relevant range of $\gamma$.  Consequently,
\[
 2f_{U,t}(\lambda)_a-1
 =\frac{\psi_{t,a}(\lambda+V_t)}
        {\det Q_t(\lambda+V_t)}.
\]
Every entry of $Q_t(\gamma)$ and $b_t(\gamma)$ has degree at most $q$,
so \cref{eq:ssl-transition-polynomial} has degree at most $qr_t$.
If it vanishes identically, the fixed tie rule makes the corresponding
prediction constant.  Otherwise, let its distinct real roots be
$z_1<\cdots<z_M$, where $M\le qr_t$.  The roots in the online parameter are
exactly
\[
 z_1-V_t,\ldots,z_M-V_t.
\]
Viewed on the whole real line, the $j$th ordered root is $z_j-V_t$; its law is
a translate of the reflected offset law and has density at most $\kappa$.
Restricting the hitting event to $I\subseteq[0,B]$ and taking a union bound
proves
\cref{eq:ssl-root-hitting}; summing over at most $r_t$ vertices gives the
root-hitting coefficient $qn^2\kappa$ for the round.  The fixed latent
instances and independent offsets make the utilities independent, so
\cref{cor:dispersion-interface} gives \cref{eq:ssl-dispersion}.  Finally, the
nearest grid point to the best continuous parameter differs from it only on
rounds having a discontinuity within distance $T^{-1/2}$.  Combining
\cref{eq:ssl-dispersion} with the finite-expert exponential-weights bound gives
\cref{eq:ssl-regret}.

\paragraph{Singularity of the induced law.}
Because the same scalar offset is added to every edge, the observed similarity
vector lies on the affine line
$(s_t(a,b))_{\{a,b\}}+V_t\mathbf 1$; for a multi-edge graph, its law is
singular in similarity space.  The same one-dimensional structure appears in
coefficient space.  After expansion and monic normalization, the nonleading
coefficients of a realized transition polynomial of degree $m\ge2$ form a
polynomial image of $V_t$ in $\R^m$, so their law is singular.  Its ordered
roots remain $z_j-V_t$, and each has density at most $\kappa$.

\section{Conclusion}
\label{sec:conclusion}

The coefficient-body formula weights projective velocity by central sections
of the support body.  For cube-supported bounded-density laws, it is equivalent
up to universal constants to $A L_{\mathrm P}$ and removes the previous
$\sqrt N$ factor.  For general polynomial coefficient laws, bounded densities
of the ordered-root pushforwards characterize finite and polynomial root
constants up to the factor $d$.  The conditional and joint area formulas
express this condition in coefficient coordinates.

For cost-sensitive Gaussian-RBF graph SSL, projective incidence controls each
harmonic threshold and yields
$\widetilde O((An^2D e^{BD}/\ell+1)\sqrt T)$ expected regret.  For the
common-offset polynomial kernel, rigid ordered-root translation yields
$\widetilde O((qn^2\kappa+1)\sqrt T)$ regret even when the induced law is
singular in the ambient coefficient space.

\bibliographystyle{plainnat}
\bibliography{references}

\clearpage
\appendix
\onecolumn
\thispagestyle{plain}

\begin{center}
{\LARGE\bfseries Appendix\par}
\vspace{0.5em}
{\large Sharp Root Anti-Concentration via Projective Incidence and Ordered Root Laws\par}
\end{center}
\vspace{0.5em}
\hrule
\vspace{1.25em}

\section{Proof of the coefficient-body characterization}
\label{app:projective-proofs}

\subsection{Incidence area and first variation}
\label{app:incidence-area}

\begin{proof}[Proof of \cref{lem:incidence-area}]
Endpoints of $I$ and $\partial K$ affect neither side.  On
$\operatorname{int}K\times\operatorname{int}I$, consider
\[
 M_I=\{(a,x):q(a,x)=0\},
 \qquad q(a,x)=\langle a,F(x)\rangle.
\]
This is a regular $N$-dimensional hypersurface because
\[
 \nabla q(a,x)=(F(x),\langle a,F'(x)\rangle)
\]
and $F(x)\ne0$.  Let $\pi(a,x)=a$ and $s(a,x)=x$.  A unit normal to
$M_I$ is
\[
 \nu(a,x)=
 \frac{(F(x),\langle a,F'(x)\rangle)}
 {\sqrt{\lVert F(x)\rVert_2^2+\langle a,F'(x)\rangle^2}}.
\]
The $N$-Jacobian of the projection $\pi|_{M_I}$ is the absolute last
coordinate of this normal:
\[
 J_N^{M_I}\pi
 =\frac{|\langle a,F'(x)\rangle|}
 {\sqrt{\lVert F(x)\rVert_2^2+\langle a,F'(x)\rangle^2}}.
\]
Likewise,
\[
 J_1^{M_I}s=\lVert\nabla_{M_I}s\rVert_2
 =\frac{\lVert F(x)\rVert_2}
 {\sqrt{\lVert F(x)\rVert_2^2+\langle a,F'(x)\rangle^2}}.
\]
The area and coarea formulas used below are in
\citet[Section~3.2]{federer1969geometric}.  The area formula for
$\pi|_{M_I}$ counts the number of incidences above
almost every $a$, whereas $E_F(I)$ records only whether an incidence exists.
The coarea formula for $s|_{M_I}$ therefore gives
\begin{align}
 \operatorname{vol}_N(E_F(I))
 &\le\int_{M_I}J_N^{M_I}\pi\,d\mathcal H^N\notag\\
 &=\int_I\int_{K\cap F(x)^\perp}
   \frac{|\langle a,F'(x)\rangle|}{\lVert F(x)\rVert_2}
   \,d\mathcal H^{N-1}(a)\,dx.
 \label{eq:incidence-area-proof}
\end{align}
This is \cref{eq:incidence-area-bound}.

It remains to prove the first variation.  Fix an interior $x$, put
$u=F(x)$, and write
\[
 u_h=F(x+h)=u+hw_h,\qquad w_h\longrightarrow w:=F'(x).
\]
Set
\[
 D_h=\{a\in K:\langle a,u\rangle\langle a,u_h\rangle<0\},
 \qquad H=u^\perp.
\]
Apply coarea to $a\mapsto\langle a,u\rangle$, substitute $t=hs$, and
parameterize the level set $\langle a,u\rangle=hs$ by
$a=z+hsu/\lVert u\rVert_2^2$ with $z\in H$.  This yields
\begin{align}
 \frac{\operatorname{vol}_N(D_h)}{|h|}
 =\frac1{\lVert u\rVert_2}\int_{\R}\int_H
 &\ind_K\!\left(z+\frac{hs}{\lVert u\rVert_2^2}u\right)\notag\\[-1mm]
 {}\times&
 \ind\!\left\{
 s\left(s+\left\langle
 z+\frac{hs}{\lVert u\rVert_2^2}u,w_h
 \right\rangle\right)<0\right\}
 d\mathcal H^{N-1}(z)\,ds.
 \label{eq:first-variation-coarea}
\end{align}
Because $K$ and $(w_h)$ are bounded, the second indicator can be nonzero
only for $|s|$ below a constant independent of small $h$; the first confines
$z$ to a fixed bounded set.  The relative boundary of the
$(N-1)$-dimensional convex body $K\cap H$ has
$\mathcal H^{N-1}$-measure zero.  Dominated convergence in
\cref{eq:first-variation-coarea} now gives
\begin{align*}
 \lim_{h\to0}\frac{\operatorname{vol}_N(D_h)}{|h|}
 &=\frac1{\lVert u\rVert_2}
 \int_{K\cap H}\int_\R
 \ind\{s(s+\langle z,w\rangle)<0\}\,ds\,
 d\mathcal H^{N-1}(z)\\
 &=\frac1{\lVert u\rVert_2}
 \int_{K\cap u^\perp}|\langle z,w\rangle|\,
 d\mathcal H^{N-1}(z),
\end{align*}
because the inner interval has length $|\langle z,w\rangle|$.
Division by $\operatorname{vol}_N(K)$ proves
\cref{eq:incidence-first-variation}.
\end{proof}

\subsection{Exact worst-case density cap}
\label{app:body-characterization}

\begin{proof}[Proof of \cref{thm:body-characterization}]
For $\mu\in\calD_{K,A}$ and $I\in\calI(\Theta)$,
\begin{align*}
 \Prob_{\mu}(\exists x\in I:\langle a,F(x)\rangle=0)
 &\le\frac{A}{\operatorname{vol}_N(K)}
       \operatorname{vol}_N(E_F(I))\\
 &\le A\int_I\lambda_K(F;x)\,dx\\
 &\le A\Lambda_K(F;\Theta)|I|.
\end{align*}
This proves $C_{K,A}\le A\Lambda_K$.

For the reverse inequality, choose a point $x\in\Theta^\circ$ at which
\cref{eq:incidence-first-variation} holds.  Let
\[
 D_h=\{a\in K:
   \langle a,F(x)\rangle\langle a,F(x+h)\rangle<0\},
 \qquad
 q_h=\frac{\operatorname{vol}_N(D_h)}{\operatorname{vol}_N(K)}.
\]
For small enough $h>0$, $q_h\le1/A$.  Define a density relative to the
uniform law on $K$ by
\begin{equation}
 r_h(a)=A\ind_{D_h}(a)
       +\frac{1-Aq_h}{1-q_h}\ind_{K\setminus D_h}(a).
 \label{eq:capped-extremal-density}
\end{equation}
The second coefficient lies in $[0,1]$, so $0\le r_h\le A$, and
$\int r_h\,d\operatorname{Unif}(K)=1$.  The induced law belongs to
$\calD_{K,A}$.  Every $a\in D_h$, apart from two null endpoint hyperplanes,
has a root in $[x,x+h]$ by continuity.  Consequently,
\[
 C_{K,A}(F;\Theta)
 \ge\lim_{h\downarrow0}\frac{Aq_h}{h}
 =A\lambda_K(F;x).
\]
Taking points whose incidence speeds approach the essential supremum proves
the lower bound.  If the essential supremum is infinite, the same argument
gives arbitrarily large hitting ratios.

Because $C_{K,A}$ is the supremum jointly over admissible laws and intervals,
the law in \cref{eq:capped-extremal-density} may depend on $x$ and $h$.
The equality need not hold under additional restrictions such as
log-concavity, isotropy, or symmetry.

If $F(x_*)=0$, every coefficient vector has a root at $x_*$.  Intervals
containing $x_*$ can have arbitrarily small positive length, so the hitting
constant is infinite.
\end{proof}

\subsection{Dimension-free comparison for the cube}
\label{app:projective-characterization-proof}

\begin{proof}[Proof of \cref{cor:cube-comparison}]
The case $N=1$ is immediate: a nonzero homogeneous scalar equation has no
root, and real projective space has one point.  Suppose $N\ge2$, put
$G=F/\lVert F\rVert_2$, and fix an $x$ with $G'(x)\ne0$.  Let
$e=G'(x)/\lVert G'(x)\rVert_2$.  If $X$ is uniform on the isotropic cube
$[-\sqrt3,\sqrt3]^N$, the pair
\[
 (\langle X,G(x)\rangle,\langle X,e\rangle)
\]
has an isotropic log-concave density on $\R^2$.  Let $q_x$ denote its
upper-semicontinuous log-concave representative, equivalently the fiber-density
representative supplied by coarea on the interior of its support.  The coarea formula
and scalar invariance under the rescaling from $Q_R$ give
\begin{equation}
 \lambda_{Q_R}(F;x)
 =\lVert G'(x)\rVert_2
   \int_{\R}|y|q_x(0,y)\,dy.
 \label{eq:cube-speed-marginal}
\end{equation}

There are universal constants $a_0,b_0,a_1,b_1>0$ such that every isotropic
log-concave density $q$ on $\R^2$ satisfies
\[
 q(0,y)\ge a_0\quad (|y|\le b_0),
 \qquad
 q(0,y)\le a_1e^{-b_1|y|}\quad (y\in\R).
\]
These fixed-dimensional bounds follow, for example, from
\citet[Lemma~2(b,c)]{balcan2013linear}; see also
\citet{lovasz2007geometry}.  Hence
\begin{equation}
 a_0b_0^2\lVert G'(x)\rVert_2
 \le\lambda_{Q_R}(F;x)
 \le\frac{2a_1}{b_1^2}\lVert G'(x)\rVert_2.
 \label{eq:cube-speed-comparison}
\end{equation}
The same inequalities hold when $G'(x)=0$.

The quotient map from the unit sphere to real projective space is a local
isometry.  Therefore the metric derivative of $x\mapsto[F(x)]$ is
$\lVert G'(x)\rVert_2$, and for a $C^1$ curve on an interval,
\[
 L_{\mathrm P}(F;\Theta)
 =\operatorname*{ess\,sup}_{x\in\Theta^\circ}
   \lVert G'(x)\rVert_2.
\]
Take essential suprema in \cref{eq:cube-speed-comparison} and apply
\cref{thm:body-characterization} with $K=Q_R$.  This proves
\cref{eq:projective-characterization} with
$c_{\mathrm{cube}}=a_0b_0^2$ and
$C_{\mathrm{cube}}=2a_1/b_1^2$.
\end{proof}

\subsection{Compact and scaling examples}
\label{app:compact-bound-proof}

\begin{proof}[Proof of \cref{cor:compact-dichotomy}]
Compactness gives $m=\inf_J\lVert F\rVert_2>0$.  For
$G=F/\lVert F\rVert_2$,
\[
 G'(x)=\frac{(\mathrm{Id}-G(x)G(x)^\top)F'(x)}
             {\lVert F(x)\rVert_2},
\]
so $L_{\mathrm{P}}(F;J)\le\sup_J\lVert F'\rVert_2/m$.
Apply \cref{cor:cube-comparison}.
\end{proof}

For the calculation in \cref{eq:scaled-exact-sharpness}, the active
two-dimensional coordinates of $F_{\delta,N}(x)$ are $(1,x/\delta)$.
Writing $s=x/\delta$, the section of $Q_R$ perpendicular to this vector can
be parameterized in the first two coordinates by $(-sy,y)$, where
$|y|\le R/\max\{1,|s|\}$.  The factor
$\sqrt{1+s^2}$ in the section measure cancels
$\lVert F_{\delta,N}(x)\rVert_2$.  The remaining coordinates contribute
$(2R)^{N-2}$, and hence
\[
 \lambda_{Q_R}(F_{\delta,N};x)
 =\frac{1}{4\delta\max\{1,(x/\delta)^2\}}.
\]
Its maximum on $[-1,1]$ is $1/(4\delta)$.

\subsection{The cube-slab estimate used for affine families}
\label{app:cube-slab-proof}

\begin{proof}[Proof of \cref{lem:cube-slab}]
For $t\in\R$, let
\[
 S_t=Q_R\cap\{x:\langle x,u\rangle=t\}.
\]
Brunn's concavity theorem and the central symmetry of $Q_R$ imply that
$\operatorname{vol}_{N-1}(S_t)$ is maximized at $t=0$.  Ball's cube-slicing
theorem \cite{ball1986cube}, rescaled from the unit cube to $Q_R$, gives
\[
 \operatorname{vol}_{N-1}(S_t)
 \le\sqrt2\,(2R)^{N-1}.
\]
The coarea formula bounds the slab volume by
$2h\sqrt2\,(2R)^{N-1}$.  Multiplication by
$\kappa=A/(2R)^N$ proves \cref{eq:cube-slab}; the case $N=1$ is immediate.
\end{proof}

\section{Proofs of the arbitrary-law polynomial criteria}
\label{app:arbitrary-law}

\subsection{Borel root selectors and the expected root measure}
\label{app:root-selectors}

\begin{lemma}[Borel root selectors]
\label{lem:root-kernel-measurable}
There are Borel maps $r_1,\ldots,r_d:\R^d\to\mathbb C$ that enumerate the
roots of $P_\alpha$ with algebraic multiplicity.  For every Borel
$B\subseteq\Theta$, define
\[
 \xi_\alpha(B)=
 \sum_{j=1}^d
 \ind\!\left\{r_j(\alpha)\in B\cap\R,
        \ r_j(\alpha)\ne r_i(\alpha)\ \text{for all }i<j\right\}
 .
\]
Then $\xi$ is a measurable finite point-measure kernel and
$Z_\alpha(B)=\xi_\alpha(B)$ is measurable in $\alpha$ for every Borel $B$.
\end{lemma}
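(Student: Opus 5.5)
We will construct the selectors explicitly from the coefficients, so that no abstract measurable-selection theorem is needed, and then read off measurability of $\xi$ and $Z$ from the construction.

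Order $\mathbb C$ lexicographically by $(\operatorname{Re},\operatorname{Im})$. Let $s:\mathbb C^d\to\mathbb C^d$ sort a $d$-tuple in this order. The map $s$ is Borel: it is piecewise a coordinate permutation, and the pieces are defined by finitely many Borel comparisons. Let $V:\mathbb C^d\to\mathbb C^d$ be the Vi\`ete map sending roots to coefficients of the monic polynomial. $V$ is continuous, and by the fundamental theorem of algebra it is surjective onto every monic coefficient vector. Its restriction to the sorted set $S=s(\mathbb C^d)$ is a bijection onto its image, because a multiset of roots determines its sorted tuple uniquely. The set $S$ is Borel, in fact a finite union of relatively closed/open pieces, and the restriction $V|_S$ is continuous and injective.

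By the Lusin--Souslin theorem, a continuous injective image of a Borel set is Borel and the inverse is Borel. Hence $r=(V|_S)^{-1}$, restricted to real coefficient vectors, is a Borel map $\R^d\to\mathbb C^d$. Its components $r_1,\ldots,r_d$ enumerate the roots with multiplicity. If one prefers to avoid Lusin--Souslin, there is an alternative. Sorted roots depend continuously on coefficients on each stratum of fixed multiplicity pattern and fixed real/nonreal configuration. These strata are Borel, being cut out by vanishing and sign conditions on subresultants and discriminants. One then pastes the continuous maps across the strata. We will use Lusin--Souslin as the primary route.

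For measurability of $\xi$, fix a Borel $B\subseteq\Theta$. Each summand in $\xi_\alpha(B)$ is the indicator of a set of the form
\[
\{\alpha: r_j(\alpha)\in B\cap\R\}\cap\bigcap_{i<j}\{\alpha: r_j(\alpha)\ne r_i(\alpha)\}.
\]
The first set is a preimage of a Borel set under a Borel map. Each set in the intersection is a preimage of the open set $\{(z,w):z\ne w\}$ under the Borel map $(r_i,r_j)$. So $\alpha\mapsto\xi_\alpha(B)$ is Borel.

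For each fixed $\alpha$, the summand condition keeps exactly the first occurrence of each distinct root. Therefore $\xi_\alpha=\sum\delta_{\text{distinct real roots}}$ restricted to $\Theta$, which is a finite point measure of mass at most $d$, and $\xi_\alpha(B)=Z_\alpha(B)$. Thus $\xi$ is a measurable kernel, and $Z_\alpha(B)$ is measurable in $\alpha$.

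The ordered distinct real roots $r_k(\alpha)$ used in \cref{eq:ordered-root-law} are then Borel as well. The event $\{m(\alpha)\ge k,\ r_k(\alpha)\le t\}$ equals $\{Z_\alpha(\Theta\cap(-\infty,t])\ge k\}$, so measurability of these events follows from that of $Z$. The main obstacle is the Borel measurability of the inverse of the Vi\`ete map on sorted tuples. The key points are verifying that $S$ is Borel and that $V|_S$ is injective; once these hold, Lusin--Souslin settles the rest.
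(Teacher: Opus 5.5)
Your proof is correct, but it runs in the opposite direction from the paper's. The paper starts from continuity of the roots: the unordered root multiset depends continuously on $\alpha$ in the quotient topology of $\mathbb C^d$ modulo permutations (e.g.\ as companion-matrix eigenvalues). It then obtains the selectors by composing this with lexicographic sorting, which it asserts, without proof, to be Borel.

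You never use continuity of the roots. You need only three facts:
\begin{itemize}
\item the Vi\`ete map $V$ is continuous;
\item $V$ is injective on the Borel set $S$ of lexicographically sorted tuples, by unique factorization over $\mathbb C$;
\item $V(S)=\mathbb C^d$, by the fundamental theorem of algebra.
\end{itemize}
Lusin--Souslin then makes $(V|_S)^{-1}$ Borel.

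What your route buys is that it covers the step the paper leaves implicit. Since $V|_S$ factors as the homeomorphism $\mathbb C^d/S_d\to\mathbb C^d$ composed with the restriction of the quotient map to $S$, Borel measurability of sorting on the quotient is equivalent to Borel measurability of $(V|_S)^{-1}$, and your Borel-isomorphism conclusion supplies it. The price is a descriptive-set-theoretic theorem where the paper uses only classical analysis. The paper's route also yields continuity of the unordered roots, which you do not get.

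Your derivation of Borel measurability of $m(\alpha)$ and of the partial maps $r_k$ from the identity $\{m(\alpha)\ge k,\ r_k(\alpha)\le t\}=\{Z_\alpha(\Theta\cap(-\infty,t])\ge k\}$ is a cleaner substitute for the paper's informal ``filter and sort'' remark.
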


\begin{proof}
The unordered multiset of roots of a monic polynomial depends continuously on
its coefficient vector in the quotient topology on $\mathbb C^d$ modulo
permutations; this also follows by viewing the roots as the eigenvalue multiset
of the companion matrix.  Sort each multiset by the lexicographic Borel order on
$(\operatorname{Re}z,\operatorname{Im}z)$, retaining repetitions.  Sorting a
finite multiset is Borel, which gives the maps $r_1,\ldots,r_d$.  Each summand in
the display is Borel in $\alpha$ for every Borel $B$, and the finite sum is a
counting-measure kernel.  The condition excluding
earlier equal roots removes algebraic duplicates without discarding
discriminant-zero vectors.
\end{proof}

It follows that $\Lambda_\mu=\E\xi_\alpha$ is a well-defined finite Borel
measure with total mass at most $d$, for every coefficient law, including laws
supported on the discriminant.  Filtering the finite Borel enumeration to
distinct roots in $\R\cap\Theta$ and sorting them increasingly also shows that
$m(\alpha)$ and every partial map $r_k$ used in
\cref{eq:ordered-root-law} are Borel.  Equivalently, one may assign a cemetery
value in an adjoined abstract point $\partial\notin\R$ to unused positions and
obtain $d$ total Borel maps into $\Theta\cup\{\partial\}$.

The extension from interval domination to Borel domination used in
\cref{thm:intensity} is also valid when $\Theta$ is unbounded.  Exhaust every
component of an open set from within by closed intervals in
$\calI(\Theta)$.  Continuity from below first gives the bound on each component;
countable additivity gives it on the open set.  Since
$\Lambda_\mu(\Theta)\le d$, outer regularity then yields the inequality for all
Borel sets.

\subsection{Root-intensity characterization}
\label{app:intensity-proof}

\begin{proof}[Proof of \cref{thm:intensity}]
For every interval $I$ and every coefficient vector,
\begin{equation}
 \ind\{Z_\alpha(I)\ge1\}
 \le Z_\alpha(I)
 \le d\,\ind\{Z_\alpha(I)\ge1\}.
 \label{eq:root-count-sandwich}
\end{equation}
Taking expectations gives
\begin{equation}
 \Prob(Z_\alpha(I)\ge1)
 \le\Lambda_\mu(I)
 \le d\Prob(Z_\alpha(I)\ge1).
 \label{eq:expected-sandwich}
\end{equation}
If $\Lambda_\mu$ has density bounded by $K$, the first inequality gives
$c_\mu\le K$.

Conversely, if $c_\mu=C<\infty$, the second inequality yields
$\Lambda_\mu(I)\le dC|I|$ for every closed $I\in\calI(\Theta)$.  Each component
of an open subset of $\Theta$ is exhausted from within by increasing closed
intervals, with one-sided exhaustion at a boundary of $\Theta$.  Continuity
from below and countable additivity therefore give the same domination for open
sets.  Outer regularity then gives
$\Lambda_\mu(B)\le dC\Leb(B)$ for every Borel set $B$.  The Radon--Nikodym
theorem supplies a density bounded almost everywhere by $dC$.
\end{proof}

\subsection{Ordered-root pushforwards}
\label{app:root-coordinates-proof}

\begin{proof}[Proof of \cref{thm:root-coordinates}]
For every $k$ and every $I\in\calI(\Theta)$,
\[
 \nu_{\mu,k}(I)
 \le\Prob\bigl(Z_\alpha(I)\ge1\bigr)
 \le c_\mu(\Theta)|I|.
\]
The same interval-to-Borel domination used for $\Lambda_\mu$ shows that each
$\nu_{\mu,k}$ has density bounded by
$c_\mu$ whenever $c_\mu<\infty$.  This proves the left inequality, including
its contrapositive when a bounded density fails to exist.

Conversely, the hitting event is the union, over $1\le k\le d$, of
$\{m(\alpha)\ge k,r_k(\alpha)\in I\}$.  Hence
\[
 \Prob(Z_\alpha(I)\ge1)
 \le\sum_{k=1}^d\nu_{\mu,k}(I)
 \le dM_\mu(\Theta)|I|,
\]
which proves the right inequality.
\end{proof}

The class statement in \cref{cor:root-coordinate-class} follows by taking the
supremum over $\mu\in\calD$ in the two-sided inequality.  Since the only loss
is $d$, the same argument proves the polynomial-growth equivalence for classes
indexed by $(d,R)$.

\section{Coefficient-space formulas and density conditions}
\label{app:coefficient-formulas}

\subsection{Conditional formula}
\label{app:conditional-formula}

The density $q$ may be redefined as zero on the product-null set where a
version takes the value $+\infty$.  This leaves the conditional laws unchanged
and avoids a $0\cdot\infty$ convention at critical points.

Fix $a$ for which $q(\cdot\mid a)$ exists.  On a compact interval $K$, the
polynomial $h_a$ is Lipschitz.  The one-dimensional area formula states that for
every nonnegative Borel function $r$,
\begin{equation}
 \int_K r(x)|h_a'(x)|\,dx
 =\int_\R\sum_{x\in K:h_a(x)=u}r(x)\,du.
 \label{eq:one-dimensional-area}
\end{equation}
Apply it with $r(x)=\ind_B(x)q(h_a(x)\mid a)$.  The right-hand side is
\[
 \int_\R q(u\mid a)
 \#\{x\in B\cap K:h_a(x)=u\}\,du.
\]
The count is the number of distinct roots of
$x^d+\sum_{j=1}^{d-1}a_jx^j+u$.  Increasing compact intervals exhaust
$\Theta$; monotone convergence then gives
\begin{equation}
 \int q(u\mid a)Z_{(u,a)}(B)\,du
 =\int_B |h_a'(x)|q(h_a(x)\mid a)\,dx.
 \label{eq:conditional-area-complete}
\end{equation}
The area formula counts geometric preimages rather than algebraic
multiplicity, which matches the definition of $Z$.  Its equality is an
integral identity; the finitely many critical values of $h_a$ form a Lebesgue
null set and therefore contribute no additional term under the conditional
density.

Joint measurability of $q$ and Tonelli's theorem allow integration of
\cref{eq:conditional-area-complete} against $\nu(da)$.  Disintegration gives
the left side as $\Lambda_\mu(B)$; Tonelli gives the right side as the integral
over $B$ of \cref{eq:conditional-rice}.  This proves
Proposition~\ref{prop:conditional-rice}.

Let $q$ and $\widetilde q$ be two jointly measurable
versions of the conditional density.  They agree for
$(du\otimes\nu(da))$-almost every $(u,a)$.  Applying
\cref{eq:one-dimensional-area} to their absolute difference and then Tonelli
gives, on every compact $K\subset\Theta$,
\[
 \int\!\int_K |h_a'(x)|
 \bigl|q(h_a(x)\mid a)-\widetilde q(h_a(x)\mid a)\bigr|\,dx\,d\nu(a)=0.
\]
Exhausting $\Theta$ proves that the right side of
\cref{eq:conditional-rice} is independent, as an $L^1_{\mathrm{loc}}$
equivalence class, of the chosen version.

\subsection{Joint-density formula}
\label{app:joint-formula}

On every bounded box, the map
\[
 T(x,a)=\left(h_a(x),a\right)
\]
is Lipschitz, with Jacobian
\[
 J_T(x,a)=\left|dx^{d-1}+\sum_{j=1}^{d-1}ja_jx^{j-1}\right|.
\]
The multidimensional area formula gives
\begin{align*}
 &\int_{B\times\R^{d-1}} f(T(x,a))J_T(x,a)\,dx\,da\\
 &\qquad=\int_{\R^d} f(\alpha)
 \#\{(x,a)\in B\times\R^{d-1}:T(x,a)=\alpha\}\,d\alpha\\
 &\qquad=\int_{\R^d} f(\alpha)Z_\alpha(B)\,d\alpha.
\end{align*}
Apply the formula first on bounded boxes and then use monotone convergence.
Tonelli identifies the left side with the integral over $B$ of
\cref{eq:joint-rice}.  The area formula counts geometric preimages, as required
for distinct roots.  Applying the same formula to
$|f-\widetilde f|$ shows that two density versions give the same
$L^1_{\mathrm{loc}}$ class on the right-hand side.  This completes the proof of
Proposition~\ref{prop:joint-rice}.

\subsection{Recovery of the bounded-joint-density theorem}
\label{app:joint-recovery-proof}

\begin{proof}[Proof of \cref{cor:joint-density-recovery}]
By \cref{thm:intensity}, it suffices to bound $\rho_\mu$ by the coefficient in
\cref{eq:joint-density-recovery}.  Choose the density representative with
$0\le f\le\kappa$ everywhere and $f=0$ off $[-R,R]^d$.  If $|x|\le1$, the integrand in
\cref{eq:joint-rice} is at most
\[
 \kappa\left(d+R\sum_{j=1}^{d-1}j\right),
\]
and the integration domain for $(a_1,\ldots,a_{d-1})$ has volume at most
$(2R)^{d-1}$.

For $|x|>1$, put $y=x^{-1}$ and reverse the polynomial:
\[
 Q_\alpha(y)=y^dP_\alpha(1/y)
 =1+\sum_{k=1}^{d}b_ky^k,
 \qquad b_k=\alpha_{d-k}.
\]
A coordinate permutation preserves the density bound $\kappa$ for $b$.
A root at $x$ corresponds to a root at $y$.  Writing $\rho_Q$ for the expected
distinct-root density of the reversed family, the intensities transform as
$\rho_P(x)=|x|^{-2}\rho_Q(y)=y^2\rho_Q(y)$.  In the coefficient-space area
formula for $Q$, eliminate $b_1$ rather than $b_d$.  The corresponding map is
\[
 (y,b_2,\ldots,b_d)
 \longmapsto
 \left(-y^{-1}-\sum_{k=2}^d b_ky^{k-1},b_2,\ldots,b_d\right),
\]
whose Jacobian on the root graph is
\[
 \left|\frac{db_1}{dy}\right|=\frac{|Q_\alpha'(y)|}{|y|}.
\]
Moreover,
\[
 \begin{aligned}
 b_1&=-y^{-1}-\sum_{k=2}^d b_ky^{k-1},\\
 \frac{|Q_\alpha'(y)|}{|y|}
 &\le |y|^{-2}
 +R\sum_{k=2}^d(k-1)|y|^{k-2}.
 \end{aligned}
\]
Multiplying by $y^2$, using $|y|<1$, and integrating the remaining
$d-1$ coefficients over their cube gives
\[
 \begin{aligned}
 \rho_P(x)
 &\le\kappa(2R)^{d-1}
 \left(1+R\sum_{k=2}^d(k-1)\right)\\
 &\le\kappa(2R)^{d-1}
 \left(d+\frac{Rd(d-1)}2\right).
 \end{aligned}
\]
The two regions cover $\R$ and prove the claim.
\end{proof}

\subsection{The one-chart conditional-density bound}
\label{app:conditional-bound-proof}

\begin{proof}[Proof of \cref{thm:conditional-bound}]
Condition on $A_-=a$.  A root in $I\in\calI(\Theta)$ is equivalent to
$\alpha_0\in h_a(I)$.  The continuous image $h_a(I)$ is an interval, and its
length is at most its total variation on $I$.  Since $|a_j|\le R$ and
$|x|\le B$,
\[
 \begin{aligned}
 \Leb(h_a(I))
 &\le \int_I|h_a'(x)|\,dx\\
 &\le\left(dB^{d-1}+R\sum_{j=1}^{d-1}jB^{j-1}\right)|I|.
 \end{aligned}
\]
The essential density bound in
\cref{eq:conditional-essential-bound} bounds the conditional probability of
$h_a(I)$ by $K\Leb(h_a(I))$.  Averaging over $a$ proves
\cref{eq:conditional-polynomial}; summing $1+\cdots+(d-1)$ gives
\cref{eq:conditional-unit}.
\end{proof}

\subsection{The global two-chart bound}
\label{app:two-chart-proof}

\begin{proof}[Proof of \cref{thm:two-chart}]
On the inner chart $|x|\le1$, condition on
$(\alpha_1,\ldots,\alpha_{d-1})=a$ and eliminate $\alpha_0$ through
$\alpha_0=h_a(x)$ from \cref{eq:conditional-map}.  The calculation in
\cref{thm:conditional-bound} gives
\begin{equation}
 |h_a'(x)|\le d+R\sum_{j=1}^{d-1}j
 =d+\frac{Rd(d-1)}2.
 \label{eq:inner-chart-derivative}
\end{equation}
Thus every interval $J\subseteq[-1,1]$ is hit with probability at most the
first constant in \cref{eq:two-chart-constant} times $|J|$.

On either outer component $|x|\ge1$, condition instead on
$b=(\alpha_0,\ldots,\alpha_{d-2})$ and divide the root equation by
$x^{d-1}$.  It becomes
\begin{equation}
 \alpha_{d-1}=k_b(x)
 =-x-\sum_{j=0}^{d-2}b_jx^{j-d+1}.
 \label{eq:outer-chart-map}
\end{equation}
For $|x|\ge1$,
\begin{align}
 |k_b'(x)|
 &\le1+R\sum_{j=0}^{d-2}(d-1-j)|x|^{j-d}\notag\\
 &\le1+\frac{Rd(d-1)}2.
 \label{eq:outer-chart-derivative}
\end{align}
The conditional probability that $\alpha_{d-1}$ lies in $k_b(J)$ is at most
$K_{d-1}\Leb(k_b(J))$, and the image length is at most the total variation in
\cref{eq:outer-chart-derivative}.  Averaging over $b$ proves the outer-chart
bound.

Finally, split $I$ at $-1$ and $1$.  The lengths of the at most three
nondegenerate pieces sum to $|I|$.  For $\sigma\in\{-1,1\}$,
\[
 P_\alpha(\sigma)=0
 \quad\Longleftrightarrow\quad
 \alpha_0=-\sigma^d-\sum_{j=1}^{d-1}\alpha_j\sigma^j.
\]
Conditioning on the nonconstant coefficients, the right side specifies one
value of $\alpha_0$ and therefore has probability zero.  The overlapping split
points may thus be discarded.  Apply the inner or outer estimate to each
remaining piece, take a union bound, and use the larger chart constant.
\end{proof}

\section{Affine recovery, dispersion, and boundary cases}
\label{app:affine-dispersion}

\subsection{Exact affine incidence}
\label{app:exact-affine-proof}

\begin{proof}[Proof of \cref{thm:exact-affine}]
Set $q(a,x)=f_0(x)+\langle a,F(x)\rangle$.  On the incidence hypersurface
$M_I=\{(a,x)\in\operatorname{int}K\times\operatorname{int}I:q(a,x)=0\}$,
\[
 \nabla q(a,x)
 =(F(x),f_0'(x)+\langle a,F'(x)\rangle).
\]
Repeating the projection and slicing calculation in
\cref{eq:incidence-area-proof} gives
\[
 \operatorname{vol}_N\{a\in K:\exists x\in I,\ q(a,x)=0\}
 \le\int_I\int_{H_x^\circ}
 \frac{|f_0'(x)+\langle a,F'(x)\rangle|}
      {\lVert F(x)\rVert_2}\,
 d\mathcal H^{N-1}(a)\,dx.
\]
The density cap proves
$C_{K,A}^{\rm aff}\le A\Lambda_K^{\rm aff}$.

For the reverse inequality, fix an interior $x$ for which
$H_x^\circ\ne\varnothing$ and let
$D_h=\{a\in K:q(a,x)q(a,x+h)<0\}$.  Coarea in the normal direction to the
affine hyperplane $q(a,x)=0$, followed by the same dominated-convergence
argument as in \cref{eq:first-variation-coarea}, yields
\[
 \lim_{h\to0}\frac{\operatorname{vol}_N(D_h)}
 {|h|\operatorname{vol}_N(K)}
 =\lambda_K^{\rm aff}(f_0,F;x).
\]
Every point of $D_h$ produces a root.  Applying the capped density
\cref{eq:capped-extremal-density}, with this $D_h$, multiplies its
probability by $A$ for all sufficiently small $h$.  Letting $h\to0$ and
taking the essential supremum proves the reverse inequality.
Points with $H_x^\circ=\varnothing$ have zero incidence speed by definition
and require no lower estimate.  Restricting to $\operatorname{int}K$ and
$\operatorname{int}I$ loses no root probability: every law in
$\calD_{K,A}$ is absolutely continuous, while $\partial K$ and the two
endpoint hyperplanes are Lebesgue-null.
\end{proof}

\subsection{The affine cube-slab bound}
\label{app:affine-proof}

\begin{proof}[Proof of \cref{thm:affine}]
At a root $x_*$, $\langle a,g(x_*)\rangle=c(x_*)$.  Fix $x_0\in I$.  The root
therefore implies
\begin{align*}
 &|\langle a,g(x_0)\rangle-c(x_0)|\\
 &\quad\le |\langle a,g(x_0)-g(x_*)\rangle|
      +|c(x_*)-c(x_0)|\\
 &\quad\le (R\sqrt N L_g+L_c)|I|.
\end{align*}
Apply \cref{lem:cube-slab} with $u=g(x_0)$, $b=c(x_0)$, and the displayed
half-width.
\end{proof}

\subsection{Global bounds for the monomial feature curve}
\label{app:monomial-bounds}

Let $v(x)=(1,x,\ldots,x^{d-1})$ and $s(x)=\lVert v(x)\rVert_2$.  Since
\[
 \left\lVert\left(\frac{v}{s}\right)'\right\rVert_2
 \le\frac{\lVert v'\rVert_2}{s},
\]
it suffices to bound the ratio on the right.  For $|x|\le1$,
\begin{align*}
 \lVert v'(x)\rVert_2^2
 &=\sum_{j=1}^{d-1}j^2|x|^{2j-2}\\
 &\le(d-1)^2\sum_{k=0}^{d-2}|x|^{2k}
 \le(d-1)^2s(x)^2.
\end{align*}
For $|x|\ge1$,
\begin{align*}
 \lVert v'(x)\rVert_2^2
 &\le\frac{(d-1)^2}{|x|^2}
       \sum_{j=1}^{d-1}|x|^{2j}
 \le\frac{(d-1)^2}{|x|^2}s(x)^2.
\end{align*}
This proves the first inequality in \cref{eq:monomial-lipschitz}, as well as the
stronger exterior estimate $\lVert v'\rVert_2/s\le(d-1)/|x|$.

Now put $r(x)=x^d/s(x)$.  Since $|s'|\le\lVert v'\rVert_2$,
\[
 |r'(x)|
 \le\frac{d|x|^{d-1}}{s(x)}
 +\frac{|x|^d}{s(x)}\frac{\lVert v'(x)\rVert_2}{s(x)}.
\]
When $|x|\le1$, the two terms are at most $d$ and $d-1$.  When
$|x|\ge1$, use $s(x)\ge|x|^{d-1}$ and the stronger exterior estimate to obtain
the same two bounds.  Therefore $|r'|\le2d-1$ globally.  Replacing $r$ by
$-r$ gives the required bound for the offset $c$ in
\cref{thm:affine}.

\subsection{The dispersion interface}
\label{app:dispersion-proof}

\begin{lemma}[One-dimensional uniformization]
\label{lem:dispersion-uniformization}
Let $\Theta$ be a compact interval and let
$\ell_1,\ldots,\ell_T:\Theta\to\R$ be independent piecewise
$L$-Lipschitz functions, each with at most $K$ discontinuities.  For
$\varepsilon>0$ and $\rho\in\Theta$, let
\[
 D_\ell(T,\varepsilon,\rho)
 =\#\{t:\ell_t\text{ is not $L$-Lipschitz on }
 \Theta\cap[\rho-\varepsilon,\rho+\varepsilon]\}.
\]
Then
\[
 \E\max_{\rho\in\Theta}D_\ell(T,\varepsilon,\rho)
 \le \max_{\rho\in\Theta}\E D_\ell(T,\varepsilon,\rho)
 +O\!\left(\sqrt{T\log(TK)}\right).
\]
\end{lemma}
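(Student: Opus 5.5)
The plan is to reduce the supremum over $\rho$ to a maximum over finitely many data-independent intervals and then apply a union bound with Hoeffding concentration. The standard route (Balcan, Dick, and Vitercik) uses VC/Rademacher bounds for the class of interval-intersection indicators. Here I will use a simpler discretization that exploits only the fact that the sum concerns events of the form ``$\ell_t$ has a discontinuity in a given window.''

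First, I note that $\ell_t$ fails to be $L$-Lipschitz on a window $W$ exactly when some discontinuity of $\ell_t$ lies in $W$; a piece boundary that is not a discontinuity does not break the Lipschitz property. This is the convention under which the count is defined. Write $S_t\subset\Theta$ for the discontinuity set, so $|S_t|\le K$, and set
\[
 D_\ell(T,\varepsilon,\rho)=\sum_t\ind\{S_t\cap[\rho-\varepsilon,\rho+\varepsilon]\ne\varnothing\}.
\]

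The main obstacle is that the $S_t$ are random, so a fixed grid of $\rho$ values might miss the worst window. I would handle this with a pure VC argument. The family of sets $\{S\subset\Theta\text{ finite}:S\cap J\ne\varnothing\}$, indexed by intervals $J$, restricted to a sample $S_1,\ldots,S_T$, takes at most $O((TK)^2)$ distinct patterns. The reason is that the pattern is determined by the position of the endpoints of $J$ relative to the at most $TK$ points of $\bigcup_tS_t$. This gives a growth function of order $(TK)^2$.

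The standard symmetrization argument then yields
\[
 \E\sup_\rho\bigl(D_\ell-\E D_\ell\bigr)\le 2\,\E\,\mathrm{Rad}\le O\!\left(\sqrt{T\log((TK)^2)}\right).
\]
Concretely, I introduce an independent ghost sample $S'_t$ and Rademacher signs. Conditional on both samples, I apply Massart's finite-class lemma to at most $(2TK+1)^2$ sign patterns, each a vector in $\{0,\pm1\}^T$ of norm at most $\sqrt T$. Independence across $t$ is exactly what symmetrization needs. Finally, $\E\max_\rho D_\ell\le\max_\rho\E D_\ell+\E\sup_\rho(D_\ell-\E D_\ell)$, which gives the stated bound.

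The delicate point I would check carefully is whether the symmetrized patterns are counted correctly when both samples are used: the endpoints are compared against at most $2TK$ points, giving at most $(4TK+1)^2$ patterns, which still has logarithm $O(\log(TK))$. The edge case $K\ge1$ ensures that the logarithm is nonzero. The truncation $\Theta\cap[\cdot]$ only changes which intervals are realized, so it keeps the indexed family within the class of intervals.
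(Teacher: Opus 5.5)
Your argument takes essentially the paper's approach: the paper only invokes Theorem~7 of \citet{balcan2020semibandit}, whose proof is a VC-type uniformization over interval-hitting indicators, and your $O((TK)^2)$ pattern count with symmetrization and Massart's lemma is a self-contained version of that proof. One repair is needed: ``not $L$-Lipschitz on $W$'' is not exactly equivalent to $S_t\cap W\ne\varnothing$ when a one-sided jump sits at an endpoint of $W$ (e.g.\ $\ind_{[0,\infty)}$ is constant on $[0,2\varepsilon]$), so the larger hitting count would put a possibly bigger expectation on the right-hand side; instead, symmetrize the exact non-Lipschitz indicators, which are still determined by the cell (a pooled discontinuity point or an open gap between consecutive ones) containing each endpoint of $\Theta\cap[\rho-\varepsilon,\rho+\varepsilon]$, so the same $(4TK+1)^2$ bound and the rest of your proof go through unchanged.
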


This is the compact-domain restriction of Theorem~7 of
\citet{balcan2020semibandit}; its interval-class uniformization proof is
unchanged after intersecting every interval with $\Theta$.  It uses
independence and the discontinuity bound, but no bounded-utility assumption.

\begin{proof}[Proof of \cref{cor:dispersion-interface}]
For every fixed $\rho$, \cref{eq:expected-boundary-count} with
$|I|=2\varepsilon$ gives
$\E D(T,\varepsilon,\rho)\le2mCT\varepsilon$; intersecting the interval with
$\Theta$ can only shorten it.  Applying
\cref{lem:dispersion-uniformization} gives
\[
 \begin{aligned}
 \E\max_\rho D(T,\varepsilon,\rho)
 &\le\max_\rho\E D(T,\varepsilon,\rho)\\
 &\quad+O\!\left(\sqrt{T\log(TK)}\right).
 \end{aligned}
\]
This proves \cref{eq:uniform-boundary-count}.  Since
$\sqrt T\le T\varepsilon$ for $\varepsilon\ge T^{-1/2}$, the dispersion claim
follows.

For the regret claim, let $\mathcal G_h$ be a grid of covering radius at most
$h$ and cardinality at most $W/h+2$.  For any $\rho\in\Theta$, choose
$g\in\mathcal G_h$ with $|g-\rho|\le h$.  On a round that is $L$-Lipschitz
throughout $\Theta\cap[\rho-h,\rho+h]$, one has
$u_t(\rho)-u_t(g)\le Lh$; on every other round the difference is at most one.
Hence, also for an approximating sequence if the supremum is not attained,
\[
 \sup_{\rho\in\Theta}\sum_{t=1}^T u_t(\rho)
 -\max_{g\in\mathcal G_h}\sum_{t=1}^T u_t(g)
 \le LTh+\max_{\rho\in\Theta}D(T,h,\rho).
\]
Take expectations, invoke \cref{eq:uniform-boundary-count}, and add the
finite-expert reward bound $\sqrt{2T\log|\mathcal G_h|}$.  This proves
\cref{eq:generic-grid-regret}.
\end{proof}

\subsection{Boundary cases and consistency checks}
\label{app:boundary-cases}

\paragraph{Degree one.}
For $d=1$, $P_{\alpha_0}(x)=x+\alpha_0$ has the single root $-\alpha_0$.
The hitting constant is exactly the interval concentration constant of that
random variable.  It is finite if and only if the root law has a bounded
density, in exact agreement with \cref{thm:intensity}.  The conditional bound
reduces to $c_\mu\le K$.

\paragraph{Atoms and repeated roots.}
If a coefficient law gives positive probability to a root at a deterministic
location $x_*$, the expected root measure has an atom and the hitting constant
is infinite.  Repeated roots are counted once.  The elementary sandwich
\cref{eq:root-count-sandwich} remains valid for every coefficient vector,
including discriminant-zero ones.  Absolute continuity is needed only when a
coefficient-space formula is represented by integration against a density; the
arbitrary-law characterization in \cref{thm:intensity} needs no absolute
continuity.

\paragraph{Finite hitting without conditional density.}
If $U\sim\operatorname{Unif}[1,2]$ and
$P_U(x)=(x-U)(x-U-1)$, the coefficient law lies on a curve and the conditional
law of the constant coefficient given the linear coefficient is a point mass.
Nevertheless, the roots are $U$ and $U+1$, so their aggregate density is
$\ind_{[1,2]}+\ind_{[2,3]}$ up to endpoint values and
\cref{thm:intensity} gives $c_\mu<\infty$.

\section{Proofs of the graph-learning applications}
\label{app:graph-applications}

\subsection{Cost-sensitive Gaussian-RBF graph SSL}
\label{app:rbf-application-proof}

\begin{lemma}[Zero count for exponential polynomials]
\label{lem:exp-poly-zeros}
Let $\beta_1<\cdots<\beta_M$ be real, and let
\[
 h(x)=\sum_{j=1}^M c_j e^{-\beta_jx}
\]
be nonzero.  Then $h$ has at most $M-1$ distinct real zeros.
\end{lemma}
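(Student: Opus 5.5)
We argue by induction on $M$, using Rolle's theorem after dividing by one exponential. This is the classical Descartes--Laguerre--Pólya argument for exponential sums. First we discard vanishing coefficients. If some $c_j=0$, the sum has fewer distinct exponents, so the claimed bound only improves. We may therefore assume every $c_j\ne0$ and induct on the number $M$ of terms.

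For $M=1$, $h(x)=c_1e^{-\beta_1x}$ with $c_1\ne0$ never vanishes, which gives $0=M-1$ zeros. For the inductive step, suppose $h$ has at least $M$ distinct real zeros $x_1<\cdots<x_M$. Multiply by $e^{\beta_1 x}$ to obtain
\[
 g(x)=e^{\beta_1x}h(x)=c_1+\sum_{j=2}^M c_je^{-(\beta_j-\beta_1)x}.
\]
The function $g$ has the same zeros as $h$. It is smooth, so Rolle's theorem yields at least $M-1$ distinct zeros of
\[
 g'(x)=-\sum_{j=2}^M c_j(\beta_j-\beta_1)e^{-(\beta_j-\beta_1)x},
\]
one in each open interval $(x_i,x_{i+1})$.

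The derivative $g'$ is an exponential polynomial with $M-1$ distinct exponents $\beta_j-\beta_1$, $j\ge2$. Its coefficients $-c_j(\beta_j-\beta_1)$ are nonzero because $c_j\ne0$ and the exponents are distinct. Hence $g'$ is a nonzero exponential polynomial, and by the induction hypothesis it has at most $M-2$ distinct real zeros. This contradicts the $M-1$ zeros produced by Rolle's theorem, which completes the induction.

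The only delicate step is ensuring $g'$ is not identically zero. We handle this by reducing to all $c_j\ne0$ at the start. Nonvanishing of $g'$ then rests on linear independence of distinct exponentials, and that independence is itself a consequence of the induction hypothesis applied to $g'$: a nonzero exponential polynomial has finitely many zeros, so it cannot vanish identically.
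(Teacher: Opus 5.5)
Your proof is correct and follows the paper's argument: multiply by $e^{\beta_1x}$, apply Rolle's theorem, and invoke the induction hypothesis on $g'$, which has $M-1$ exponents. The difference is only bookkeeping. The paper handles the degenerate case by noting that $g'\equiv0$ forces $g$ to be a nonzero constant. Your upfront reduction to all $c_j\ne0$ instead makes the inductive statement coefficient-based, so $g'$ satisfies the hypothesis automatically and no separate linear-independence step is needed.
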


\begin{proof}
The claim follows by induction on $M$.  It is immediate for $M=1$.
For $M\ge2$, multiplication by $e^{\beta_1x}$ does not change the zero set.
Writing
\[
 g(x)=e^{\beta_1x}h(x)
     =c_1+\sum_{j=2}^M c_j e^{-(\beta_j-\beta_1)x},
\]
Rolle's theorem shows that any $k$ distinct zeros of $g$ produce at least
$k-1$ distinct zeros of $g'$.  If $g'\equiv0$, then $g$ is a nonzero constant
and has no zeros.  Otherwise, after removing zero coefficients, $g'$ is an
exponential polynomial with at most $M-1$ distinct exponents.  The induction
hypothesis gives at most $M-2$ distinct zeros of $g'$, and hence
$k\le M-1$.
\end{proof}

Fix a round and suppress its index.  Write $r=|U|$ and
$s(\gamma)=s_U(\gamma)$.  Since every edge weight is at least
$\omega=e^{-BD}$, for every $x\in\R^r$,
\begin{equation}
 \begin{aligned}
 x^\top Q_\gamma x
 &=\sum_{\{u,v\}\subseteq U}w_\gamma(u,v)(x_u-x_v)^2\\
 &\quad+\sum_{u\in U,\,j\in L}w_\gamma(u,j)x_u^2
 \ge \ell\omega\lVert x\rVert_2^2.
 \end{aligned}
 \label{eq:rbf-grounded-energy}
\end{equation}
Thus $Q_\gamma\succ0$.  A coordinatewise argument gives the stronger norm
estimate needed below.  If $|x_u|=\lVert x\rVert_\infty$, then
\[
 \operatorname{sign}(x_u)(Q_\gamma x)_u
 \ge\left(\sum_{j\in L}w_\gamma(u,j)\right)
       \lVert x\rVert_\infty
 \ge\ell\omega\lVert x\rVert_\infty.
\]
Consequently,
\begin{equation}
 \lVert Q_\gamma^{-1}\rVert_\infty\le(\ell\omega)^{-1}.
 \label{eq:rbf-inverse-infinity}
\end{equation}

The discrete maximum principle applied to the harmonic equations gives
$0\le s_v(\gamma)\le1$ for every $v\in U$.  Differentiating
$Q_\gamma s(\gamma)=b_\gamma$ yields
\[
 s'(\gamma)=Q_\gamma^{-1}(b_\gamma'-Q_\gamma's(\gamma)).
\]
After adjoining the fixed boundary scores $s_j=y_j$ for $j\in L$, the
$u$th coordinate of the right-hand vector is
\begin{equation}
 (b_\gamma'-Q_\gamma's)_u
 =\sum_{a\in X\setminus\{u\}}
   w_\gamma'(u,a)(s_a-s_u).
 \label{eq:rbf-differentiated-harmonic}
\end{equation}
Because $|w_\gamma'(u,a)|=d(u,a)w_\gamma(u,a)\le D$ and all scores lie in
$[0,1]$, \cref{eq:rbf-inverse-infinity,eq:rbf-differentiated-harmonic}
give
\[
 \lVert s'(\gamma)\rVert_\infty
 \le\frac{(n-1)D}{\ell\omega},
\]
which proves \cref{eq:rbf-score-speed}.

The section calculation in
\cref{eq:rbf-exact-incidence-speed} is explicit.  Put
$p=s_v(\gamma)$ and parameterize the line
$F_v(\gamma)^\perp$ by
\[
 a(z)=(-(1-p)z,pz),\qquad
 |z|\le\frac{R}{\max\{p,1-p\}}.
\]
Since $d\mathcal H^1(a)=\lVert F_v(\gamma)\rVert_2,dz$ and
$|\langle a(z),F_v'(\gamma)\rangle|=|z s_v'(\gamma)|$,
\cref{eq:body-incidence-speed} gives
\[
 \lambda_{Q_R}(F_v;\gamma)
 =\frac{|s_v'(\gamma)|}{4\max\{p,1-p\}^2}
 \le |s_v'(\gamma)|.
\]
The law of $(C^{\rm FN},-C^{\rm FP})$ belongs to
$\calD^2_{R,\kappa}=\calD_{Q_R,A}$.  The upper half of
\cref{thm:body-characterization} and the preceding derivative estimate prove
\cref{eq:rbf-root-hitting}.

It remains to verify boundary complexity without any discretization of the
dissimilarities.  Every entry of $Q_\gamma$ and $b_\gamma$ is a sum of at most
$n$ functions $e^{-\beta\gamma}$ with $\beta\in[0,D]$.  With
\[
 \Delta(\gamma)=\det Q_\gamma,\qquad
 N_v(\gamma)=e_v^\top\operatorname{adj}(Q_\gamma)b_\gamma,
\]
the determinant expansion gives at most $r!n^r$ exponential terms in each
function before terms with the same exponent are collected.  Moreover,
\cref{eq:rbf-grounded-energy} implies $\Delta(\gamma)>0$ on $[0,B]$.
Hence
\[
 C^{\rm FN}s_v(\gamma)-C^{\rm FP}(1-s_v(\gamma))
 =\frac{H_v(\gamma)}{\Delta(\gamma)},
 \qquad
 H_v=(C^{\rm FN}+C^{\rm FP})N_v-C^{\rm FP}\Delta.
\]
After collecting equal exponents, $H_v$ has at most $2r!n^r$ distinct
exponential terms.  If $H_v$ is nonzero, \cref{lem:exp-poly-zeros} gives at
most $2r!n^r-1$ distinct real zeros.
If $H_v$ is identically zero, the prediction is fixed by the tie rule.
Consequently the round has fewer than
\[
 2r\,r!n^r\le 2n\,n!n^n=K_n
\]
discontinuities.  For integer dissimilarities, the change of variable
$z=e^{-\gamma}$ improves the bound to $D|U|^2\le Dn^2$.

The functions $u_t$ depend on independent cost pairs and fixed base
instances, so they are independent across rounds.  Apply
\cref{cor:dispersion-interface} with $m\le n$,
$C=A(n-1)D/(\ell\omega)$, $K=K_n$, and $L=0$ to obtain
\cref{eq:rbf-dispersion}.  Finally, use the grid
\[
 \{0,T^{-1/2},2T^{-1/2},\ldots,
   \lfloor B\sqrt T\rfloor T^{-1/2}\}\cup\{B\}.
\]
The nearest-grid comparator can differ from the best continuous parameter
only on rounds with a discontinuity in an interval of radius $T^{-1/2}$.
Combining \cref{eq:rbf-dispersion} with the standard exponential-weights
bound $\sqrt{2T\log(B\sqrt T+2)}$ proves \cref{eq:rbf-regret}.
A direct implementation solves one grounded system at each grid point and
uses $O((B\sqrt T+2)n^3)$ arithmetic operations per round.

\subsection{Common-offset polynomial-kernel graph SSL}
\label{app:ssl-application-proof}

\subsubsection{Harmonic extension and transition polynomials}
\label{app:ssl-transitions}

Fix a round and suppress the subscript $t$.  Write $r=|U|$ and let
$\gamma\in[-\eta,B+\eta]$.  Since $s(a,b)\ge s_->\eta$, every weight
$(s(a,b)+\gamma)^q$ is strictly positive.  For $z\in\R^r$,
\begin{equation}
 \begin{aligned}
 z^\top Q(\gamma)z
 &=\sum_{\{a,b\}\subseteq U}w_\gamma(a,b)(z_a-z_b)^2\\
 &\quad+\sum_{a\in U,\,\ell\in L}w_\gamma(a,\ell)z_a^2.
 \end{aligned}
 \label{eq:ssl-grounded-laplacian}
\end{equation}
The second sum is positive for every nonzero $z$, because $L$ is nonempty and
the graph is complete.  Thus $Q(\gamma)$ is positive definite and
$\det Q(\gamma)>0$ throughout the parameter range.

The first-order conditions for the quadratic energy with $f_L=y_L$ are
$Q(\gamma)f_U=b(\gamma)$, proving
\cref{eq:ssl-harmonic-solution}.  Cramer's rule now gives, for each $a\in U$,
\begin{equation}
 2f_U(\gamma)_a-1
 =\frac{2e_a^\top\operatorname{adj}(Q(\gamma))b(\gamma)
          -\det Q(\gamma)}{\det Q(\gamma)}
 =\frac{\psi_a(\gamma)}{\det Q(\gamma)}.
 \label{eq:ssl-score-sign}
\end{equation}
Every entry of $Q(\gamma)$ and $b(\gamma)$ is a polynomial of degree at most
$q$.  Hence $\det Q(\gamma)$ has degree at most $qr$, while every term in
$\operatorname{adj}(Q(\gamma))b(\gamma)$ has degree at most
$q(r-1)+q=qr$.  Therefore
\begin{equation}
 \deg\psi_a\le qr.
 \label{eq:ssl-transition-degree}
\end{equation}

If $\psi_a$ is the zero polynomial, \cref{eq:ssl-score-sign} implies
$f_U(\gamma)_a=1/2$ throughout the relevant range, and the fixed tie rule
makes its prediction constant.  If $\psi_a$ is nonzero, the prediction can
change only at one of its at most $qr$ distinct real roots.  The vector of all
$r$ predictions, and therefore the accuracy utility, is constant on every
component of the complement of these roots.  Thus all discontinuities are
contained among the roots of the at most $r$ nonzero polynomials
$\psi_a(\lambda+V)$; identically zero polynomials are discarded from the
active transition set.  Their total number is at most
\begin{equation}
 r(qr)=qr^2\le qn^2.
 \label{eq:ssl-boundary-count}
\end{equation}

\subsubsection{Root anti-concentration and dispersion}
\label{app:ssl-anticoncentration}

For a nonzero $\psi_a$, let $z_1<\cdots<z_M$ be its distinct real roots.
These numbers are fixed by the latent instance, and
$M\le qr$.  For every interval $I\subseteq[0,B]$,
\begin{align}
 &\Prob\bigl(\exists\lambda\in I:\psi_a(\lambda+V)=0\bigr)\notag\\
 &\quad\le\sum_{j=1}^M\Prob(V\in z_j-I)\notag\\
 &\quad\le M\kappa|I|
 \le qr\kappa|I|.
 \label{eq:ssl-root-hitting-proof}
\end{align}
Here $z_j-I=\{z_j-x:x\in I\}$ is an interval of length $|I|$.  Equivalently,
on the whole real line the ordered roots in the parameter coordinate are
$z_j-V$, whose laws have densities bounded by $\kappa$; the display then
restricts their hitting events to $I\subseteq[0,B]$.  This proves
\cref{eq:ssl-root-hitting} directly in the canonical root coordinates of
\cref{thm:root-coordinates}.

For every round there are at most $m=n$ transition equations, each satisfying
the common estimate $C|I|$ with $C=qn\kappa$, and there are at most
$K=qn^2$ discontinuities.  Each utility is a measurable function of its own
offset $V_t$ and a fixed latent instance.  The independence of the offsets
therefore makes $u_1,\ldots,u_T$ independent.  Applying
\cref{cor:dispersion-interface} with $L=0$ gives
\[
 \E\max_\rho D(T,\varepsilon,\rho)
 \le2qn^2\kappa T\varepsilon
   +O\!\left(\sqrt{T\log(Tqn^2)}\right),
\]
which is \cref{eq:ssl-dispersion}.

\subsubsection{A full-information algorithm and its regret}
\label{app:ssl-regret}

Let $h=T^{-1/2}$ and form the grid
\[
 \mathcal G_h
 =\{0,h,2h,\ldots,\lfloor B/h\rfloor h\}\cup\{B\}.
\]
It has at most $B\sqrt T+2$ points, and every point of $[0,B]$ is within
distance $h$ of a grid point.  Run the exponential-weights forecaster on the
experts $g\in\mathcal G_h$, using the full utility vector
$(u_t(g))_{g\in\mathcal G_h}$ revealed after each round.  The standard
potential argument for rewards in $[0,1]$, with the learning rate optimized,
gives
\begin{equation}
 \E_{\rm alg}\!\left[
  \max_{g\in\mathcal G_h}\sum_{t=1}^T u_t(g)
  -\sum_{t=1}^T u_t(\lambda_t)
 \right]
 \le\sqrt{2T\log|\mathcal G_h|}.
 \label{eq:ssl-hedge-bound}
\end{equation}

For a realized sequence, choose
$\lambda^*\in\arg\max_{\lambda\in[0,B]}\sum_tu_t(\lambda)$ and a grid point
$g^*$ with $|g^*-\lambda^*|\le h$.  If $u_t$ has no discontinuity within
$[\lambda^*-h,\lambda^*+h]$, piecewise constancy gives
$u_t(g^*)=u_t(\lambda^*)$.  Since utilities lie in $[0,1]$,
\begin{equation}
 \sum_{t=1}^T\bigl(u_t(\lambda^*)-u_t(g^*)\bigr)
 \le\max_\rho D(T,h,\rho).
 \label{eq:ssl-grid-approximation}
\end{equation}
The maximum exists because the sum has finitely many constant pieces and the
tie values are fixed.

Take expectations in \cref{eq:ssl-grid-approximation}, substitute
$h=T^{-1/2}$ into \cref{eq:ssl-dispersion}, and add
\cref{eq:ssl-hedge-bound}.  Since
$|\mathcal G_h|\le B\sqrt T+2$, this gives exactly
\cref{eq:ssl-regret}.  A direct implementation evaluates the harmonic
extension at every grid point using one linear solve, and hence uses
$O((B\sqrt T+2)n^3)$ arithmetic operations per round without exploiting any
additional matrix structure.

\end{document}